\def\eqref#1{(\ref{#1})}
\def\1{\bm{1}}
\def\rd{{\textnormal{d}}}
\def\vv{{\bm{v}}}
\def\mI{{\bm{I}}}
\def\mX{{\bm{X}}}
\DeclareMathAlphabet{\mathsfit}{\encodingdefault}{\sfdefault}{m}{sl}
\SetMathAlphabet{\mathsfit}{bold}{\encodingdefault}{\sfdefault}{bx}{n}
\def\sR{{\mathbb{R}}}
\newcommand{\E}{\mathbb{E}}
\newcommand{\KL}{D_{\mathrm{KL}}}
\DeclareMathOperator*{\argmin}{arg\,min}
\DeclareMathOperator{\sign}{sign}
\DeclareMathOperator{\Tr}{Tr}
\definecolor{Purple200}{HTML}{E040FB}
\definecolor{Purple400}{HTML}{D500F9}
\definecolor{DeepPurpleA400}{HTML}{651FFF}
\definecolor{Indigo400}{HTML}{3D5AFE}
\definecolor{Green400}{HTML}{00E676}
\definecolor{Green700}{HTML}{00C853}
\definecolor{Amber800}{HTML}{FF8F00}
\definecolor{Orange800}{HTML}{EF6C00}
\definecolor{DeepOrange800}{HTML}{D84315}
\definecolor{DeepOrangeA400}{HTML}{FF3D00}
\definecolor{RedA400}{HTML}{FF1744}
\newcommand*\target{ \rho_\text{target} }%
\newcommand*\Zhat{ \widehat{Z} }%
\newcommand*\Yhat{ \widehat{Y} }%
\newcommand*\Psihat{ \widehat{\Psi} }%
\newcommand*\Xbar{ \bar{X} }%
\newcommand*\Lipf{ \calL_\text{IPF} }%
\newcommand*\Ltd{ \calL_\text{TD} }%
\newcommand*\Lfk{ \calL_\text{FK} }%
\newcommand*\TD{ {\mathrm{TD}} }%
\newcommand*\TDhat{ \widehat{\mathrm{TD}} }%
\newcommand{\norm}[1]{\lVert#1\rVert}
\def\dt{{ \mathrm{d} t }}
\def\ds{{ \mathrm{d} s }}
\def\calB{{\cal B}}
\def\calF{{\cal F}}
\def\calL{{\cal L}}
\def\calN{{\cal N}}
\def\calO{{\cal O}}
\def\calP{{\cal P}}
\def\calW{{\cal W}}
\newcommand{\fracpartial}[2]{\frac{\partial #1}{\partial  #2}}
\newcommand{\br}[1]{\left[#1\right]}
\newcommand{\pr}[1]{\left(#1\right)}
\newcommand{\T}{\top}
\newcommand*\bvec[1]{\begin{bmatrix}#1\end{bmatrix}}
\newcommand{\eg}{{\ignorespaces\emph{e.g.,}}{ }}
\newcommand{\ie}{{\ignorespaces\emph{i.e.,}}{ }}
\newtheorem{theorem}{Theorem}
\newtheorem{lemma}[theorem]{Lemma}
\newtheorem{proposition}[theorem]{Proposition}
\newcommand\numberthis{\addtocounter{equation}{1}\tag{\theequation}}
\colorlet{color1}{green!50!black}
\colorlet{color2}{orange!95!black}
\colorlet{color3}{red!80!black}
\colorlet{color4}{red!65!black}
\colorlet{color5}{blue!75!green}
\colorlet{blueee}{blue!50!black}
\definecolor{label1}{HTML}{99292A}
\definecolor{label2}{HTML}{D89A3C}
\definecolor{label3}{HTML}{417481}
\colorlet{label22}{label2!80!black}
\definecolor{amaranth}{rgb}{0.9, 0.17, 0.31}
\newcommand{\markgreen}[1]{{\color{color1} #1}}
\newcommand{\markblue}[1]{{\ignorespaces\color{color5} #1}}
\newcommand{\markreddd}[1]{\ignorespaces{\color{color4} #1}}
\newcommand{\markaa}[1]{\ignorespaces{\color{label1} #1}}
\newcommand{\markbb}[1]{\ignorespaces{\color{label22} #1}}
\newcommand{\markcc}[1]{\ignorespaces{\color{label3} #1}}
\newcommand{\cmark}{{\ding{51}}}%
\newcommand{\xmark}{{\ding{55}}}%
\let\oldsqrt\sqrt
\def\sqrt{\mathpalette\DHLhksqrt}
\def\DHLhksqrt#1#2{\setbox0=\hbox{$#1\oldsqrt{#2\,}$}\dimen0=\ht0
\advance\dimen0-0.2\ht0
\setbox2=\hbox{\vrule height\ht0 depth -\dimen0}%
{\box0\lower0.4pt\box2}}
\newcommand{\specialcell}[2][c]{%
  \begin{tabular}[#1]{@{}c@{}}#2\end{tabular}}
\newcommand*\widefbox[1]{\fbox{\hspace{1em}#1\hspace{1em}}}
\newtheorem*{rep@theorem}{\rep@title}
\newcommand{\newreptheorem}[2]{%
\newenvironment{rep#1}[1]{%
 \def\rep@title{#2 \ref{##1}}%
 \begin{rep@theorem}}%
 {\end{rep@theorem}}}
\newcommand*\SB{Schr{\"o}dinger Bridge }%
\newcommand*\DeepGSB{Deep Generalized Schr{\"o}dinger Bridge }%
\title{\DeepGSB}
\author{%
  Guan-Horng Liu$^1$, Tianrong Chen$^{1}$\thanks{
    These authors contributed equally. Work was done while Oswin was at Georgia Tech.}
    , Oswin So$^{2*}$, Evangelos A. Theodorou$^1$\\
  $^1$Georgia Institute of Technology, USA\\
  $^2$Massachusetts Institute of Technology, USA\\
  \texttt{\{ghliu, tianrong.chen, evangelos.theodorou\}@gatech.edu}\\
  \texttt{oswinso@mit.edu}
}
\begin{document}

\maketitle

\begin{abstract}

Mean-Field Game (MFG) serves as a crucial mathematical framework
in modeling the collective behavior of individual agents interacting stochastically with a large population.
In this work,
we aim at solving a challenging class of MFGs %
in which the differentiability of these interacting preferences may \textit{not} be available to the solver,
and the population is urged to \textit{converge exactly} to some desired distribution.
These setups are, despite being well-motivated for practical purposes, complicated enough to paralyze most (deep) numerical solvers.
Nevertheless, we show that \SB
--- as an entropy-regularized optimal transport model ---
can be generalized to accepting mean-field structures, hence solving these MFGs.
This is achieved
via the application of Forward-Backward Stochastic Differential Equations theory,
which, intriguingly, leads to a computational framework
with a similar structure to Temporal Difference learning.
As such, it opens up novel algorithmic connections to Deep Reinforcement Learning
that we leverage to facilitate practical training.
We show that our proposed objective function
provides necessary and sufficient conditions to the mean-field problem.
Our method, named \DeepGSB (\textbf{DeepGSB}),
not only outperforms prior methods in solving classical population navigation MFGs,
but is also capable of solving
{1000}-dimensional \textit{opinion depolarization}, setting a new state-of-the-art numerical solver for
high-dimensional MFGs.
Our code will be made available at \url{https://github.com/ghliu/DeepGSB}.

\end{abstract}

\setcounter{footnote}{0}

\section{Introduction} \label{sec:1}

\begin{wrapfigure}[14]{r}{0.27\textwidth}
    \vspace{-20pt}
    \begin{center}
      \includegraphics[width=0.26\textwidth]{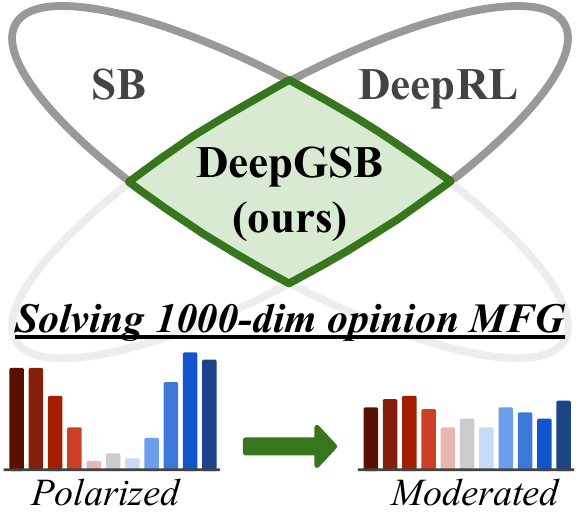}
    \end{center}
    \vskip -0.05in
    \caption{
        \textbf{DeepGSB} paves a new algorithmic connection between \SB (SB) and model-based DeepRL
        for solving high-dimensional MFGs.
    }
    \label{fig:1}
\end{wrapfigure}
On a scorching morning, you {navigated through the crowds} toward the office.
As you walked through a crosswalk, you were pondering {the growing public opinion} on a new policy over the past week,
and were suddenly interrupted by the honking as the traffic started moving...

From navigation in crowds to propagation of opinions {and traffic movement},
examples of \textit{individual agents interacting with a large population}
are widespread in daily life
and, due to their prevalence, appear as an important subject
in multidisciplinary scientific areas, including %
economics \citep{achdou2014partial,achdou2022income},
opinion modeling \citep{schweighofer2020agent,gaitonde2021polarization,hkazla2019geometric},
robotics \citep{liu2018mean,elamvazhuthi2019mean},
and more recently machine learning \citep{lu2020mean,hu2019mean,han2018mean}.

Mathematically, the decision-making processes under these scenarios
can be characterized by the \textbf{Mean-Field Game} \citep{lasry2007mean,gueant2011mean,bensoussan2013mean} \textbf{(MFG)},
which models a noncooperative differential game on a finite horizon %
between a continuum population of rational agents.
Let $u(x,t)$ be
the \textit{value} function, also known as optimal cost-to-goal, that governs agents' policies at each state $x\in \sR^d$ and time $t\in[0,T]$,
and denote the resulting population density by $\rho(\cdot,t) \in \calP(\sR^d)$,
where $\calP(\sR^d)$ is the set of probability measures on $\sR^d$.
At the Nash equilibrium where no agent has the incentive to change his/her decision,
MFG, at its
most general form, solves the following partial differential equations (PDEs):
\begin{figure}[t]
  \vskip -0.1in
  \centering
  \captionsetup{type=table}
  \caption{
    Comparison to existing methods w.r.t. various desired features in Mean-Field Games (MFGs).
    Our \textbf{DeepGSB} is capable of solving a much wider class of MFGs in higher dimensional state spaces.
  }
  \vskip -0.05in
  \centering
  \begin{tabular}{lccccc}
    \toprule
    & \specialcell[c]{continuous \\ state space}
    & \specialcell[c]{stochastic \\ MF dyn. \eqref{eq:mfg-sde}}
    & \specialcell[c]{converges to \\ exact $\target$}
    & \specialcell[c]{discontinuous \\ MF interaction $F$}
    & \specialcell[c]{highest \\ dimension }
    \\
    \midrule
    \citet{ruthotto2020machine} & \cmark & \xmark & \xmark & \xmark & 100 \\
    \citet{lin2021alternating} & \cmark & \xmark & \xmark & \xmark & 100 \\ %
    \citet{chen2021density} & \xmark & \cmark & \cmark & \xmark\protect\footnotemark & 2 \\
    \midrule
    \textbf{DeepGSB (ours)} & \cmark & \cmark & \cmark & \cmark & {\textbf{1000}} \\
    \bottomrule
  \end{tabular} \label{table:1}
  \vskip -0.12in
\end{figure}
\footnotetext{
  Precisely, \citet{chen2021density} considered discontinuous yet {non-MF} interaction, $F{:=}F(x)$, on a \textit{discrete} state space.
}
\begin{equation}
    \begin{split}
        \left\{
        \begin{array}{lr@{=}l}
        - \fracpartial{u(x,t)}{t} + H(x, \nabla u, \rho) - \frac{1}{2}\sigma^2\Delta u = F(x, \rho),
        &\quad u(x,T)~&~G(x, \rho(\cdot,T)) \\[3pt]
        \fracpartial{\rho(x,t)}{t} - \nabla \cdot (\rho~\nabla_p H(x,\nabla u, \rho) - \frac{1}{2}\sigma^2\Delta \rho = 0,
        &\quad \rho(x,0)~&~\rho_0(x)
        \end{array}
        \right.,
    \label{eq:mfg-pde}
    \end{split}
\end{equation}
where $\nabla$, $\nabla \cdot$, and $\Delta$ are respectively the gradient, divergence, and Laplacian operators.\footnote{
    These operators are taken w.r.t. $x$ unless otherwise noted. See Appendix~\ref{sec:a1} for the notational summary.
}
These two PDEs are respectively known as the Hamilton-Jacobi-Bellman (HJB) and Fokker-Plank (FP) equations,
which characterize the evolution of $u(x,t)$ and $\rho(x,t)$. %
They are coupled with each other through
the Hamiltonian $H(x,p, \rho): \sR^d \times \sR^d \times \calP(\sR^d) \rightarrow \sR$, which describes the dynamics of the game, and
the mean-field interaction $F(x,\rho): \sR^d \times \calP(\sR^d) \rightarrow \sR$, which quantifies the agent's preference when interacting with the population.
The terminal condition $G$ typically penalizes deviations from some desired target distribution $\target$, \eg $G \approx \KL(\rho(\cdot,T)||\target(\cdot))$.
Given a solution $(u,\rho)$ to \eqref{eq:mfg-pde}, each agent acts accordingly and follows a stochastic differential equation(SDE)
\begin{equation}
    \rd X_t = - \nabla_p H(X_t,\nabla u(X_t,t), \rho(\cdot, t)) \dt + \sigma \rd W_t,
    \quad X_0 \sim \rho_0, %
    \label{eq:mfg-sde}
\end{equation}
where $W_t \in \sR^d$ is the Wiener process and $\sigma\in\sR$ is some diffusion scalar.
At the mean-field limit, \ie when the number of agents goes to infinity,
the collective behavior of \eqref{eq:mfg-sde} yields the density $\rho(\cdot,t)$.

Numerical methods for solving \eqref{eq:mfg-pde} have advanced rapidly with the aid of machine learning.
Seminar works such as \citet{ruthotto2020machine} and \citet{lin2021alternating} approximated
$(u,\rho)$ with deep neural networks (DNNs) and directly penalized the violation of PDEs.
Despite showing preliminary successes,
the underlying dynamics \eqref{eq:mfg-sde} were either degenerate (\eg $\sigma:=0$) \citep{ruthotto2020machine},
or completely discarded by instead regressing network outputs on the entire state space \citep{lin2021alternating},
which can scale unfavorably as the dimension $d$ grows. %
An alternative that avoids both limitations,
\ie it keeps the full stochastic dynamics in \eqref{eq:mfg-sde} while being computationally scalable,
is to recast these PDEs to a set of forward-backward SDEs (FBSDEs) by applying the
nonlinear Feynman-Kac Lemma \citep{han2018solving,exarchos2018stochastic,pereira2019neural}.
The FBSDEs analysis appears extensively in the theoretical study of MFG
\citep{carmona2013control,carmona2013mean,carmona2019convergence,carmona2021convergence},
yet development of scalable FBSDEs-based solver has remained, surprisingly, limited.
Our work contributes to this direction.

Since $\target$ is known in prior, in many cases
there are direct interests
to seek an optimal policy that guides the agents from an initial distribution $\rho_0$ to the \textit{exact} $\target$, while respecting the structure of MFG,
particularly the MF interaction $F(x,\rho)$.
Lifting \eqref{eq:mfg-pde} to this setup, however, is highly nontrivial.
Indeed, replacing the \textit{soft} penalty at $u(x,T) = \KL(\rho || \target)$ with a \textit{hard} distributional constraint at $\rho(x,T) = \target$ yields an HJB whose boundary condition can only be defined implicitly through FP, which now contains two distributional constraints and resembles an optimal transport problem.
As such, despite being well-motivated, {most prior methods have struggled to extend to this setup}.

In this work, we show that \textbf{\SB}(\textbf{SB}),
as an entropy-regularized optimal transport problem \citep{de2021diffusion,vargas2021solving,wang2021deep,chen2021likelihood,bunne2022recovering},
provides an elegant recipe for solving this challenging class of \textit{MFGs with distributional boundary constraints ($\rho_0, \target$)}.
Although SB is traditionally set up with $F := 0$ \citep{schrodinger1932theorie,caluya2021wasserstein,backhoff2020mean},
we show that SB-FBSDE \citep{chen2021likelihood}, an FBSDE-based method for solving SB,
can be generalized to accept nontrivial $F$; hence solving MFG.
Interestingly,
the new FBSDEs system
admits a similar computational structure to temporal difference (TD) learning,
leading to a framework that narrows the gap between SB and Deep Reinforcement Learning (DeepRL); see Fig.~\ref{fig:1}.
This connection enables our method to take advantage of DeepRL techniques,
such as target networks, replay buffer,
actor-critic, \textit{etc},
and, more importantly,
to handle a wide class of MF interactions
that need \textit{not} be continuous \textit{nor} differentiable.
This is in contrast to most existing works,
which require differentiable \citep{ruthotto2020machine,lin2021alternating} or quadratic \citep{chen2015optimal} structure on $F$, or discretize the state space \citep{chen2021density}.
We validate our method, called \textbf{Deep Generalized \SB (DeepGSB)}, on various challenging MFGs
from crowd navigation to \textit{high-dimensional opinion depolarization} (where $d=${1000}),
setting a state-of-the-art record in the area of numerical MFG solvers.

In summary, we present the following contributions.
\begin{itemize}[leftmargin=13pt]
    \item
    We present a novel numerical method, rooted in Schr{\"o}dinger Bridge (SB), for solving a challenging class of
    Mean-Field Game
    where the population needs to converge \textit{exactly} to the target distribution.

    \item
    The resulting method, \textbf{DeepGSB}, generalizes prior SB results to accepting
    flexible mean-field interaction (\eg non-differentiable)
    and enjoys modern training techniques from DeepRL.

    \item
    \textbf{DeepGSB} achieves promising empirical results in navigating crowd motion and
    depolarizing 1000-dimensional opinion dynamics,
    setting a new state-of-the-art numerical MFG solver.

\end{itemize}

\section{Preliminary on \SB (SB)} \label{sec:2}

\begin{wrapfigure}[12]{r}{0.21\textwidth}
    \vspace{-40pt}
    \begin{center}
      \includegraphics[height=0.18\textwidth, width=0.2\textwidth]{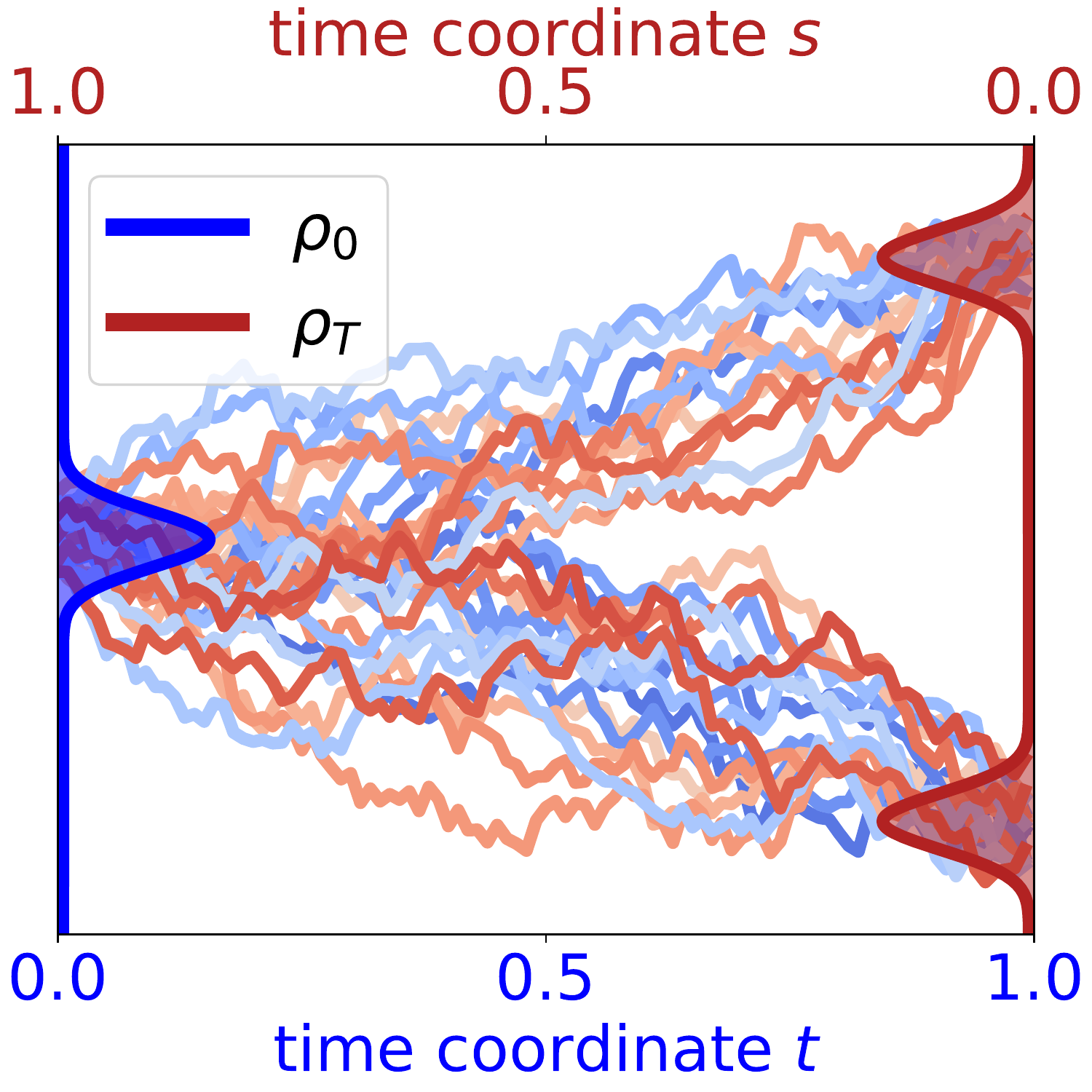}
    \end{center}
    \vskip -0.1in
    \caption{
        Simulation of the \markblue{forward} \eqref{eq:sde} and \markreddd{backward} \eqref{eq:rsde} SDEs in SB, which are minimum-energy solution when $(\Psi,\Psihat)$ obey the PDEs in \eqref{eq:sb-pde}.
    }
    \label{fig:2}
\end{wrapfigure}
The SB problem was originally introduced in the 1930s for quantum mechanics \citep{schrodinger1931umkehrung,schrodinger1932theorie} and later draws broader interests with its connection to optimal transport and control \citep{leonard2012schrodinger,leonard2013survey,pavon1991free,dai1991stochastic}.
Given a pair of boundary distributions $(\rho_0, \rho_T)$, SB seeks
an optimal pair of stochastic processes of the forms: %
\begin{subequations}
    \label{eq:sb-sde}
    \begin{align}
        \rd X_t &= [f(X_t, t) + \sigma^2~\nabla \log {\Psi}(X_t, t) ] \dt + \sigma~\rd W_t, &&X_0 \sim \rho_0, \label{eq:sde}
        \\
        \rd \Xbar_s &= [-f(\Xbar_s, s) + \sigma^2~\nabla \log \Psihat(\Xbar_s, s) ] \ds + \sigma~\rd W_s,  &&\Xbar_0 \sim \rho_T. \label{eq:rsde}
    \end{align}
\end{subequations}
While $X_t$ is a standard stochastic process starting from $\rho_0$,
$\Xbar_s$ evolves along the ``\textit{reversed}'' time coordinate $s:=T-t$ from $\rho_T$.
The base drift $f$ and diffusion $\sigma$ are typically known in prior and related to the Hamiltonian $H$.
Suppose $\Psi, \Psihat \in C^{2,1}(\sR^d,[0,T])$
solve the following coupled PDEs,
\begin{align}
    \begin{cases}
    \fracpartial{\Psi(x,t)}{t}    = - \nabla \Psi^\T f - \frac{1}{2} \sigma^2 \Delta \Psi \\[3pt]
    \fracpartial{\Psihat(x,t)}{t} = - \nabla \cdot (\Psihat f) + \frac{1}{2} \sigma^2 \Delta \Psihat
    \end{cases}
    \text{s.t. }
    \begin{array}{{r@{=}l}}
        \Psi(\cdot,0) \Psihat(\cdot,0)~&~\rho_0 \\[3pt]
        \Psi(\cdot,T) \Psihat(\cdot,T)~&~\rho_T
    \end{array},  \label{eq:sb-pde}
\end{align}
then the theory of SB suggests that
the SDEs in \eqref{eq:sb-sde} are optimal solution to an entropy-regularized (\ie minimum control)
optimization problem.
Furthermore, the path-wise measure induced by \eqref{eq:sde} along $t\in[0,T]$ is equal almost surely to the path-wise measure induced by \eqref{eq:rsde} along $s:=T-t$. In other words, the two SDEs in \eqref{eq:sb-sde} can be thought of as the ``\textit{reversed}'' process to each other; and hence we also have $X_T \sim \rho_T$ and $\Xbar_T \sim \rho_0$ (see Fig.~\ref{fig:2}).

Due to the coupling constraints at the boundaries, solving \eqref{eq:sb-pde} is no easier than solving \eqref{eq:mfg-pde}.
Fortunately, recent advances \citep{chen2021likelihood,bunne2022recovering} have demonstrated a computationally scalable numerical method via the application of the nonlinear Feynman-Kac (FK) Lemma
--- a mathematical tool that recasts certain classes of PDEs into sets of
forward-backward SDEs (FBSDEs) via some transformation.
These \textit{nonlinear FK transformations} are parametrized in SB-FBSDE \citep{chen2021likelihood}
by some DNNs with $\theta$ and $\phi$, \ie
\begin{align}
    Z_\theta(\cdot, \cdot) \approx \sigma~\nabla \log \Psi(\cdot, \cdot)
    \quad \text{ and } \quad
    \Zhat_\phi(\cdot, \cdot) \approx \sigma~\nabla \log \Psihat(\cdot, \cdot),
    \label{eq:nkc-z}
\end{align}
and the FBSDEs resulting from \eqref{eq:sb-pde} and \eqref{eq:nkc-z} yield the following objectives (see Appendix~\ref{sec:a2}):
\begin{subequations}
    \label{eq:L-ipf}
    \begin{align}
        \Lipf(\theta) &= \int_0^T \E\br{
            \frac{1}{2} \norm{Z_\theta(\Xbar_s, s)}_2^2 +  Z_\theta(\Xbar_s, s)^\T \Zhat_\phi(\Xbar_s, s) + \nabla\cdot (\sigma Z_\theta(\Xbar_s, s) {+} f)
        } \ds, \label{eq:L-ipf1} \\
        \Lipf(\phi) &= \int_0^T \E\br{
            \frac{1}{2} \norm{\Zhat_\phi(X_t, t)}_2^2 + \Zhat_\phi(X_t, t)^\T Z_\theta(X_t,t) + \nabla\cdot(\sigma\Zhat_\phi(X_t, t) {-} f)
        } \dt. \label{eq:L-ipf2}
    \end{align}
\end{subequations}
The following lemma, as a direct consequence of \citet{vargas2021machine},
suggests that these objectives can be interpreted as the KL divergences between the parametrized path measures.
\begin{lemma} \label{lemma:kl-sbfbsde}
    Let $q^\theta$ and $q^\phi$ be the path-wise densities of the parametrized forward and backward SDEs
        \begin{align*}
        \rd X^\theta_t = \pr{f(X^\theta_t,t) + \sigma Z_\theta(X^\theta_t,t) } \dt + \sigma \rd W_t, \quad
        \rd \Xbar^\phi_s = \pr{-f(\Xbar^\phi_s,t) + \sigma \Zhat_\phi(\Xbar^\phi_s,t) } \ds + \sigma \rd W_s.
        \end{align*}
    Then, we have
    \begin{align*}
        \KL(q^\theta || q^\phi) \propto \Lipf(\phi), \quad \text{ and } \quad
        \KL(q^\phi || q^\theta) \propto \Lipf(\theta).
    \end{align*}
\end{lemma}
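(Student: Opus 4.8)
The plan is to recognize the claim as a consequence of Girsanov's theorem together with the cross-entropy reformulation of the iterative proportional fitting (IPF) half-steps, in the spirit of \citet{vargas2021machine}. Since the two diffusions in the lemma share the \emph{same} diffusion coefficient $\sigma$ and differ only in their drifts, their path measures are mutually absolutely continuous and the Radon--Nikodym derivative $\mathrm{d}q^\theta/\mathrm{d}q^\phi$ is given by Girsanov's formula. Writing $\KL(q^\theta \| q^\phi) = \E_{q^\theta}[\log \mathrm{d}q^\theta] - \E_{q^\theta}[\log \mathrm{d}q^\phi]$, the first term is the (differential) entropy of the \emph{fixed} forward measure and is independent of the optimization variable $\phi$; this is precisely what the symbol $\propto$ absorbs, so it suffices to identify the cross-entropy $-\E_{q^\theta}[\log \mathrm{d}q^\phi]$ with $\Lipf(\phi)$ up to such parameter-independent constants.

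Concretely, I would evaluate the log-density of the backward path measure $q^\phi$ along trajectories sampled from the forward process $q^\theta$. Because $q^\phi$ is generated by the backward SDE running in the reversed coordinate $s = T-t$, expressing its density in the forward coordinate via It\^o's lemma produces three contributions: a quadratic term $\tfrac12 \norm{\Zhat_\phi}_2^2$ coming from the Girsanov exponent; a transport/cross term which, after substituting the forward drift $f + \sigma Z_\theta$ for the increments $\rd X_t$ and discarding the mean-zero martingale part, collapses (up to terms combined below) to the inner product $\Zhat_\phi^\T Z_\theta$; and a second-order It\^o correction of divergence type. All three are evaluated at $(X_t,t)$ with $X_t \sim q^\theta$ and integrated over $[0,T]$, matching the domain of integration in \eqref{eq:L-ipf2}.

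The technical heart is the treatment of the second-order correction. Taking the expectation $\E_{q^\theta}$ of this term and integrating by parts against the time-$t$ marginal $p^\theta_t$ of the \emph{sampling} measure converts the Laplacian/score contribution into the divergence $\nabla \cdot (\sigma \Zhat_\phi - f)$ appearing in \eqref{eq:L-ipf2}; here I would use the elementary identity $\E_{p}[(\nabla \log p)^\T v] = -\E_{p}[\nabla \cdot v]$, valid whenever $p\,v$ decays at infinity. Crucially, the density whose score appears and the density under which the expectation is taken coincide ($p^\theta_t$), so the integration by parts is legitimate. Collecting the three contributions and discarding the remaining terms depending on $\theta$ alone (hence constant in $\phi$) yields $\KL(q^\theta \| q^\phi) \propto \Lipf(\phi)$. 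The reversed identity $\KL(q^\phi \| q^\theta) \propto \Lipf(\theta)$ follows from the symmetric computation with the roles of $(\theta,\phi)$ and of the forward/backward drifts exchanged, the only bookkeeping difference being the sign of the base drift $f$, which accounts for $\nabla \cdot (\sigma Z_\theta + f)$ in \eqref{eq:L-ipf1} versus $\nabla \cdot (\sigma \Zhat_\phi - f)$ in \eqref{eq:L-ipf2}.

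The main obstacle I anticipate is analytic rather than algebraic: making the Girsanov change of measure and the reversed-time density rigorous requires integrability (e.g.\ a Novikov-type condition on $Z_\theta, \Zhat_\phi$) and enough regularity and decay of the marginal densities for the boundary terms in the integration by parts to vanish. Rather than re-derive these conditions, I would invoke \citet{vargas2021machine}, of which this lemma is stated to be a direct consequence, to supply the measure-theoretic justification, and restrict the argument to the algebraic identification of the resulting integrands with \eqref{eq:L-ipf1}--\eqref{eq:L-ipf2}.
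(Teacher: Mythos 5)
Your proposal is correct, and it coincides almost exactly with the integration-by-parts argument that the paper itself records as an ``alternative simpler proof'' in the remark immediately following its proof of Lemma~\ref{lemma:kl-sbfbsde} (Appendix~\ref{sec:a3.lemma}). The paper's primary proof takes a slightly different route: it substitutes $\Lipf(\phi)$ into Proposition~\ref{prop:a} (quoted from \citet{vargas2021machine}), whose derivation time-reverses the \emph{backward} SDE via \citet{anderson1982reverse}, so the Girsanov quadratic contains the mismatched score $\nabla\log q_t^\phi$ under the expectation $\E_{q_t^\theta}$; the cross term and squared-score term are then eliminated not by integration by parts but by the It\^o expansion of $\rd \log q_t^\phi$ along forward trajectories (Lemma~\ref{lemma:a}), which invokes the Fokker--Planck equation of the backward process and produces the explicit boundary terms $\E_{q_0^\theta}\br{\log\rho_0} - \E_{q_T^\theta}\br{\log\target}$ without any tail-decay hypothesis. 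Your route instead reverses the \emph{forward} process, so that the score appearing in the Girsanov exponent is $\nabla\log q^\theta$ and is matched with the sampling measure $q^\theta$ --- exactly the point you flag as crucial --- buying the one-line identity $\E_{p}\br{(\nabla\log p)^\T v} = -\E_{p}\br{\nabla\cdot v}$ at the price of the extra regularity and decay assumptions you correctly anticipate (the paper assumes Gaussian-type tails, $q^\theta(x,t) = \calO(e^{-\norm{x}_k^2})$, precisely to kill the boundary terms in this step). One small imprecision in your write-up: the integration by parts yields only $\sigma\,\nabla\cdot\Zhat_\phi$, not the full divergence $\nabla\cdot(\sigma\Zhat_\phi - f)$ appearing in \eqref{eq:L-ipf2}; the missing $-\nabla\cdot f$ is independent of $\phi$ and is absorbed by the proportionality sign, so the discrepancy is harmless, but it should be stated rather than attributed to the integration by parts. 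Deferring the measure-theoretic justification (Novikov-type integrability, absolute continuity of the path measures) to \citet{vargas2021machine} is consistent with what the paper does.
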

\begin{proof}
    See Appendix~\ref{sec:a3.lemma}.
\end{proof}
Lemma \ref{lemma:kl-sbfbsde} suggests that alternative minimization between $\Lipf(\phi)$ and $\Lipf(\theta)$ is equivalent to performing iterative KL projection \citep{benamou2015iterative}, and is hence equivalent to applying the Iterative Proportional Fitting \citep{kullback1968probability} (IPF) algorithm to solve parametrized SBs \citep{de2021diffusion,vargas2021solving}.

\section{Deep Generalized \SB (DeepGSB)} \label{sec:3}

\subsection{Connection between the coupled PDEs in MFG and SB} \label{sec:3.1}

\begin{wrapfigure}[7]{r}{0.47\textwidth}
    \vspace{-22pt}
    \begin{center}
        \includegraphics[height=2.2cm]{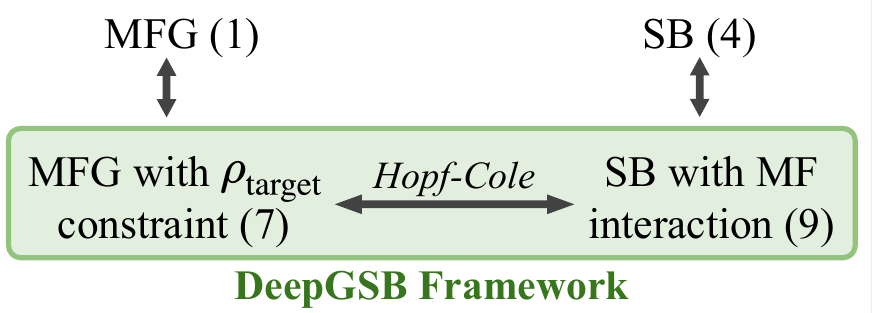}
    \end{center}
    \vskip -0.1in
    \caption{
        Connection between different coupled PDEs appearing in MFG, SB, and DeepGSB.
    }
    \label{fig:3}
\end{wrapfigure}
We begin by first stating our problem of interest
--- MFG with hard distributional constraints $(\rho_0, \target)$ --- in its mathematical form.
Similar to prior works \citep{ruthotto2020machine,lin2021alternating}, we will adopt the control-affine Hamiltonian, %
$H(x,\nabla u,\rho) := \frac{1}{2}\norm{\sigma\nabla u}^2 - \nabla u^\T f(x,\rho)$,
given some base drift $f$ and diffusion scalar $\sigma$.
Substituting this control-affine Hamiltonian into the PDEs in \eqref{eq:mfg-pde} yields
\begin{equation}
    \begin{split}
    \begin{cases}
    - \fracpartial{u(x,t)}{t} + \frac{1}{2}\norm{\sigma\nabla u}^2 - \nabla u^\T f - \frac{1}{2}\sigma^2\Delta u = F(x, \rho), \\[3pt]
    \fracpartial{\rho(x,t)}{t} - \nabla \cdot (\rho~(\sigma^2\nabla u-f)) - \frac{1}{2}\sigma^2\Delta \rho = 0,
    \quad \rho(x,0) = \rho_0(x),\text{ } \rho(x,T) = \target(x),
    \end{cases}
    \label{eq:mfg2-pde}
    \end{split}
\end{equation}
which, as we briefly discussed in Sec.\ref{sec:1}, differ from \eqref{eq:mfg-pde}
in that the boundary condition of the HJB, $u(x,T)$, is now absorbed into FP and defined implicitly through
$\rho(x,T) = \target(x)$.
Since analytic conversion between the boundary conditions of \eqref{eq:mfg-pde} and \eqref{eq:mfg2-pde} exists only for highly degenerate\footnote{
    \citet{zhang2021path} suggested $F:=0$, $f:= f(x)$ and $\rho_0$ a degenerate Dirac delta distribution.
} cases \citep{zhang2021path}, this seemingly innocuous change suffices to paralyze most prior methods.\footnote{
    For completeness, we note that when the base drift is independent of the density, $f:= f(x)$, and mean-field preference,
    $\calF(\rho): F(x,\rho) = \frac{\delta \calF}{\delta \rho}$, is convex in $\rho$, the variational optimization inherited in \eqref{eq:mfg2-pde} remains convex.
    In these cases, the discretized problems converge to the global solution \citep{chen2015optimal,chen2021density}.
    However, for generic mean-field dynamics, such as the polarized $f(x,\rho)$ in our \eqref{eq:opinion-f}, the problem is in general non-convex; hence only local convergence can be established (see \eg Remark 1 in \citep{chen2021density}).
}
Nevertheless,
as \eqref{eq:mfg2-pde} now describes a transformation between two distributions (from FP) while obeying some optimality (from HJB),
it suggests a deeper connection to optimal transport, and hence the SB.

To bridge these new MFG PDEs \eqref{eq:mfg2-pde} to the PDEs appearing in SB \eqref{eq:sb-pde},
we follow standard treatment \citep{caluya2021wasserstein}
and apply the Hopf-Cole transform \citep{hopf1950partial,cole1951quasi}: %
    \begin{align}
    \Psi(x,t) := \exp\pr{-u(x,t)}, \quad
     \Psihat(x,t) := \rho(x,t)\exp\pr{u(x,t)},
    \label{eq:hf}
    \end{align}
which, after some algebra (see Appendix~\ref{sec:a4.1} for details), yields the following PDEs:
\begin{align}
    \left\{
    \begin{array}{l}
    \fracpartial{\Psi(x,t)}{t}
        = - \nabla \Psi^\T f - \frac{1}{2} \sigma^2 \Delta \Psi~
            \markblue{ +~F \Psi} \\[3pt]
    \fracpartial{\Psihat(x,t)}{t}
        = - \nabla \cdot (\Psihat f) + \frac{1}{2} \sigma^2 \Delta \Psihat~
            \markblue{ -~F \Psihat}
    \end{array}
    \right.\text{s.t.}
    \begin{array}{l}
        \Psi(\cdot,0) \Psihat(\cdot,0) = \rho_0 \\[3pt]
        \Psi(\cdot,T) \Psihat(\cdot,T) = \target
    \end{array}.  \label{eq:sb2-pde}
\end{align}
It can be seen that \eqref{eq:sb2-pde} generalizes \eqref{eq:sb-pde} by introducing \markblue{the MF interaction $F$}.
Let $(\Psi, \Psihat)$ be the solution to these new MF-extended PDEs in \eqref{eq:sb2-pde},
and recall the Hamiltonian adopted in \eqref{eq:mfg2-pde},
one can find that
\begin{align*}
    - \nabla_p H(X_t,\nabla u, \rho)
    = f - \sigma^2 \nabla u
    = f + \sigma^2 \nabla \log \Psi.
\end{align*}
That is,
the agent's dynamic \eqref{eq:mfg-sde} coincides with the forward SDE \eqref{eq:sde} in SB.
Hence, we have connected the MFG \eqref{eq:mfg-pde} and SB \eqref{eq:sb-pde} frameworks through the PDEs in \eqref{eq:mfg2-pde} and \eqref{eq:sb2-pde};
see Fig.~\ref{fig:3}.

\subsection{Generalized SB-FBSDEs with mean-field interaction} \label{sec:3.2}

With \eqref{eq:sb2-pde}, we are ready to present our result that {generalizes} prior FBSDE for SB to MF interaction.

\begin{theorem}[Generalized SB-FBSDEs] \label{thm:mfg-fbsde}
    Suppose $\Psi, \Psihat \in C^{2,1}$ and let $f,F$ satisfy {usual growth and Lipchitz conditions} {\normalfont\citep{yong1999stochastic,kobylanski2000backward}}.
    Consider the following nonlinear FK transformations applied to \eqref{eq:sb2-pde}:
    \begin{equation}
    \begin{alignedat}{2}
      Y_t \equiv Y(X_t, t) &= \log \Psi(X_t, t), \qquad
      Z_t \equiv Z(X_t, t) &&= \sigma~\nabla \log \Psi(X_t, t), \\
      \Yhat_t \equiv \Yhat(X_t, t) &=          \log \Psihat(X_t, t), \qquad
      \Zhat_t \equiv \Zhat(X_t, t) &&= \sigma~\nabla \log \Psihat(X_t, t),
    \end{alignedat} \label{eq:nkc-yz}
    \end{equation}
    where $X_t$ follows \eqref{eq:sde} with $X_0\sim\rho_0$.
    Then, the resulting FBSDEs system %
    takes the form: %
    \begin{subequations}
        \label{eq:fbsdet}
        \begin{empheq}[left={\text{\normalfont
            $\begin{array}{l} \text{FBSDEs} \\ \text{w.r.t. \eqref{eq:sde}} \end{array}$:
        }\empheqlbrace}]{align}
            \rd X_t &= \pr{f_t + \sigma Z_t} \dt + \sigma \rd W_t \label{eq:fsdet} \\
            \rd Y_t &= \pr{\frac{1}{2} \norm{Z_t}^2 + F_t } \dt + Z_t^\T \rd W_t
            \label{eq:bsdet1} \\
            \rd \Yhat_t &= \pr{\frac{1}{2} \norm{\Zhat_t}^2 +
                            \nabla \cdot (\sigma \Zhat_t -f_t) + \Zhat_t^\T Z_t - F_t
                        } \dt + \Zhat_t^\T \rd W_t
            \label{eq:bsdet2}
        \end{empheq}
    \end{subequations}
    Now, consider a similar transformation in \eqref{eq:sb2-pde} but instead w.r.t. the ``reversed'' SDE $\Xbar_s \sim $ \eqref{eq:rsde} and $\Xbar_0\sim\target$,
    \ie $Y_s \equiv Y(\Xbar_s,s) = \log \Psi(\Xbar_s, s)$, and \textit{etc}.
    The resulting FBSDEs system reads %
    \begin{subequations}
        \label{eq:fbsdes}
        \begin{empheq}[left={\text{\normalfont
            $\begin{array}{l} \text{FBSDEs} \\ \text{w.r.t. \eqref{eq:rsde}} \end{array}$:
        }\empheqlbrace}]{align}
            \rd \Xbar_s &= \pr{-f_s + \sigma \Zhat_s} \ds + \sigma \rd W_s \label{eq:fsdes} \\
            \rd Y_s &= \pr{\frac{1}{2} \norm{Z_s}^2 +
                            \nabla \cdot (\sigma Z_s +f_s) + Z_s^\T \Zhat_s {- F_s}
                        } \ds + Z_s^\T \rd W_s \label{eq:bsdes1} \\
            \rd \Yhat_s &= \pr{ \frac{1}{2} \norm{\Zhat_s}^2 {+ F_s} } \ds + \Zhat_s^\T \rd W_s
            \label{eq:bsdes2}
        \end{empheq}
    \end{subequations}

    Since $Y_t + \Yhat_t = \log\rho(X,t)$ by construction, %
    the functions $f_t$ and $F_t$ in \eqref{eq:fbsdet} take the arguments
    \begin{align*}
        f_t := f_t(X_t, \exp(Y_t+\Yhat_t)) \quad \text{ and } \quad F_t := F_t(X_t, \exp(Y_t+\Yhat_t)).
    \end{align*}
    Similarly, we have $f_s:= f_s(\Xbar_s, \exp(Y_s+\Yhat_s))$ and $F_s:= F_s(\Xbar_s, \exp(Y_s+\Yhat_s))$ in \eqref{eq:fbsdes}.
\end{theorem}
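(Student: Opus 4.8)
The plan is to recognize both FBSDE systems as direct consequences of the nonlinear Feynman--Kac correspondence, i.e. applications of It\^o's lemma to the log-transformed potentials along the prescribed diffusions, with the PDEs \eqref{eq:sb2-pde} used to eliminate the explicit time derivatives. Concretely, for the forward system I would apply It\^o's formula to $Y_t = \log\Psi(X_t,t)$ and $\Yhat_t = \log\Psihat(X_t,t)$ with $X_t$ obeying \eqref{eq:fsdet}. The martingale parts are immediate: since $Z_t = \sigma\nabla\log\Psi$ and $\Zhat_t = \sigma\nabla\log\Psihat$, the diffusion terms are exactly $Z_t^\T\rd W_t$ and $\Zhat_t^\T\rd W_t$. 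For the drifts, It\^o contributes $\fracpartial{}{t}\log\Psi + (f_t+\sigma Z_t)^\T\nabla\log\Psi + \tfrac12\sigma^2\Delta\log\Psi$, into which I substitute $\fracpartial{}{t}\Psi$ from \eqref{eq:sb2-pde}.

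The computational crux is the logarithmic identity $\Delta\Psi/\Psi = \Delta\log\Psi + \norm{\nabla\log\Psi}^2$ (and likewise for $\Psihat$), which converts the \emph{linear} Laplacian appearing in the PDE into the \emph{quadratic} gradient term that becomes $\tfrac12\norm{Z_t}^2$; this is the Hopf--Cole linearization read backwards. After this substitution the first-order advection terms $\nabla\log\Psi^\T f_t$ coming from the PDE cancel against the $f_t^\T\nabla\log\Psi$ term from the SDE drift, and the two half-Laplacians either cancel (yielding the clean equation \eqref{eq:bsdet1}) or combine into $\nabla\cdot(\sigma\Zhat_t)$ and merge with the $-\nabla\cdot f_t$ produced by the conservative Fokker--Planck form to give the divergence term $\nabla\cdot(\sigma\Zhat_t - f_t)$ in \eqref{eq:bsdet2}. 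That $Y_t$ comes out clean while $\Yhat_t$ carries the divergence is a direct reflection of $\Psi$ solving the anti-diffusive (backward) equation and $\Psihat$ the conservative (forward) one.

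For the reversed system \eqref{eq:fbsdes} I would exploit the symmetry of the bridge: the reversed diffusion \eqref{eq:rsde} is itself a forward Schr\"odinger-type diffusion with base drift $-f$ and with the roles of $(\Psi,\Psihat)$ interchanged, so the identical It\^o computation applies with $f\mapsto -f$. This cleanly explains the two structural differences from the forward case --- the ``clean'' and ``divergence-carrying'' roles swap (so $\Yhat_s$ becomes clean and $Y_s$ inherits the divergence), and the sign of the base drift inside the divergence flips to give $\nabla\cdot(\sigma Z_s + f_s)$. Finally, the mean-field closure follows because $Y_t+\Yhat_t = \log\Psi + \log\Psihat = \log(\Psi\Psihat) = \log\rho$ by the Hopf--Cole transform \eqref{eq:hf}, so that $\rho(X_t,t) = \exp(Y_t+\Yhat_t)$ and the arguments of $f_t, F_t$ are legitimate functionals of the tracked processes.

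I expect the main obstacle to be the reverse-time bookkeeping rather than any deep idea: one must track the sign flip $\fracpartial{}{t}\mapsto -\fracpartial{}{s}$ together with the conversion between conservative ($\nabla\cdot(\Psihat f)$) and non-conservative ($\nabla\Psihat^\T f$) advection, so that the $\nabla\cdot f$ contribution lands in the correct equation; a naive term-by-term time reversal misplaces it. Justifying that the reversed diffusion really is an admissible forward bridge with drift $-f$ (so that the symmetry argument is valid) and invoking the stated growth and Lipschitz hypotheses on $f,F$ to guarantee that the resulting FBSDE system is well-posed are the remaining points requiring care.
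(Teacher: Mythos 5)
Your proposal is correct and follows essentially the same route as the paper's proof in Appendix A.3.3: It\^o's formula applied to $\log\Psi$ and $\log\Psihat$ along the forward SDE with the PDEs of \eqref{eq:sb2-pde} substituted for the time derivatives, the identity $\Delta\Psi/\Psi = \Delta\log\Psi + \norm{\nabla\log\Psi}^2$ (the paper's Lemma~\ref{lemma:sbp}) converting the Laplacian into the quadratic term, and the Hopf--Cole product giving $Y_t+\Yhat_t=\log\rho$. Your shortcut for the reversed system --- treating \eqref{eq:rsde} as a forward bridge with base drift $-f$ and the roles of $(\Psi,\Psihat)$ swapped --- is exactly what the paper does explicitly by rebasing the HJB/FP equations to the coordinate $s=T-t$, applying the flipped Hopf--Cole transform to obtain \eqref{eq:sb2-pde-s}, and repeating the It\^o computation, and you correctly identify the one genuine pitfall (the placement of the $\nabla\cdot f$ term, which a naive term-by-term time reversal would misassign between \eqref{eq:bsdes1} and \eqref{eq:bsdes2}).
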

\begin{proof}
    See Appendix~\ref{sec:a3.thm}.
\end{proof}
Just like how \eqref{eq:sb2-pde} generalizes \eqref{eq:sb-pde},
our results in Theorem \ref{thm:mfg-fbsde} also generalize the ones appearing in vanilla SB-FBSDE \citep{chen2021likelihood}
(see \eqref{eq:prior} in Appendix \ref{sec:a2})
by introducing nontrivial MF interaction $F$.
Despite seemingly complex compared to the original PDEs \eqref{eq:sb2-pde},
these FBSDEs systems --- namely \eqref{eq:fbsdet} and \eqref{eq:fbsdes} ---
stand as the foundation for developing scalable numerical methods,
as they describe precisely
how the values of $Y\equiv\log\Psi$ and $\Yhat\equiv\log\Psihat$ shall change along the optimal SDEs
(notice, \eg that both $Y_t$ and $Z_t$ are functions of $X_t$ from \eqref{eq:nkc-yz}).
Essentially, the nonlinear FK Lemma provides a stochastic representation (in terms of $Y$ and $\Yhat$) of the PDEs in \eqref{eq:sb2-pde}
by expanding them w.r.t. the optimal SDEs in \eqref{eq:sb-sde} using the It\^{o} formula \citep{ito1951stochastic}.
Consequently, %
rather than solving the PDEs \eqref{eq:sb2-pde} in the \textit{entire function space} as in the prior work \citep{lin2021alternating},
it suffices to solve them \textit{locally around high probability regions characterized by} \eqref{eq:sb-sde},
which leads to computationally scalable methods.

\subsection{Design of the computational framework}  \label{sec:3.3}

Looking from Theorem \ref{thm:mfg-fbsde}, it suffices to approximate
$Y_\theta \approx Y$ and $\Yhat_\phi \approx \Yhat$
with some parametrized functions (we use DNNs), since one may infer
$Z_\theta \approx \sigma \nabla Y_\theta$ and $\Zhat_\phi \approx \sigma \nabla \Yhat_\phi$,
as suggested by \eqref{eq:nkc-yz},
and then solve for $(X_t,\Xbar_s)$ via (\ref{eq:fsdet}, \ref{eq:fsdes}).
Below, we explore options of designing training objectives for $(\theta,\phi)$,
with the aim to encourage $(Y_\theta,\Yhat_\phi)$ to satisfy the FBSDEs systems in (\ref{eq:fbsdet}, \ref{eq:fbsdes}).

\textbf{Option 1: $\Lipf$.}
Given how Theorem \ref{thm:mfg-fbsde} generalizes the one in \citep{chen2021likelihood}
(see \eqref{eq:prior} in Appendix~\ref{sec:a2}),
it is natural to wonder if adopting the computation used to derive \eqref{eq:L-ipf},
\eg $\Lipf(\phi) := \int \E[\rd Y_t^\theta + \rd \Yhat_t^\phi]$,\footnote{
    Additionally, we have $\Lipf(\theta) := \int \E[\rd Y_s^\theta + \rd \Yhat_s^\phi]$;
    see \eqref{eq:nll} in Appendix \ref{sec:a2} for the derivation.
}
suffices to reach the FBSDE \eqref{eq:fbsdet}.
This is, unfortunately, not the case as one can verify that
\begin{align*}
    \Lipf^{\text{\eqref{eq:fbsdet}}}(\phi) := \int \E\br{\rd Y_t^\theta + \rd \Yhat_t^\phi}
    = \int \E\br{\frac{1}{2} \norm{\Zhat_t^\phi + Z^t_\theta}^2 + \nabla\cdot(\sigma\Zhat_t^\phi - f)} \dt
    = \Lipf^{\text{\eqref{eq:L-ipf2}}}(\phi).
\end{align*}
Despite that \eqref{eq:fbsdet} differs from \eqref{eq:prior} by the extra terms ``$+F_t$'' in \eqref{eq:bsdet1} and ``$-F_t$'' in \eqref{eq:bsdet2}, the two terms cancel out in the sum of $\rd Y_t^\theta + \rd \Yhat_t^\phi$,
thereby yielding the same objectives that \emph{do not depend on $F$}.
This implies that naively optimizing $\Lipf$ from \citep{chen2021likelihood} is insufficient for solving FBSDE systems with nontrivial $F$.
We must seek additional objectives, if any, in order to respect the MF structure.

\textbf{Option 2: $\Lipf$ + Temporal Difference objective $\Ltd$.}
Let us revisit the relation between the FBSDEs (\ref{eq:fbsdet}, \ref{eq:fbsdes}) and their PDEs counterparts ---
but this time the HJB in \eqref{eq:mfg2-pde}.
Take $(X_t, Y_t)$ for example:
The fact that $Y_t = \log \Psi(X_t,t) = -u(X_t,t)$ %
suggests
an alternative interpretation of $Y_t$ as the
stochastic representation of the HJB,
which, crucially, can be seen as the continuous-time analogue of the Bellman equation \citep{bellman1954theory}.
Indeed, discretizing \eqref{eq:bsdet1} with some fixed step size $\delta t$ yields
\begin{align}
    Y_{t+\delta t}^\theta = Y_t^\theta +
      \markcc{\pr{\frac{1}{2} \norm{Z_t^\theta}^2 + { F_t}}\delta t}
    + \markbb{{Z_t^\theta}^\T \delta W_t},
    \quad \delta W_t \sim \calN(\mathbf{0},  \delta t \mI),
    \label{eq:bellman}
\end{align}
which resembles a (non-discounted) Temporal Difference (TD) \citep{todorov2009efficient,lutter2021value}
except that, in addition to the standard \markcc{``\textit{rewards}'' (in terms of control and state costs)},
we also have a \markbb{stochastic term}.
This stochastic term,
which vanishes in the vanilla Bellman equation upon taking expectations,
plays a crucial role %
in characterizing the inherited stochasticity of the value function $Y_t$.
With this interpretation in mind, we can construct suitable TD targets for our FBSDEs systems as shown below.
\begin{proposition}[TD objectives $\Ltd$ for (\ref{eq:fbsdet}, \ref{eq:fbsdes})] \label{prop:td}
    The single-step TD targets take the forms:
    \begin{subequations}
    \label{eq:td-single}
    \begin{align}
        \TDhat_{t+\delta t}^\text{single} :=& \Yhat^\phi_t + \markcc{\pr{
                \frac{1}{2} \norm{\Zhat_t^\phi}^2 + \nabla \cdot (\sigma \Zhat_t^\phi -f_t) + \Zhat_t^\phi{}^\T Z_t^\theta {- F_t}}
            \delta t} + \markbb{\Zhat_t^\phi{}^\T \delta W_t}, %
        \label{eq:td2-single}
            \\
        \TD_{s+\delta s}^\text{single} :=& Y_s^\theta + \markcc{\pr{
                \frac{1}{2} \norm{Z_s^\theta}^2 + \nabla \cdot (\sigma Z_s^\theta + f_s) + Z_s^\theta{}^\T \Zhat_s^\phi {- F_s}}
            \delta s} + \markbb{Z_s^\theta{}^\T \delta W_s}, %
        \label{eq:td1-single}
    \end{align}
    \end{subequations}
    with
    $\TDhat_0 := \log \rho_0 - Y_0^\theta$ and
    $\TD_0 := \log \target - \Yhat_0^\phi$,
    and the multi-step TD targets take the forms:
    \begin{align}
        \TDhat^\text{multi}_{t+\delta t} :=
            \TDhat_0 + \sum_{\tau=\delta t}^t \delta \Yhat_\tau,
            \qquad
            \TD^\text{multi}_{s+\delta s} :=
            \TD_0    + \sum_{\tau=\delta s}^s \delta Y_\tau,
        \label{eq:td-multi}
    \end{align}
    where
    $\delta\Yhat_t := \TDhat_{t+\delta t}^\text{single} - \Yhat_t$ and
    $\delta Y_s := \TD_{s+\delta s}^\text{single} - Y_s$.
    Given these TD targets, we can construct
    \begin{align}
        \Ltd(\theta) = \sum_{s=0}^T \E \br{ \norm{
            Y_\theta(\Xbar_s, s) - \TD_s
        }} \delta s, \text{ }
        \Ltd(\phi) = \sum_{t=0}^T \E \br{ \norm{
            \Yhat_\phi(X_t, t) - \TDhat_t
        }} \delta t.
        \label{eq:L-td}
    \end{align}
\end{proposition}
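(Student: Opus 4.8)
The plan is to read off both the single-step and multi-step TD targets directly from the backward components of the two FBSDE systems established in Theorem~\ref{thm:mfg-fbsde}, treating each target as an Euler--Maruyama discretization that the parametrized pair $(Y_\theta,\Yhat_\phi)$ is forced to respect. First I would obtain \eqref{eq:td2-single} by discretizing the $\Yhat_t$-dynamics \eqref{eq:bsdet2} with step $\delta t$ along a sample path of the forward SDE \eqref{eq:fsdet}: replacing $\rd\Yhat_t$ by $\Yhat_{t+\delta t}-\Yhat_t$, the drift integral by its left-endpoint value times $\delta t$, and $\Zhat_t^\T\rd W_t$ by $\Zhat_t^{\phi\T}\delta W_t$ reproduces exactly the claimed $\TDhat^{\text{single}}_{t+\delta t}$. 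Symmetrically, discretizing the $Y_s$-dynamics \eqref{eq:bsdes1} along the reversed SDE \eqref{eq:fsdes} yields \eqref{eq:td1-single}. The point to emphasize here is that the martingale increment $\Zhat_t^{\phi\T}\delta W_t$ is \emph{retained} rather than averaged away: since the true $\Yhat$ satisfies \eqref{eq:bsdet2} pathwise, the correct regression target at $t+\delta t$ for a given Brownian sample must include this stochastic term, which is precisely what distinguishes the construction from a deterministic Bellman backup.

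Second, I would verify the initial conditions $\TDhat_0$ and $\TD_0$ from the boundary constraints of the Hopf--Cole-transformed PDEs \eqref{eq:sb2-pde}. Using the construction identity $Y_t+\Yhat_t = \log\rho(X,t)$ stated in Theorem~\ref{thm:mfg-fbsde} (which follows from $Y=\log\Psi$, $\Yhat=\log\Psihat$ in \eqref{eq:nkc-yz} and $\rho=\Psi\Psihat$), I evaluate at the two endpoints. At $t=0$, where $X_0\sim\rho_0$, the constraint $\Psi(\cdot,0)\Psihat(\cdot,0)=\rho_0$ gives $Y_0+\Yhat_0=\log\rho_0$, so the target that $\Yhat_\phi$ must match is $\TDhat_0=\log\rho_0-Y_0^\theta$. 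At the $\target$-endpoint of the reversed process, where $\Xbar_0\sim\target$, the constraint $\Psi(\cdot,T)\Psihat(\cdot,T)=\target$ gives $Y_0+\Yhat_0=\log\target$, hence $\TD_0=\log\target-\Yhat_0^\phi$. These anchor each bootstrapped recursion using the known boundary distribution together with the frozen partner network, which is what couples $\theta$ and $\phi$ and enforces the distributional constraints $\rho_0$ and $\target$.

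Third, for the multi-step targets \eqref{eq:td-multi} I would show they are the telescoping accumulation of the single-step increments $\delta\Yhat_\tau:=\TDhat^{\text{single}}_{\tau+\delta t}-\Yhat_\tau$ and $\delta Y_\tau$, i.e.\ the Monte-Carlo-return analogue of the one-step backup: unrolling $\TDhat^{\text{multi}}_{t+\delta t}=\TDhat_0+\sum_\tau\delta\Yhat_\tau$ reconstructs the full discretized stochastic integral of \eqref{eq:bsdet2} from the anchored boundary value, so that both single- and multi-step constructions reduce to the same discretized $\Yhat$ prescribed by the FBSDE. Finally, I would note that the losses $\Ltd(\theta),\Ltd(\phi)$ in \eqref{eq:L-td} vanish exactly when $Y_\theta,\Yhat_\phi$ satisfy these discretized relations at every timestep, establishing the desired consistency between the objectives and the FBSDE systems \eqref{eq:fbsdet} and \eqref{eq:fbsdes}.

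The main obstacle I anticipate is not the discretization itself but making the pathwise bootstrapping argument precise: one must justify that regressing onto a target containing the random increment $\delta W_t$ (rather than its conditional expectation) is the correct surrogate for the BSDE, and that anchoring the recursion with $\TDhat_0,\TD_0$ --- which depend on the \emph{other}, frozen network --- propagates the boundary information consistently through the telescoping sum without leaving a bias that survives as $\delta t\to0$. Handling the interplay between the forward and reversed time conventions, and checking that each target carries the correct sign of $F_\tau$ and of the divergence term inherited from \eqref{eq:bsdet2} and \eqref{eq:bsdes1}, is where the bookkeeping must be done with care.
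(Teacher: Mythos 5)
Your proposal is correct and follows essentially the same route as the paper's proof: both discretize the backward dynamics \eqref{eq:bsdet2} (and symmetrically \eqref{eq:bsdes1}) via Euler--Maruyama along a sampled trajectory of \eqref{eq:fsdet}, retain the pathwise martingale increment $\Zhat_t^{\phi\,\T}\delta W_t$ in the regression target, invoke the equality $\Yhat(X_{t+\delta t}^\theta, t+\delta t) = \Yhat(X_t^\theta,t) + \delta\Yhat_t$ that holds when the FBSDE system \eqref{eq:fbsdet} is satisfied, and obtain the multi-step targets by telescoping the single-step increments. Your explicit verification of the anchors $\TDhat_0$ and $\TD_0$ from the Hopf--Cole boundary constraints $\Psi\Psihat=\rho_0$ (resp.\ $\target$) is a small elaboration the paper leaves implicit, but it does not change the argument.
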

\begin{proof}
    See Appendix~\ref{sec:a3.prop-td}.
\end{proof}

\newcommand*\Xon{ \bm{X}_\text{on}^\theta }%
\newcommand*\Xoff{ \bm{X}_\text{off}^\theta }%
\newcommand*\Xbaron{ \bm{\Xbar}_\text{on}^\phi }%
\newcommand*\Xbaroff{ \bm{\Xbar}_\text{off}^\phi }%

\begin{figure}[t]
    \vskip -0.1in
    \begin{algorithm}[H]
    \small
    \caption{\small Deep Generalized \SB (DeepGSB)}
    \label{alg:jacobi}
    \begin{algorithmic}
        \STATE {\bfseries Input:}
            ($Y_\theta$, $\Yhat_\phi$, $\sigma \nabla Y_\theta$, $\sigma \nabla \Yhat_\phi$) for critic or
            ($Y_\theta$, $\Yhat_\phi$, \markgreen{$Z_\theta$, $\Zhat_\phi$}) for \markgreen{actor-critic} parametrization.
        \REPEAT
            \STATE Sample
                $\bm{X}^\theta \equiv \{X_t^\theta, Z_t^\theta, \delta W_t \}_{t\in[0,T]}$ from the forward SDE \eqref{eq:fsdet}; add $\bm{X}^\theta$ to replay buffer $\calB$.
            \FOR{$k=1$ {\bfseries to} $K$ }
                \STATE Sample on-policy $\Xon$ and off-policy $\Xoff$ samples respectively
                        from $\bm{X}^\theta$ and $\calB$.
                \STATE Compute $\calL(\phi) = \Lipf(\phi; \Xon) + \Ltd(\phi; \Xon) + \Ltd(\phi; \Xoff)~\markgreen{ +\calL_\text{FK}(\phi; \Xon)}$.
                \STATE Update $\phi$ with the gradient $ \nabla_\phi \calL(\phi)$.
            \ENDFOR
            \STATE Sample
                $\bm{\Xbar}^\phi \equiv \{\Xbar_s^\phi, \Zhat_s^\phi, \delta W_s \}_{s\in[0,T]}$ from the backward SDE \eqref{eq:fsdes}; add $\bm{\Xbar}^\phi$ to replay buffer $\bar{\calB}$.
            \FOR{$k=1$ {\bfseries to} $K$ }
                \STATE Sample on-policy $\Xbaron$ and off-policy $\Xbaroff$ samples respectively
                    from $\bm{\Xbar}^\phi$ and $\bar{\calB}$.
                \STATE Compute $\calL(\theta) = \Lipf(\theta; \Xbaron) + \Ltd(\theta; \Xbaron) + \Ltd(\theta; \Xbaroff) ~\markgreen{ +\calL_\text{FK}(\theta; \Xbaron)}$.
                \STATE Update $\theta$ with the gradient $ \nabla_\theta \calL(\theta)$.
            \ENDFOR
        \UNTIL{ converges }
    \end{algorithmic}
    \end{algorithm}
\vskip -0.2in
\end{figure}

It can be readily seen that the single-step TD targets in \eqref{eq:td-single} obey a similar structure to \eqref{eq:bellman}, except deriving from different SDEs
(\ref{eq:bsdet2}, \ref{eq:bsdes1}). %
Doing so reduces the computational overhead, as
the related objectives for each parameter, \eg $\Lipf(\theta)$ and $\Ltd(\theta)$,
can be evaluated from the same expectation.
In practice, we find that the multi-step objectives often yield better performance, as consistently observed in the DeepRL literature
\citep{meng2021effect,van2016effective,hessel2018rainbow}.
Additionally, common practices such as computing $\TDhat_t$ and $\TD_s$ using the exponential moving averaging (\ie target values) and replay buffers also help stabilize training.
Finally, the fact that the TD targets in \eqref{eq:L-td} appear as the regressands implies
that from a computational standpoint, the MF interaction $F$ needs \textit{not} to be continuous or differentiable.

\textbf{Necessity and sufficiency of $\Lipf + \Ltd$.} %
It remains unclear whether appending $\Ltd$ to the objective suffices for $(Y_\theta,\Yhat_\phi)$ to satisfy the FBSDEs (\ref{eq:fbsdet}, \ref{eq:fbsdes}).
Below, we provide a positive result.
\begin{proposition} \label{prop:sufficient}
    The functions
    $(Y_\theta, Z_\theta, \Yhat_\phi, \Zhat_\phi)$ satisfy the FBSDEs (\ref{eq:fbsdet},\ref{eq:fbsdes}) in
    Theorem~\ref{thm:mfg-fbsde} if and only if they are the minimizers of the combined losses $\calL(\theta,\phi) := \Lipf(\phi) + \Ltd(\phi) + \Lipf(\theta) + \Ltd(\theta)$.
\end{proposition}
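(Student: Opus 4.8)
The plan is to prove the biconditional by exploiting that every summand of $\calL(\theta,\phi)$ is non-negative, so that a minimizer is precisely a joint zero of all four terms, and then matching each vanishing term to one of the defining equations of the FBSDEs. First I would record the two non-negativity facts: $\Ltd(\theta),\Ltd(\phi)\geq 0$ trivially, since they are sums of expected norms, and $\Lipf(\theta),\Lipf(\phi)\geq 0$ by Lemma~\ref{lemma:kl-sbfbsde}, which identifies them (up to a positive constant) with the path-measure divergences $\KL(q^\phi\|q^\theta)$ and $\KL(q^\theta\|q^\phi)$. Because the true Hopf--Cole pair $(\Psi,\Psihat)$ solving \eqref{eq:sb2-pde} yields, through Theorem~\ref{thm:mfg-fbsde}, functions $(Y,Z,\Yhat,\Zhat)$ that satisfy the FBSDEs exactly, such a pair attains $\calL=0$; hence $\min\calL=0$, and $(\theta,\phi)$ is a minimizer if and only if all four terms vanish simultaneously.

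Next I would translate each vanishing term into a BSDE. By construction of the TD targets in Proposition~\ref{prop:td}, the single-step target $\TDhat_{t+\delta t}^\text{single}$ is exactly the Euler increment of \eqref{eq:bsdet2} and $\TD_{s+\delta s}^\text{single}$ that of \eqref{eq:bsdes1}, while the initializations $\TDhat_0,\TD_0$ encode the boundary identities $Y_0+\Yhat_0=\log\rho_0$ and $Y+\Yhat=\log\target$ at the terminal slice (the multi-step targets being the telescoped equivalents). Therefore $\Ltd(\phi)=0$ forces \eqref{eq:bsdet2} to hold along the forward process $X_t$ with its $\rho_0$ boundary, and $\Ltd(\theta)=0$ forces \eqref{eq:bsdes1} along the backward process $\Xbar_s$ with its $\target$ boundary (passing $\delta t,\delta s\to 0$, or reading the FBSDEs in discretized form). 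Using $Z_\theta=\sigma\nabla Y_\theta$ and $\Zhat_\phi=\sigma\nabla\Yhat_\phi$, the It\^o/Feynman--Kac correspondence underlying Theorem~\ref{thm:mfg-fbsde} makes each BSDE equivalent to the corresponding PDE of \eqref{eq:sb2-pde} holding on the support of the associated path measure: \eqref{eq:bsdet2} to the $\Psihat$-equation on $\mathrm{supp}(q^\theta)$, and \eqref{eq:bsdes1} to the $\Psi$-equation on $\mathrm{supp}(q^\phi)$.

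The decisive step is to recover the two remaining equations, \eqref{eq:bsdet1} and \eqref{eq:bsdes2}, which no TD loss regresses directly. Here I would invoke $\Lipf(\theta)=\Lipf(\phi)=0$: by Lemma~\ref{lemma:kl-sbfbsde} this is equivalent to $q^\theta=q^\phi$, so the forward and backward processes share a common support. Since the $\Psi$-equation already holds on $\mathrm{supp}(q^\phi)=\mathrm{supp}(q^\theta)$, it holds in particular along $X_t$, which is precisely \eqref{eq:bsdet1}; symmetrically the $\Psihat$-equation on the common support yields \eqref{eq:bsdes2}. This closes the sufficiency direction. For necessity I would run the implications backward: if all four FBSDEs hold, then $\Ltd(\theta)=\Ltd(\phi)=0$ because the TD targets are met increment by increment, and the forward/backward SDEs are genuine time reversals (the drift correction $Z+\Zhat=\sigma\nabla\log\rho$ is built into $Y+\Yhat=\log\rho$, independently of $F$), so $q^\theta=q^\phi$ and both $\Lipf$ terms vanish by the Lemma; hence $\calL=0$ and $(\theta,\phi)$ minimizes.

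I expect the main obstacle to be the support-transfer argument of the third paragraph: making precise that $\Lipf=0$ gives not merely equality of path measures but equality of supports strong enough to move a pointwise PDE identity from one process to the other, together with the $\delta t\to 0$ passage between the discretized TD equalities and the continuous-time BSDEs. A secondary technical point is asserting existence of a zero-loss, classical $C^{2,1}$ solution of \eqref{eq:sb2-pde} so that $\min\calL=0$ is actually attained; this is where the growth and Lipschitz hypotheses on $f,F$ imposed in Theorem~\ref{thm:mfg-fbsde} enter.
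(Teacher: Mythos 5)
Your proof is correct in substance and, on the sufficiency side, takes a genuinely different route from the paper's. The paper extracts the drift identity $Z_t^\theta + \Zhat_t^\phi = \sigma\nabla\log q_t^\theta$ from $\Lipf$ via the Girsanov representation \eqref{eq:L-ipf22}, derives the It\^o dynamics of $\log q_t^\theta$ from the Fokker--Planck equation of the sampled process \eqref{eqq:e}, and subtracts the TD-enforced BSDE \eqref{eqq:c}; this recovers \eqref{eq:bsdet1} only up to an undetermined time-dependent offset $c_t$, because the drift identity pins down $Y_\theta + \Yhat_\phi - \log q^\theta$ only modulo a function of $t$. Killing $\rd c_t$ is the paper's most delicate step: a contradiction argument replaying the parametrized HJB residual \eqref{eqq:g} along the backward SDE against \eqref{eqq:d}. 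Your route --- convert each TD-enforced BSDE into a pointwise PDE on the support of its sampling measure (legitimate since $Z_\theta=\sigma\nabla Y_\theta$ and $\Zhat_\phi = \sigma\nabla\Yhat_\phi$ make the martingale parts match identically, a fact the paper's proof also invokes), use $\Lipf$-minimality, i.e.\ $\KL(q^\theta\|q^\phi)=0$, purely as a support-transfer device, then re-expand by It\^o along the other process --- sidesteps $c_t$ entirely, since a pointwise PDE identity carries no free additive offset. Your necessity direction likewise replaces the paper's telescoping computation (integrating $\rd Y + \rd\Yhat$ and invoking Proposition~\ref{prop:a}) with a time-reversal argument; the claim that $Y_\theta+\Yhat_\phi=\log q^\theta$ is ``built in'' does require a short uniqueness argument (both sides solve the same log-Fokker--Planck equation with initial datum $\log\rho_0$ enforced by $\TDhat_0$), but this is easily supplied.

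Two soft spots should be repaired. First, your opening framing is false as stated: $\Lipf(\phi)$ is \emph{not} nonnegative. By Lemma~\ref{lemma:kl-sbfbsde} it equals $\KL(q^\theta\|q^\phi)$ only up to additive boundary terms depending on $\theta$, and its integrand (containing $\nabla\cdot(\sigma\Zhat_\phi - f)$) can be negative, so the joint minimum of $\calL$ is not $0$ and ``minimizer iff all four terms vanish'' does not follow from a sum-of-nonnegatives argument. The operative reading --- minimizing each $\Lipf$ term is equivalent to the corresponding KL vanishing, as in \eqref{eqq:a}--\eqref{eqq:b} --- is what both you and the paper actually use (the paper glosses the same joint-versus-separate minimization point), but your stated justification is wrong. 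Second, your support-transfer step tacitly assumes the $t$- and $s$-coordinate systems of Theorem~\ref{thm:mfg-fbsde} are exact mutual time reversals: pushing the $\Psi$-equation extracted from \eqref{eq:bsdes1} (the second line of \eqref{eq:sb2-pde-s}) through a forward It\^o expansion returns \eqref{eq:bsdet1} only up to an extra $\nabla\cdot f$ term, which vanishes for the divergence-free drifts used in the paper (zero/constant $f$, a.e.\ piecewise-constant polarization drift) but not for general $f$. The paper's $c_t$-contradiction buries the same term inside its ``residual is preserved'' claim, so this is a caveat shared with the paper's own argument rather than a defect unique to yours --- but since your proof makes the transfer explicit, you should state the hypothesis explicitly.
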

\begin{proof}
    See Appendix~\ref{sec:a3.prop-conv}.
\end{proof}
Proposition~\ref{prop:sufficient} asserts the validity of the combined objectives $\Lipf + \Ltd$ in solving the generalized SB-FBSDEs in Theorem~\ref{thm:mfg-fbsde}, and hence the MFG problem in \eqref{eq:mfg2-pde}.
It shall be interpreted as follows: The minimizer of $\Lipf$, as implied in Lemma~\ref{lemma:kl-sbfbsde}, would always establish a valid ``bridge'' transporting between the boundary distributions $\rho_0$ and $\target$; yet, without further conditions, this bridge needs not obey a ``Schr{\"o}dinger'' bridge. While general IPF and Sinkhorn \citep{chen2015optimal,chen2021density}, upon proper initialization or discretization, provides one way to ensure the convergence toward the ``S''B, our Proposition~\ref{prop:sufficient} suggests an alternative by introducing the TD objectives $\Ltd$. This gives us flexibility to handle generalized SB in MFGs where $F$ becomes nontrivial or non-convex. Further, it naturally handles non-differentiable $F$, which can offer extra benefits in many cases.

\textbf{Option 3: $\Lipf$ + $\Ltd$ + FK objective $\calL_\text{FK}$.}
Though it seems sufficient to parametrize $(Y_\theta, \Yhat_\phi)$ then infer $Z_\theta := \sigma \nabla Y_\theta$ and $\Zhat_\phi := \sigma \nabla \Yhat_\phi$, as suggested in previous options,
in practice we find that parametrizing $(Z_\theta, \Zhat_\phi)$ with two additional DNNs
then imposing the following FK objective, \ie
\begin{align*}
    \calL_\text{FK}(\theta) = \sum_{s=0}^T \E \br{ \norm{
        \sigma \nabla Y_\theta(\Xbar_s, s) - Z_\theta(\Xbar_s, s)
    }} \delta s, \text{ }
    \calL_\text{FK}(\phi) = \sum_{t=0}^T \E \br{ \norm{
        \sigma \nabla \Yhat_\phi(X_t, t) - \Zhat_\phi(X_t, t)
    }} \delta t,
\end{align*}
often offers extra robustness.
These objectives aim to ensure that the nonlinear FK \eqref{eq:nkc-yz} holds.

Our \textbf{DeepGSB} is summarized in Alg.~\ref{alg:jacobi}.
Hereafter, we refer Option 2 and 3 respectively to \textit{DeepGSB critic} and \textit{DeepGSB actor-critic},
as $Y_\theta$ and $Z_\theta$ play similar roles of critic and actor networks \citep{mnih2013playing,lillicrap2015continuous}.

\textbf{Remarks on convergence.}
Despite Alg.~\ref{alg:jacobi} sharing a similar alternating structure to IPF \citep{chen2021likelihood,de2021diffusion,chen2021density},
the combined objective, \eg
$\calL(\phi) \propto \KL(\rho^\theta || \rho^\phi) + \E_{\rho^\theta}[\Ltd(\phi)] \neq \KL(\rho^\phi || \rho^\theta)$
is \emph{not} equivalent to the (reversed) KL appearing in IPF.
Instead,
DeepGSB may be closer to
trust region optimization \citep{schulman2015trust},
as both iteratively update the policy using samples from the previous stage while subjected to some KL penalty: $\pi^{(i+1)} = \argmin_\pi  \KL(\pi^{(i)} || \pi) + \E_{\pi^{(i)}} [\mathcal{L}(\pi)]$.
Hence, one can expect DeepGSB to admit similar
monotonic improvement and local convergence properties.
We leave more discussions to Appendix \ref{sec:a4.conv}.

\section{Experiment} \label{sec:4}

\newcommand*\GMM{ \markaa{GMM} }%
\newcommand*\Vneck{ \markbb{V-neck} }%
\newcommand*\Stunnel{ \markcc{S-tunnel} }%

\textbf{Instantiation of MFGs.$\quad$}
We validate our {DeepGSB} on two classes of MFGs, %
including classical crowd navigation ($d$=2) and high-dimensional ($d$=1000) opinion depolarization.
For crowd navigation, we consider three MFGs appearing in prior methods \citep{ruthotto2020machine,lin2021alternating}, including
\markaa{\textit{(i)}} asymmetric obstacle avoidance,
\markbb{\textit{(ii)}} entropy interaction with a V-shape bottleneck,
and  \markcc{\textit{(iii)}} congestion interaction on an S-shape tunnel.
We will refer to them respectively as \markaa{GMM}, \markbb{V-neck}, and \markcc{S-tunnel}.
The obstacles and the initial/target Gaussian distributions $(\rho_0,\target)$ are shown in Fig.~\ref{fig:mfg}.
For opinion depolarization,
we set $\rho_0$ and $\target$ to two zero-mean Gaussians with varying variances
for representing the initially polarized and desired moderated opinion distributions.
Finally,
we consider zero and constant base drift $f$ respectively for GMM and V-neck/S-tunnel,
and adopt the polarized MF dynamics \citep{gaitonde2021polarization}
for opinion MFG;
see Sec.~\ref{sec:4.2} for a detailed discussion.

\begin{figure}[H]
    \vskip -0.05in
    \centering
    \begin{minipage}{0.46\textwidth}
        \centering
        \includegraphics[width=\textwidth]{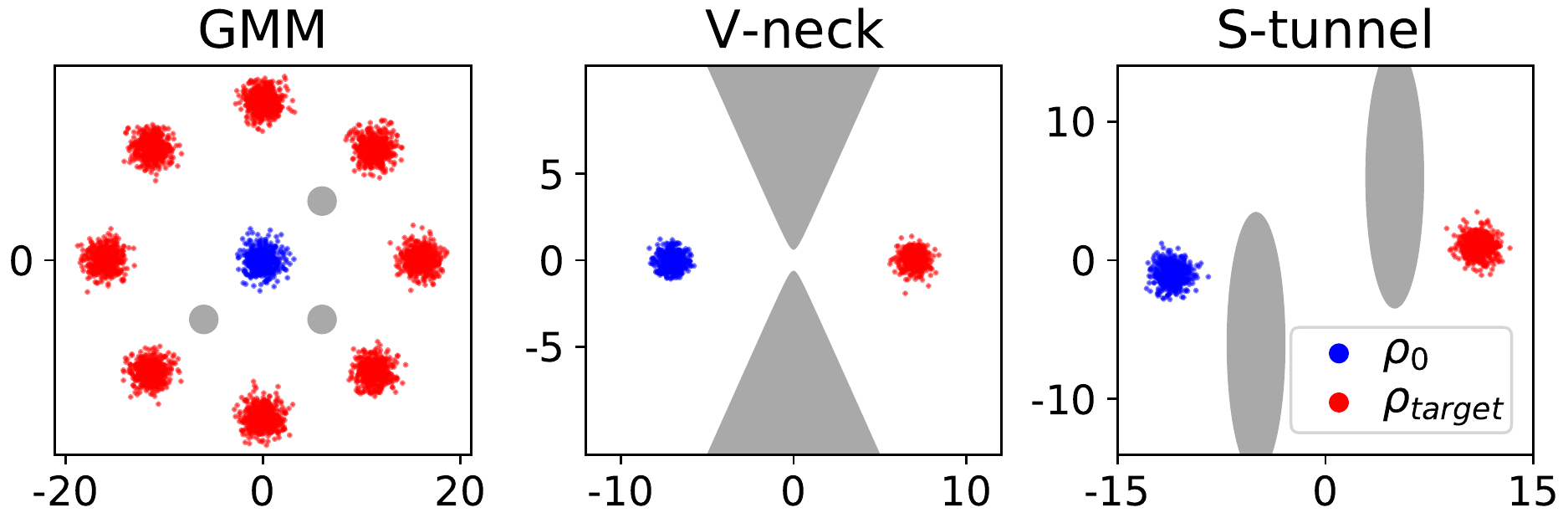}
        \vskip -0.05in
        \caption{
            Crowd navigation MFGs.
        }
        \label{fig:mfg}
    \end{minipage}
    $\quad$
    \begin{minipage}{0.48\textwidth}
          \centering
          \captionsetup{type=table}
          \captionsetup{justification=centering}
          \caption{
              MF interactions for 3 crowd navigation MFGs and the high-dimensional opinion MFG.
          }
          \label{table:2}
          \vskip -0.05in
          \centering
          \scalebox{0.95}{
          \begin{tabular}{cc}
            \toprule
          \GMM ($d$=2)      & $F_\text{obstacle}$ \\
          \Vneck ($d$=2)   & $F_\text{obstacle} + F_\text{entropy}$ \\
          \Stunnel ($d$=2) & $F_\text{obstacle} + F_\text{congestion}$ \\
          {Opinion} ($d$=1000)  & $F_\text{entropy}$ \\
            \bottomrule
          \end{tabular}}
    \end{minipage}
    \vskip -0.1in
\end{figure}

\textbf{MF interactions $F$.$\quad$}
We follow standard treatments from the MFG theory \citep{lasry2007mean} by noting that
given a functional $\calF(\rho)$ that quantifies the MF cost w.r.t. the population $\rho$,
\eg $\calF_\text{entropy} := \E_\rho[\log\rho]$ or $\calF_\text{congestion} := \E_{x,y\sim\rho} [\frac{1}{\norm{x-y}^2+1}]$,
one can derive its associated MF interaction function $F(x,\rho)$ by taking the functional derivative, \ie
$\frac{\delta\calF(\rho)}{\delta\rho}(x) = F(x,\rho)$.
Hence, the entropy and congestion MF interactions,
together with the obstacle cost, follow (see Appendix~\ref{sec:a4.2} for the derivation):
\begin{align}
        F_\text{entropy} := \log \rho(x,t) {+} 1, \quad
        F_\text{congestion} := \E_{y\sim\rho}\br{\frac{2}{\norm{x{-}y}^2{+}1}}, \quad
        F_\text{obstacle} := 1500 {\cdot} \mathbbm{1}_\text{obs}(x),
    \label{eq:F}
\end{align}
where $\mathbbm{1}_\text{obs}(\cdot)$ is the (discontinuous) indicator of the problem-dependent obstacle set.
We summarize the MF interaction in Table~\ref{table:2}.

\textbf{Architecture \& Hyperparameters.$\quad$}
We parameterize the functions with fully-connected DNNs for crowd navigation,
and deep residual networks for high-dimensional opinion MFGs.
All networks adopt sinusoidal time embeddings and are trained with AdamW \citep{loshchilov2017decoupled}.
All SDEs in (\ref{eq:fbsdet}, \ref{eq:fbsdes}) are solved with the Euler-Maruyama method.
Due to space constraints, we will focus mostly on the results of \textbf{a}ctor-\textbf{c}ritic parametrization \textbf{DeepGSB\nobreakdash-ac},
and leave the discussion of \textbf{c}ritic parametrization \textbf{DeepGSB\nobreakdash-c},
along with additional experimental details, to Appendix \ref{sec:a5}.

\subsection{Two-dimensional crowd navigation} \label{sec:4.1}

Figure \ref{fig:crowd-nav} shows the simulation results of our \textbf{DeepGSB-ac} on three crowd navigation MFGs.
We also report existing numerical methods \citep{ruthotto2020machine, lin2021alternating, chen2021density}
that are best-tuned on each MFG (see Appendix \ref{sec:a5.1} for details)
but note that in practice,
they either
require softening $F$ to be  differentiable \citep{ruthotto2020machine, lin2021alternating} to yield reasonable results,
or
discretizing the state space \citep{chen2021density}, which can lead to prohibitive complexity.\footnote{
    As stated in \citep{chen2021density}, the complexity scales \textit{quadratically} w.r.t. the number of discretized grid points.
}

We first compare to \citet{chen2021density} on\GMM(see Fig.~\ref{fig:gmm}) as
their method only applies
to non-MF interaction, \ie
$F:= F(x)$. %
While \textbf{DeepGSB-ac} guides the population to smoothly avoid all obstacles
(notice the sharp contours of $Y$ around them),
\citep{chen2021density} struggles to escape due to the discretization of the state space (hence the policy).
To better examine the effect of $\rho$ in $F(x,\rho)$,
we next simulate the dynamics on\Vneck(see Fig.~\ref{fig:bottle}) with and without the MF interaction.
It is clear that our \textbf{DeepGSB\nobreakdash-ac} encourages
the population to spread out once the entropy interaction $F_\text{entropy}$ is enabled,
yet a similar effect is barely observed in \citep{ruthotto2020machine}.
We observed difficulties in balancing
the MF interaction $F$ and the terminal penalty $\KL(\rho_T||\target)$ for \citep{ruthotto2020machine},
yet this problem is alleviated in DeepGSB by construction.
Lastly, we validate the robustness of our method on\Stunnel(see Fig.~\ref{fig:elp}) w.r.t. varying diffusions $\sigma=\{0.5,1,2\}$.
Again,
our \textbf{DeepGSB\nobreakdash-ac} reaches the same $\target$
despite being subject to different levels of stochasticity.
This is in contrast to \citep{lin2021alternating},
which, due to discarding the SDE dynamics,
necessitates solving PDEs on the entire state space
that may be sensitive to hyperparameters.
In short, our DeepGSB outperforms prior methods
\citep{ruthotto2020machine, lin2021alternating, chen2021density}
by better respecting obstacles and MF interactions yet without losing convergence to $\target$,
and its performance remains robust across different MFGs.

\begin{figure}
    \vskip -0.1in
    \centering
    \subfloat{
        \includegraphics[width=\textwidth]{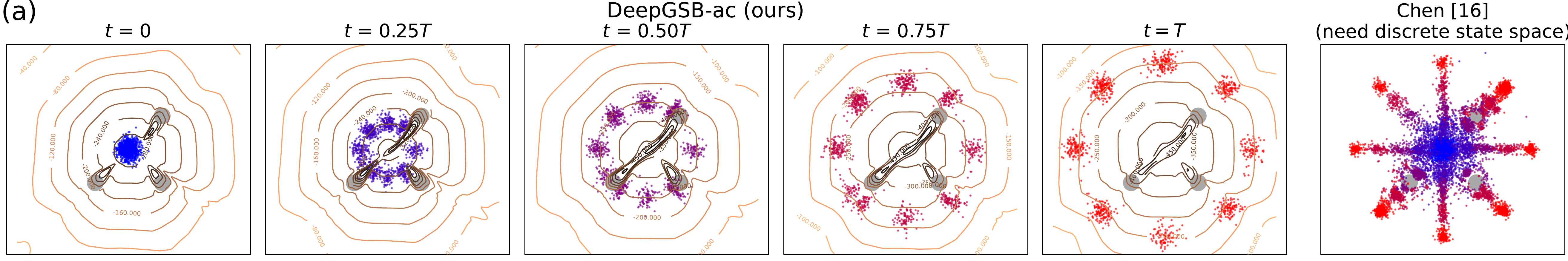}
        \label{fig:gmm}
    }\\
    \vspace{-7.5pt}
    \subfloat{
        \includegraphics[width=\textwidth]{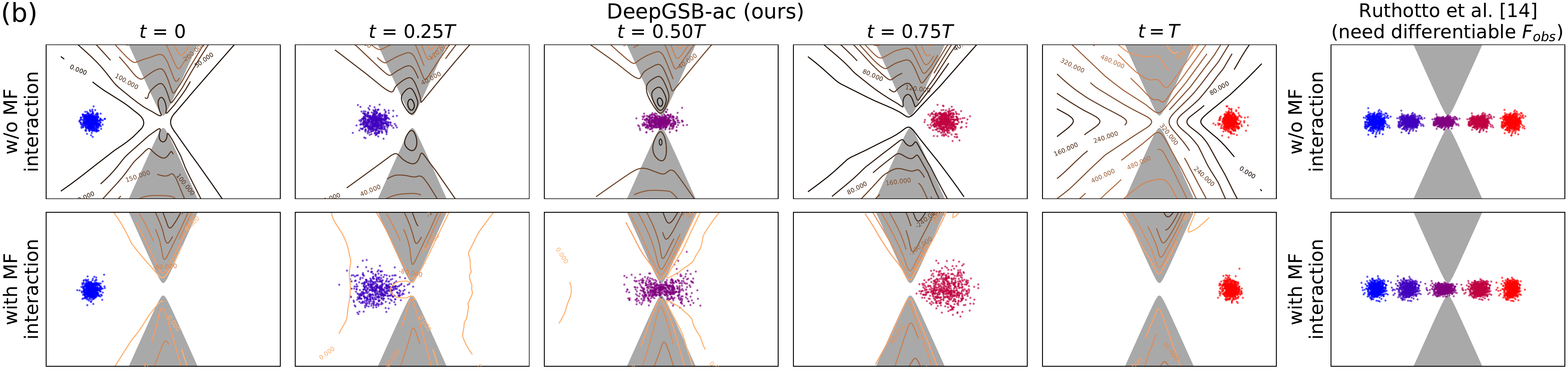}
        \label{fig:bottle}
    }\\
    \vspace{-7.5pt}
    \subfloat{
        \includegraphics[width=\textwidth]{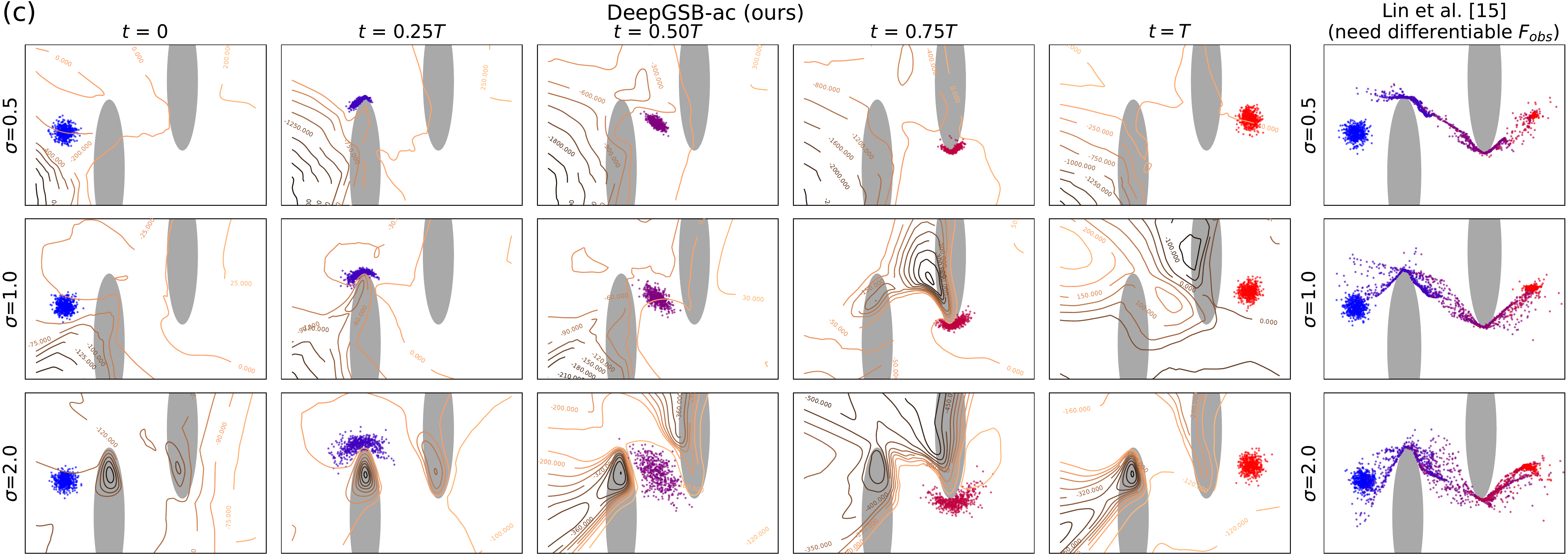}
        \label{fig:elp}
    }
    \caption{
        Simulation of the three crowd navigation MFGs, including
        (a) \markaa{GMM}, (b) \markbb{V-neck}, and (c) \markcc{S-tunnel},
        from $t=0$ to $T$.
        The first five columns show the population snapshots, each with a different color,
        guided by our \textbf{DeepGSB-ac},
        whereas the sixth (rightmost) column overlays the same population snapshots generated by
        existing methods \citep{ruthotto2020machine, lin2021alternating, chen2021density}.
        The time-varying contours represent
        $Y_\theta \approx \log \Psi$ whose gradient relates to the policy via $Z = \sigma\nabla Y$.
        This figure is best viewed in color.
    }
    \label{fig:crowd-nav}
\end{figure}

\begin{figure}[t]
    \vskip -0.15in
    \centering
    \subfloat{
        \includegraphics[height=3.65cm]{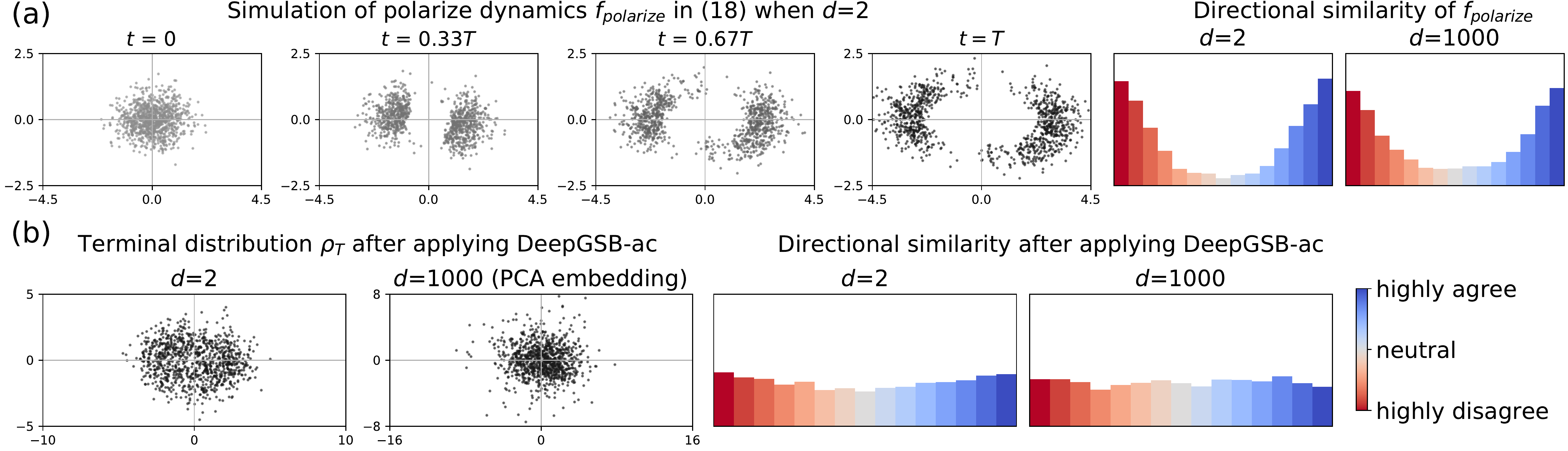}
        \label{fig:opinion-a}%
    }
    \subfloat{
        \textcolor{white}{\rule{1pt}{1pt}}
        \label{fig:opinion-b}%
    }
    \vskip -0.05in
    \caption{
        (a) Visualization of polarized dynamics $\bar{f}_\text{polarize}$ in 2- and 1000-dimensional opinion space,
        where the \textit{directional similarity} \citep{schweighofer2020agent} counts the histogram of cosine angle between pairwise opinions
        at the terminal distribution $\rho_T$.
        (b) \textbf{DeepGSB-ac} guides $\rho_T$ toward moderated distributions,
        hence \textit{depolarizes} the opinion dynamics.
        We use the first two principal components to visualize $d$=1000.
    }
    \label{fig:opinion}
    \vskip -0.1in
\end{figure}

\subsection{High-dimensional opinion depolarization} \label{sec:4.2}

Next, we showcase our {DeepGSB} %
in solving high-dimensional MFGs in the application of opinion dynamics
\citep{schweighofer2020agent,gaitonde2021polarization,hkazla2019geometric},
where each agent now possesses a $d$-dimensional opinion $x\in\sR^d$
that evolves through the interactions with the population.
In light of increasing recent attention,
we consider a particular class of opinion dynamics %
known to yield strong \textit{polarization} \citep{gaitonde2021polarization},
\ie the agents' opinions tend to partition into groups holding diametric views.
Take the \textit{party model} \citep{gaitonde2021polarization} for instance:
Given a random information $\xi\in\sR^d$ sampled from some distribution independent of $\rho$,
each agent updates the opinion following a normalized polarize dynamic
$\bar{f}_\text{polarize}=\nicefrac{f_\text{polarize}}{\norm{f_\text{polarize}}^{\frac{1}{2}}}$, where
\begin{align}
    f_\text{polarize}(x,\rho;\xi) := {\E_{y\sim\rho}\br{a(x,y;\xi)\bar{y}}},
    \text{ }\text{ } a(x,y;\xi) :=
    \begin{cases}
        1  & \text{if } \sign(\langle x,\xi \rangle) = \sign(\langle y,\xi \rangle) \\
        -1 & \text{otherwise}
    \end{cases},
    \label{eq:opinion-f}
\end{align}
and $\bar{y} = \nicefrac{y}{\norm{y}^{\frac{1}{2}}}$.
The \textit{agreement} function $a(x,y;\xi)$
indicates whether the two opinions $x$ and $y$ agree on the information $\xi$.
Intuitively,
the dynamic in \eqref{eq:opinion-f} suggests that the agents
tend to be receptive to opinions they agree with, and antagonistic to opinions they disagree with.
As shown in Fig.~\ref{fig:opinion-a},
this behavioral assumption, also known as biased assimilation \citep{lord1979biased,dandekar2013biased},
can easily lead to polarization.

We can apply our MFG framework \eqref{eq:mfg2-pde} to this polarized base drift \eqref{eq:opinion-f},
where, starting from some weakly polarized $\rho_0$,
we seek a policy that compensates the polarization tendency and helps guide the opinion towards
a moderated distribution $\target$ (assuming as Gaussian for simplicity).
We consider the entropy MF interaction $F_\text{entropy}$
as it encourages opinions diversity before reaching consensus.
As shown in Fig.~\ref{fig:opinion-b},
in both lower- ($d$=2) and higher- ($d$=1000) dimensions,
our \textbf{DeepGSB-ac} successfully guides the opinion towards the desired distribution centered symmetrically at $\mathbf{0}\in \sR^d$, thereby mitigates the polarization.
Results of {DeepGSB-c} remain similar despite being more sensitive to hyperparameters; see Appendix~\ref{sec:a5.2}.
We highlight these state-of-the-art results on a challenging class of MFGs that,
comparing to existing methods \citep{ruthotto2020machine,lin2021alternating},
consider a more difficult mean-field dynamic ($f(x,\rho)$ \textit{vs.} $f(x)$)
in an order of magnitude higher dimension ($d$=1000 \textit{vs.} $d$=100).

\subsection{Discussion} \label{sec:5}

\begin{wrapfigure}[13]{r}{0.63\textwidth}
    \vspace{-18pt}
    \captionsetup{type=table}
    \caption{
        Comparison of DeepGSB-ac \textit{vs.} DeepGSB-c w.r.t
        Wasserstein distance to $\target$ and FBSDEs violation, in terms of
        TD errors and nonlinear FK, averaged over 3 runs.
    }
    \setlength\tabcolsep{4.5pt}
    \label{table:3}
    \vskip -0.05in
    \centering
    \scalebox{0.98}{
    \begin{tabular}{ccrrrr}
        \toprule
        \multirow{2}{*}{MFGs}              & \multirow{2}{*}{{DeepGSB}}   & \multirow{2}{*}{$\calW_2 \downarrow $} & \multicolumn{3}{c}{FBSDEs Violation $\downarrow$ }   \\
                                            &                           &                                           & {  $\Ltd(\phi)$}  & {  $\Ltd(\theta)$}    & { $\Lfk(\theta)$}          \\
        \midrule
        \multirow{2}{*}{\GMM}            & {-ac} & $\text{\textbf{.27}}{\scriptsize \text{$\pm$.16}}$   & $\text{{9.5}}{\scriptsize\text{$\pm$2.5}}$         & $\text{\textbf{7.1}}{\scriptsize\text{$\pm$0.6}}$  & $\text{{5.2}}{\scriptsize\text{$\pm$1.1}}$         \\
                                         & {-c}  & $\text{{.61}}{\scriptsize\text{$\pm$.91}}$           & $\text{\textbf{7.0}}{\scriptsize\text{$\pm$1.3}}$  & $\text{{10.1}}{\scriptsize\text{$\pm$1.6}}$        & $\text{\textbf{0.0}}{\scriptsize\text{$\pm$0.0}}$  \\
        \midrule
        \multirow{2}{*}{\Vneck}         & {-ac}  & $\text{\textbf{.00}}{\scriptsize\text{$\pm$.00}}$   & $\text{\textbf{4.9}}{\scriptsize\text{$\pm$1.5}}$  & $\text{\textbf{4.1}}{\scriptsize\text{$\pm$0.5}}$  & $\text{{0.6}}{\scriptsize\text{$\pm$0.2}}$         \\
                                        & {-c}   & $\text{{.01}}{\scriptsize\text{$\pm$.00}}$          & $\text{{8.2}}{\scriptsize\text{$\pm$0.8}}$         & $\text{{8.7}}{\scriptsize\text{$\pm$1.6}}$         & $\text{\textbf{0.0}}{\scriptsize\text{$\pm$0.0}}$  \\
        \midrule
        \multirow{2}{*}{\markcc{S-tunnel}} & {-ac} & $\text{\textbf{.01}}{\scriptsize\text{$\pm$.00}}$ & $\text{\textbf{25.5}}{\scriptsize\text{$\pm$2.3}}$ & $\text{{28.6}}{\scriptsize\text{$\pm$3.6}}$        & $\text{{2.1}}{\scriptsize\text{$\pm$0.1}}$         \\
                                           & {-c} & $\text{{.03}}{\scriptsize\text{$\pm$.01}}$         & $\text{{30.9}}{\scriptsize\text{$\pm$6.9}}$        & $\text{\textbf{26.4}}{\scriptsize\text{$\pm$5.5}}$ & $\text{\textbf{0.0}}{\scriptsize\text{$\pm$0.0}}$  \\
        \bottomrule
    \end{tabular}
    }
\end{wrapfigure}
\textbf{DeepGSB-ca \textit{vs.} DeepGSB-c.}
Table~\ref{table:3} compares actor-critic with critic parametrizations on crowd navigation MFGs.
While DeepGSB\nobreakdash-ac typically achieves lower Wasserstein and TD errors,
it seldom closes the consistency gap of $\Lfk(\theta)$,
as opposed to DeepGSB\nobreakdash-c.
In practice,
the results of DeepGSB\nobreakdash-c are visually indistinguishable from DeepGSB\nobreakdash-ac,
despite the different contours of $Y$;
see Appendix \ref{sec:a5.2} for more discussions.

\textbf{DeepGSB works with intractable $\target$.}
While the availability of the target density $\target$ is a common assumption adopted in prior works \citep{ruthotto2020machine, lin2021alternating}, in which $\target$ is involved in computing the boundary loss, in most real-world applications, $\target$ is seldom available.
Here, we show that DeepGSB works well without knowing $\target$ (and $\rho_0$) so long as we can sample from $X_0 \sim \rho_0$ and $\Xbar_0 \sim \target$. This is similar to the setup of generative modeling \citep{chen2021likelihood}.
In Fig.~\ref{fig:c}, we show that DeepGSB trained without the initial and terminal densities can converge equally well. Crucially, this is because DeepGSB replies on a variety of other mechanisms (\eg self-consistency in single-step TD objectives and KL-matching in IPF objective) to generate equally informative gradients. This is in contrast to \citep{ruthotto2020machine, lin2021alternating} where the training signals are mostly obtained by differentiating through $\KL(\rho||\target)$; consequently, their methods fail to converge in the absence of $\target$.

\begin{figure}[H]
    \vskip -0.1in
    \centering
    \includegraphics[width=\textwidth]{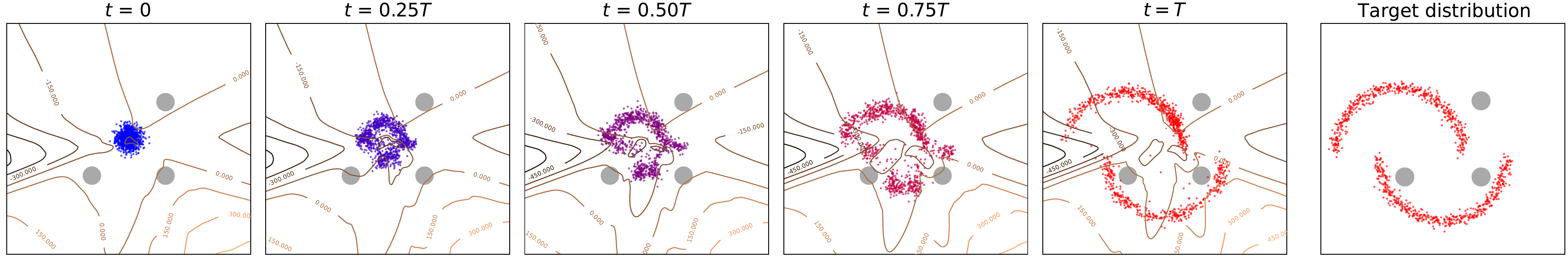}
    \vskip -0.05in
    \caption{
        DeepGSB-ac trained \emph{without} access to the initial and target distributions, \ie without $\TD_0$ and $\TDhat_0$.
        In this case, we compute $\Ltd$ with the single-step formulation in \eqref{eq:td-single}.
    }
    \label{fig:c}
    \vskip -0.1in
\end{figure}

\section{Conclusion and Limitation} \label{sec:6}

We present \textbf{DeepGSB}, a new numerical method for solving a challenging class of MFGs with distributional boundary constraints.
By generalizing prior FBSDE theory for \SB to accepting mean-field interactions,
we show that practical training can be achieved via an intriguing algorithmic connection to DeepRL.
Our DeepGSB outperforms prior methods in crowd navigation MFGs and
sets a new state-of-the-art record in depolarizing {1000}-dimensional opinion MFGs.

DeepGSB
is mainly developed for MFGs in \textit{unconstrained} state spaces such as $\sR^d$.
Yet,
it may be necessary
to adopt domain-specific structures, \eg \textit{constrained} state spaces.
  Additionally, the divergence in the IPF objectives may scale unfavorably as the dimension grows.
  This may be mitigated by adopting a simpler regression from \citet{de2021diffusion}.
  We leave this as a promising future direction.

\newpage

\section*{Boarder Impact}
Study of Mean-Field Games (MFGs) possesses its own societal influence.
Thus, as a MFG solver,
DeepGSB may pose a potential impact in offering %
solutions to previously unsolvable MFGs under more practical settings,
thereby facilitating new understanding of population behavior.

\begin{ack}
  The authors would like to thank Yu-ting Chiang, Augustinos and Molei for their helpful supports and kind discussion. The authors would also like to thank the anonymous Reviewer A4H3 for his/her initially harsh yet constructive comments on OpenReview, which led to substantial improvements of the theoretical results during rebuttal.
\end{ack}

\bibliographystyle{unsrtnat} %
\bibliography{reference.bib}

\newpage
\appendix

\section{Appendix}

\noindent\rule{\textwidth}{0.5pt}
\input{subtex/appendix.toc}
\noindent\rule{\textwidth}{0.5pt}

\subsection{Summary of Abbreviation and Notation} \label{sec:a1}

\begin{figure}[H]
\vskip -0.1in
\begin{minipage}[t]{0.55\textwidth}
    \centering
    \captionsetup{type=table}
    \captionsetup{justification=centering}
    \caption{
        Abbreviation.
    }
    \label{table:abbrev}
    \setlength\tabcolsep{4pt}
    \centering
    \begin{tabular}{ll}
        \toprule
        MFGs  & Mean-Field Games \\[3pt]
        SB & \SB \\[3pt]
        DeepRL & Deep Reinforcement Learning \\[3pt]
        PDEs & Partial Differential Equations \\[3pt]
        HJB & Hamilton-Jacobi-Bellman \\[3pt]
        FP &  Fokker-Plank \\[3pt]
        SDEs & Stochastic Differential Equations\\[3pt]
        FBSDEs & Forward-Backward SDEs \\[3pt]
        IPF & Iterative Proportional Fitting  \\[3pt]
        MF interaction & Mean-field interaction \\[3pt]
        nonlinear FK & nonlinear Feynman-Kac \\[3pt]
        TD & Temporal Difference \\[3pt]
        \bottomrule
    \end{tabular}
\end{minipage}
\hfill
\begin{minipage}[t]{0.42\textwidth}
    \centering
    \captionsetup{type=table}
    \captionsetup{justification=centering}
    \caption{
        Notation.
    }
    \label{table:notation}
    \setlength\tabcolsep{4pt}
    \centering
    \begin{tabular}{ll}
        \toprule
        $t$  & time coordinate \\[3pt]
        $s$  & reversed time coordinate \\[3pt]
        $u(t,x)$ & value function \\[3pt]
        $\rho(t,x)$ & marginal distribution \\[3pt]
        $\rho_0,\target$ & initial/target distributions \\[3pt]
        $H$ & Hamiltonian function \\[3pt]
        $F$ & MF interaction function \\[3pt]
        $f$ & MF base drift \\[3pt]
        $\sigma$ & diffusion scaler \\[3pt]
        $(\Psi, \Psihat)$ & solution to SB PDEs \\[3pt]
        $(Y, Z)$ & nonlinear FK of $\Psi$ \\[3pt]
        $(\Yhat, \Zhat)$ & nonlinear FK of $\Psihat$  \\[3pt]
        $\TD_s$ & TD target for $Y_s$ \\[3pt]
        $\TDhat_t$ & TD target for $\Yhat_t$ \\[3pt]
        $\theta$ & Parameter of $Y$ (and $Z$) \\[3pt]
        $\phi$ & Parameter of $\Yhat$ (and $\Zhat$) \\[3pt]
        \bottomrule
    \end{tabular}
\end{minipage}
\end{figure}

\subsection{Review of Nonlinear FK Lemma and SB-FBSDE} \label{sec:a2}

\begin{lemma}[Nonlinear Feynman-Kac Lemma \citep{yong1999stochastic,kobylanski2000backward,exarchos2018stochastic}]\label{lemma:non-fc}%
    Let $v\equiv v(x,t)$ be a function that is twice continuously differentiable in $x \in \sR^d$ and once differentiable in $t \in [0,T]$, \ie $v \in C^{2,1}(\sR^d,[0,T])$.
    Consider the following second-order parabolic PDE,
    \begin{align}
      \fracpartial{v}{t} +
      \frac{1}{2}\Tr(\nabla^2 v~G(x,t)G(x,t)^\T) + \nabla v^\T f(x,t) + h(x,v,G(x,t)^\T \nabla v, t)  = 0,
      \text{ }\text{ } v(T,x) = \varphi(x),
      \label{eq:nfk-pde}
    \end{align}
    where $\nabla^2$ denotes the Hessian operator w.r.t. $x$ and
    the functions $f$, $G$, $h$, and ${\varphi}$ satisfy proper regularity conditions.
    Specifically,
    \textit{(i)} $f$, $G$, $h$, and ${\varphi}$ are continuous,
    \textit{(ii)} $f(x,t)$ and $G(x,t)$ are uniformly Lipschitz in $x$, and
    \textit{(iii)} $h(x,y,z,t)$ satisfies quadratic growth condition in $z$.
    Then, \eqref{eq:nfk-pde} exists a unique solution $v$ such that
    the following stochastic representation (known as the nonlinear Feynman-Kac transformation) holds:
    \begin{align}
      Y_t = v(X_t,t),
      \qquad\qquad
      Z_t = G(X_t,t)^\T \nabla v(X_t,t),
      \label{eq:nfk}
    \end{align}
    where $(X_t, Y_t, Z_t)$ are the unique adapted solutions to the following FBSDEs:
    \begin{equation}
        \begin{split}
            \rd X_t &=  f(X_t,t) \dt + G(X_t,t) \rd W_t, \quad X_0 = x_0,
            \\
            \rd Y_t &=  - h(X_t, Y_t, Z_t,t) \dt + Z_t^\T \rd W_t, \quad Y_T = \varphi(X_T).
            \label{eq:nfk-fbsde}
        \end{split}
    \end{equation}
  The original deterministic PDE solution $v(x,t)$ can be recovered by taking conditional expectations:
  \begin{align*}
    \E\br{Y_t | X_t = x} = v(x,t),
    \qquad\qquad
    \E\br{Z_t | X_t = x} = G(x,t)^\T \nabla v(x,t).
  \end{align*}
\end{lemma}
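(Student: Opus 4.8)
The plan is to establish the stated equivalence in two directions. First, I would show that any classical solution $v \in C^{2,1}$ of the PDE \eqref{eq:nfk-pde} generates, through the transformation \eqref{eq:nfk}, an adapted solution of the FBSDEs \eqref{eq:nfk-fbsde}; this direction is a direct computation via It\^{o}'s formula. Second, I would argue that the FBSDE system is well-posed, so that the PDE solution can be recovered probabilistically and uniqueness follows. The forward direction is essentially mechanical, whereas the backward direction rests on the existence and uniqueness theory for backward SDEs, which under condition (iii) is the genuinely nontrivial ingredient and the reason for the citations to \citet{yong1999stochastic,kobylanski2000backward}.

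For the forward direction, I would fix $v$ solving \eqref{eq:nfk-pde}, let $X_t$ solve the forward SDE $\rd X_t = f \dt + G \rd W_t$, and apply It\^{o}'s formula to $Y_t := v(X_t,t)$, obtaining
\[
  \rd Y_t = \pr{\fracpartial{v}{t} + \nabla v^\T f + \tfrac{1}{2}\Tr(\nabla^2 v\, G G^\T)} \dt + (G^\T \nabla v)^\T \rd W_t.
\]
Defining $Z_t := G(X_t,t)^\T \nabla v(X_t,t)$ identifies the martingale integrand as $Z_t^\T \rd W_t$, and substituting the PDE \eqref{eq:nfk-pde} for the bracketed drift replaces it with $-h(X_t, v, G^\T \nabla v, t) = -h(X_t, Y_t, Z_t, t)$. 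Together with the terminal identity $Y_T = v(X_T, T) = \varphi(X_T)$, this shows $(X_t, Y_t, Z_t)$ solves \eqref{eq:nfk-fbsde}, establishing the representation \eqref{eq:nfk}.

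For the backward direction and the recovery formulas, I would exploit that the FBSDE is decoupled: since the forward equation does not involve $(Y,Z)$, one solves it first and then reads the backward equation as a BSDE with terminal data $\varphi(X_T)$ and generator $h$. Under (i)--(iii) this BSDE admits a unique square-integrable adapted pair $(Y,Z)$. The Markov property of $X$ then forces $Y_t$ to be a deterministic function of $X_t$; defining $v(x,t)$ as the value of the BSDE initialized at $(t,x)$ yields $\E\br{Y_t \mid X_t = x} = v(x,t)$ and $\E\br{Z_t \mid X_t = x} = G^\T \nabla v$, and a standard argument (the nonlinear Feynman--Kac / four-step scheme) identifies this $v$ with the PDE solution, delivering uniqueness.

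The hard part is the backward SDE theory under condition (iii). Because $h$ is only assumed to satisfy a quadratic growth condition in $z$, the classical Lipschitz BSDE theory of Pardoux--Peng does not apply directly, and one must instead invoke the quadratic-growth results of \citet{kobylanski2000backward}. The key technical obstacles there are obtaining a priori $L^\infty$ and BMO estimates on $(Y,Z)$, linearizing the quadratic nonlinearity through an exponential (Cole--Hopf-type) transform, and proving a comparison principle to pin down uniqueness; these steps typically require bounded terminal data and careful monotone approximation. I would treat this quadratic-BSDE wellposedness as the citation-backed core and concentrate the self-contained verification on the It\^{o} computation and the Markovian recovery.
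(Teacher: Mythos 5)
The paper offers no proof of this lemma: it is imported verbatim as background from the cited references \citep{yong1999stochastic,kobylanski2000backward,exarchos2018stochastic}, so there is no in-paper argument to compare against. Your sketch is the correct standard one and matches what those sources do --- the forward direction via It\^{o}'s formula applied to $v(X_t,t)$ followed by substitution of the PDE (which is precisely the computational device the paper itself deploys repeatedly in its proof of Theorem~\ref{thm:mfg-fbsde}), and the converse via wellposedness of the decoupled Markovian BSDE, where you rightly identify Kobylanski's quadratic-growth theory (with its $L^\infty$/BMO a priori estimates, exponential transform, and comparison principle, under bounded terminal data) as the genuinely nontrivial, citation-backed ingredient. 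One small quibble: the ``four-step scheme'' of Ma--Protter--Yong is designed for \emph{coupled} FBSDEs; in the decoupled setting here the identification of $v(x,t):=Y_t^{t,x}$ with the PDE solution is more naturally the Pardoux--Peng argument (viscosity solution plus uniqueness, or direct verification against the classical solution), a distinction consistent with the paper's own follow-up remark that the lemma extends to viscosity solutions when no classical solution exists.
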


Lemma \ref{lemma:non-fc} establishes an intriguing connection between
a certain class of (nonlinear) PDEs in \eqref{eq:nfk-pde} and FBSDEs \eqref{eq:nfk-fbsde} via the
nonlinear FK transformation \eqref{eq:nfk}.
In this work, we adopt a simpler diffusion $G(x,t) := \sigma$ as a time-invariant scalar
but note that our derivation can be extended to more general cases straightforwardly.

\textbf{Viscosity solution. $\quad$}
Lemma \ref{lemma:non-fc} can be extended to viscosity solutions when the classical solution does not exist.
In which case, we will have $v(x, t) = \lim_{\epsilon\rightarrow\infty} v^{\epsilon}(x, t)$ converge uniformly in $(x, t)$ over a compact set, where $v^{\epsilon}(x, t)$ is the classical solution to \eqref{eq:nfk-pde} with $(f_{\epsilon},G_{\epsilon},h_{\epsilon},\varphi_{\epsilon})$ converge uniformly toward $(f,G,h,\varphi)$ over the compact set; see \citep{pardoux1992backward,exarchos2018stochastic,negyesi2021one} for a complete discussion.

\textbf{SB-FBSDE \citep{chen2021likelihood}. $\quad$}
SB-FBSDE is a new class of generative models that, inspiring by the recent advance of understanding deep learning through the optimal control perspective \citep{li2018optimal,liu2021differential,liu2021second},
adopts Lemma \ref{lemma:non-fc} to generalize the score-based diffusion models.
Since the PDEs $(\fracpartial{\Psi}{t}, \fracpartial{\Psihat}{t})$ appearing in the vanilla SB \eqref{eq:sb-pde} are both of the parabolic form \eqref{eq:nfk-pde},
one can apply Lemma \ref{lemma:non-fc} and derive the corresponding nonlinear generators $h$.
This, as shown in SB-FBSDE \citep{chen2021likelihood}, leads to the following FBSDEs:
\begin{subequations}
    \label{eq:prior}
    \begin{empheq}[left={\empheqlbrace}]{align}
        \rd X_t &= \pr{f_t + \sigma Z_t} \dt + \sigma \rd W_t \label{eq:fsde-old} \\
        \rd Y_t &= {\frac{1}{2} \norm{Z_t}^2 } \dt + Z_t^\T \rd W_t \label{eq:bsde-old}
        \\
        \rd \Yhat_t &= \pr{\frac{1}{2} \norm{\Zhat_t}^2 +
                        \nabla \cdot (\sigma \Zhat_t -f_t) + \Zhat_t^\T Z_t
                    } \dt + \Zhat_t^\T \rd W_t \label{eq:bsde2-old}
    \end{empheq}
\end{subequations}
Further, the nonlinear FK transformation reads
\begin{equation}
    \begin{alignedat}{2}
      Y_t &= \log \Psi(X_t, t), \qquad
      Z_t &&= \sigma~\nabla \log \Psi(X_t, t), \\
      \Yhat_t &=          \log \Psihat(X_t, t), \qquad
      \Zhat_t &&= \sigma~\nabla \log \Psihat(X_t, t), \nonumber
    \end{alignedat}
\end{equation}
which immediately suggests that
\begin{align}
    \E[Y_t | X_t = x] = \log \Psi(x,t), \qquad
    \E[\Yhat_t | X_t = x] = \log \Psihat(x,t).
    \label{eq:nfk2}
\end{align}
It can be readily seen that \eqref{eq:prior} is a special case of our Theorem~\ref{thm:mfg-fbsde}
when the MF interaction $F(x,\rho)$, which plays a crucial role in MFGs, vanishes.
Since SB-FBSDE was primarily developed in the context of generative modeling \citep{song2020score},
its training relies on computing the log-likelihood at the boundaries.
These log-likelihoods can be obtained by noticing that
$\log \rho(x,t) = \E[Y_t + \Yhat_t | X_t = x]$, as implied by \eqref{eq:nfk2} and \eqref{eq:hf}.
When $\Zhat_\phi(X_t,t) \approx \Zhat_t$ and $Z_\theta(\Xbar_s,s) \approx Z_s$,
the training objectives of SB-FBSDE can be computed as the parametrized variational lower-bounds:
\begin{subequations}
    \label{eq:nll}
    \begin{align}
        \log \rho_0(\phi; x)   &\ge \Lipf(\phi)
            := \E[Y_t^\theta + \Yhat_t^\phi | X_t = x, t=0]
            = \int_t \E \br{ \rd Y_t^\theta + \rd \Yhat_t^\phi | X_0 = x}, \label{eq:nll0} \\
        \log \rho_T(\theta; x) &\ge \Lipf(\theta)
            := \E[Y_s^\theta + \Yhat_s^\phi | \Xbar_s = x, s=0]
            = \int_s \E \br{ \rd Y_s^\theta + \rd \Yhat_s^\phi | \Xbar_0 = x}. \label{eq:nllT}
    \end{align}
\end{subequations}
Invoking \eqref{eq:prior} to expand the \textit{r.h.s.} of \eqref{eq:nll} leads to the expression in \eqref{eq:L-ipf}:
    \begin{align*}
        \Lipf(\theta) &= \int_0^T \E_{\text{\eqref{eq:rsde}}}\br{
            \frac{1}{2} \norm{Z_\theta(\Xbar_s, s)}_2^2 +  Z_\theta(\Xbar_s, s)^\T \Zhat_\phi(\Xbar_s, s) + \nabla\cdot (\sigma Z_\theta(\Xbar_s, s) {+} f)
        } \ds, \\
        \Lipf(\phi) &= \int_0^T \E_{\text{\eqref{eq:sde}}}\br{
            \frac{1}{2} \norm{\Zhat_\phi(X_t, t)}_2^2 + \Zhat_\phi(X_t, t)^\T Z_\theta(X_t,t) + \nabla\cdot(\sigma\Zhat_\phi(X_t, t) {-} f)
        } \dt.
    \end{align*}
Since \eqref{eq:nll} concern only the integration over the expectations, \ie $\int \E [\rd Y + \rd \Yhat]$,
the solutions $(Y_t, \Yhat_t)$ to the SDEs (\ref{eq:bsde-old}, \ref{eq:bsde2-old}) were \textit{never} computed explicitly in SB-FBSDE,
This is in contrast to our DeepGSB, which, crucially, requires computing $(Y_t, \Yhat_t)$ explicitly and regress their values with TD objectives, so that the stochastic dynamics of $\rd Y$ and $\rd\Yhat$ are respectively respected.

\def\expu{{ \exp\left( -u \right) }}
\def\expv{{ \exp\left( u \right) }}
\def\Du{{ \nabla u }}
\def\Dm{{ \nabla \rho }}

\subsection{Proofs in Main Paper} \label{sec:a3}

Throughout this section, we will denote the parameterized forward and backward SDEs by
\begin{subequations}
    \label{eqq:25}
    \begin{align}
    \rd X^\theta_t &= \pr{f_t + \sigma Z_\theta(X^\theta_t,t) } \dt + \sigma \rd W_t, \label{eqq:j} \\
    \rd \Xbar^\phi_s &= \pr{-f_s + \sigma \Zhat_\phi(\Xbar^\phi_s,t) } \ds + \sigma \rd W_s, \label{eqq:k}
    \end{align}
\end{subequations}
and denote their time-marginal densities respectively as $q^\theta$ and $q^\phi$.

\subsubsection{Preliminary} \label{sec:a3.1}

We first restate some useful lemmas that will appear in the proceeding proofs.

\begin{lemma}[It{\^o} formula \citep{ito1951stochastic}] \label{lemma:ito}
    Let $X_t$ be the solution to the It\^o SDE:
    \begin{align*}
    \rd X_t = f(X_t, t) \dt + \sigma(X_t, t) \rd W_t.
    \end{align*}
    Then, the stochastic process $v(X_t,t)$, where $v \in C^{2,1}(\sR^d,[0,T])$, is also an It{\^o} process satisfying
    \begin{align*}
    \rd v(X_t,t) =
        \fracpartial{v(X_t,t)}{t}\dt
        &+ \br{ \nabla v(X_t, t)^\T f + \frac{1}{2}\Tr\br{\sigma^\T \nabla^2v(X_t, t) \sigma} } \dt
        + \br{ \nabla v(X_t, t)^\T \sigma } \rd W_t.
        \numberthis \label{eq:ito}
    \end{align*}
\end{lemma}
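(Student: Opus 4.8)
The plan is to prove the It\^o formula along the classical route: a second-order Taylor expansion of $v$ along a shrinking partition of $[0,t]$, followed by a passage to the limit that exploits the quadratic variation of Brownian motion. First I would fix a partition $0 = t_0 < t_1 < \dots < t_n = t$ with mesh $\max_k \Delta t_k \to 0$ and write the telescoping identity $v(X_t,t) - v(X_0,0) = \sum_{k} \br{ v(X_{t_{k+1}}, t_{k+1}) - v(X_{t_k}, t_k) }$. Using $v \in C^{2,1}$, I would Taylor expand each summand to first order in time and second order in space about $(X_{t_k}, t_k)$, writing the state increment as $\Delta X_k := X_{t_{k+1}} - X_{t_k} \approx f \Delta t_k + \sigma \Delta W_k$ with $\Delta W_k := W_{t_{k+1}} - W_{t_k}$.

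Next I would collect the first-order contributions. The time-derivative term sums to $\int_0^t \fracpartial{v}{s} \ds$, while the spatial gradient term $\nabla v^\T \Delta X_k$ splits into a drift piece accumulating to $\int_0^t \nabla v^\T f \ds$ and a martingale piece accumulating to the It\^o integral $\int_0^t \nabla v^\T \sigma \, \rd W_s$. These directly reproduce the first-order drift term and the stochastic term in \eqref{eq:ito}.

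The crux is the second-order spatial term $\frac{1}{2} \Delta X_k^\T \nabla^2 v \, \Delta X_k$. Substituting $\Delta X_k \approx f \Delta t_k + \sigma \Delta W_k$ and expanding, the dominant surviving contribution is $\frac{1}{2} \Delta W_k^\T \sigma^\T \nabla^2 v \, \sigma \Delta W_k$. Here the key point is that, unlike in ordinary calculus, this term does \emph{not} vanish: the quadratic variation of Brownian motion gives $\Delta W_k \Delta W_k^\T \to \Delta t_k \, \mI$ in the appropriate $L^2$ sense, so that $\sum_k \frac{1}{2} \Delta W_k^\T \sigma^\T \nabla^2 v \, \sigma \Delta W_k \to \frac{1}{2} \int_0^t \Tr\pr{ \sigma^\T \nabla^2 v \, \sigma } \ds$, which is exactly the It\^o correction in \eqref{eq:ito}. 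The remaining pieces --- namely $(\Delta t_k)^2$, the cross terms $\Delta t_k \Delta W_k$, and the Taylor remainder --- I would argue are negligible, being of order $o(\Delta t_k)$ when summed over $k$; this uses the (uniform, on compacts) continuity of $\nabla^2 v$ together with the standard moment bounds $\E[\norm{\Delta W_k}^2] = d \, \Delta t_k$ and $\E[\norm{\Delta W_k}^4] = O(\Delta t_k^2)$.

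The main obstacle is making these limits rigorous. I would need to establish the finite-quadratic-variation property $\sum_k \Delta W_k^i \Delta W_k^j \to \delta_{ij} \, t$ in $L^2$, and to show that evaluating the integrands at the mesh points $(X_{t_k}, t_k)$ rather than along the continuous trajectory incurs vanishing error as the mesh shrinks. Both steps rest on the $C^{2,1}$ regularity of $v$ and on the usual growth and Lipschitz conditions on $f$ and $\sigma$, which guarantee that $X_t$ has finite second moments and that $\int_0^t \nabla v^\T \sigma \, \rd W_s$ is well-defined as an $L^2$ martingale. Assembling the surviving drift, It\^o-correction, and martingale terms then yields precisely \eqref{eq:ito}.
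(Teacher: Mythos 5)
The paper does not prove this lemma at all: it is quoted verbatim as a classical preliminary in Appendix~A.3.1, with the proof delegated to the citation of It\^o (1951), so there is no in-paper argument to compare against. Your sketch is the standard textbook proof --- telescoping over a partition, second-order Taylor expansion, identification of the surviving $\frac{1}{2}\Tr\pr{\sigma^\T \nabla^2 v\,\sigma}$ correction via the $L^2$ convergence $\Delta W_k \Delta W_k^\T \to \Delta t_k\,\mI$, and moment estimates showing the $(\Delta t_k)^2$, cross, and remainder terms vanish --- and it is correct in outline. The one refinement needed for full rigor is worth making explicit: since $v \in C^{2,1}$ gives no global bounds on $\nabla v$ or $\nabla^2 v$, your appeal to ``uniform continuity on compacts'' should be implemented by a localization argument, i.e.\ stopping $X_t$ upon exiting a ball of radius $R$ (where the derivatives are bounded and uniformly continuous, so your $L^2$ limits go through), proving the formula for the stopped process, and letting $R \to \infty$ using continuity of paths; this also removes any need for finite second moments of $X_t$, which your sketch invokes but which the dominated/local argument renders unnecessary.
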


\begin{lemma} \label{lemma:sbp}
    The following equality holds at any point $x \in \mathbb{R}^n$ such that $p(x)\neq 0$.
    \begin{align*}
    \frac{1}{p(x)} \Delta p(x) = \norm{\nabla \log p(x)}^2 + \Delta \log p(x)
    \end{align*}
\end{lemma}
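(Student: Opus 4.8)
The plan is to prove this purely by differential calculus, treating it as a pointwise identity valid wherever $p(x) \neq 0$, so that $\log p$ and $p^{-1}$ are well-defined and (assuming $p$ is smooth) differentiable in a neighborhood of $x$. The key observation is that $\Delta \log p$ can be computed directly as the divergence of its own gradient, and that the chain rule relates $\nabla \log p$ to $\nabla p$. So rather than starting from $\Delta p$, I would start from $\Delta \log p$ and show it differs from $p^{-1}\Delta p$ by exactly the square-gradient term.

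First I would apply the chain rule to write $\nabla \log p = \nabla p / p$. Then, since $\Delta \log p = \nabla \cdot (\nabla \log p) = \nabla \cdot (p^{-1}\, \nabla p)$, I would expand this using the product rule for the divergence of a scalar field times a vector field, $\nabla \cdot (g \mathbf{v}) = g\, (\nabla \cdot \mathbf{v}) + \nabla g \cdot \mathbf{v}$, with $g = p^{-1}$ and $\mathbf{v} = \nabla p$. Using $\nabla(p^{-1}) = -p^{-2}\nabla p$, this yields $\Delta \log p = p^{-1}\Delta p - p^{-2}\norm{\nabla p}^2$.

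The final step is to recognize, again by the chain rule identity from the first step, that $p^{-2}\norm{\nabla p}^2 = \norm{\nabla p / p}^2 = \norm{\nabla \log p}^2$, and then to rearrange so that $p^{-1}\Delta p$ is isolated on the left. There is no genuine obstacle here; the only points requiring care are the correct application of the divergence product rule together with the sign coming from $\nabla(p^{-1}) = -p^{-2}\nabla p$, and noting that every manipulation is licensed precisely by the hypothesis $p(x) \neq 0$, which keeps all reciprocals and logarithms finite and differentiable at $x$.
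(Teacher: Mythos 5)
Your proof is correct and is essentially the paper's own argument: both rest on the chain-rule identity $\nabla \log p = \nabla p / p$ combined with the divergence product rule, the only cosmetic difference being that the paper starts from $\frac{1}{p}\Delta p = \frac{1}{p}\nabla\cdot(p\,\nabla\log p)$ while you start from $\Delta\log p = \nabla\cdot(p^{-1}\nabla p)$ and rearrange. Your version is, if anything, slightly more explicit about the sign bookkeeping via $\nabla(p^{-1}) = -p^{-2}\nabla p$, but it is the same computation read in the opposite direction.
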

\begin{proof}
    $ \frac{1}{p(x)} \Delta p(x) = \frac{1}{p(x)} \nabla \cdot \nabla p(x) = \frac{1}{p(x)} \nabla \cdot \pr{ p(x) \nabla \log p(x) }$. Applying chain rule to the divergence yields the desired result.
\end{proof}

\begin{lemma}[\citet{vargas2021machine}, Proposition 1, Sec 6.3.1] \label{lemma:a}
    \begin{align*}
        \rd \log q_t^\phi = \br{ \nabla \cdot \pr{\sigma \Zhat_\phi - f_t}
            + \sigma \pr{Z_\theta + \Zhat_\phi}^\T \nabla \log q_t^\phi
            - \frac{1}{2} \norm{\sigma \nabla \log q_t^\phi}^2
        } \dt + \sigma {\nabla \log q_t^\phi}^\T \rd W_t.
    \end{align*}
\end{lemma}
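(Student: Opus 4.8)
The plan is to derive the identity directly from It\^o's formula (Lemma~\ref{lemma:ito}), substituting for the time-derivative term via the Fokker--Planck equation of the \emph{backward} process, and finally collapsing the second-order terms through Lemma~\ref{lemma:sbp}. The key structural observation that organizes everything is that the two parametrizations enter through two \emph{different} mechanisms: the differential $\rd \log q_t^\phi$ is taken \emph{along the forward trajectory} $X_t^\theta$ of \eqref{eqq:j}, which is where $Z_\theta$ originates, whereas $q_t^\phi$ is the time-marginal density of the \emph{backward} SDE \eqref{eqq:k}, which is where $\Zhat_\phi$ originates. The proof is then just the bookkeeping that fuses the two.

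First I would apply Lemma~\ref{lemma:ito} to the scalar field $v(x,t) := \log q_t^\phi(x)$ with $X_t = X_t^\theta$ obeying \eqref{eqq:j}, whose drift is $f_t + \sigma Z_\theta$ and whose diffusion coefficient is the scalar $\sigma$ (so that $\Tr[\sigma^\T \nabla^2 v\, \sigma] = \sigma^2 \Delta v$). This yields
\[
    \rd \log q_t^\phi = \br{ \fracpartial{\log q_t^\phi}{t} + \pr{f_t + \sigma Z_\theta}^\T \nabla \log q_t^\phi + \tfrac{1}{2}\sigma^2 \Delta \log q_t^\phi } \dt + \sigma {\nabla \log q_t^\phi}^\T \rd W_t .
\]
The martingale part already matches the claimed $\sigma {\nabla \log q_t^\phi}^\T \rd W_t$, so it only remains to rewrite the drift, and in particular to eliminate the unwieldy $\fracpartial{\log q_t^\phi}{t}$ and $\Delta \log q_t^\phi$ terms.

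Second, I would express $\fracpartial{\log q_t^\phi}{t}$ through the Fokker--Planck equation satisfied by the marginal of \eqref{eqq:k}. Writing the density of $\Xbar_s^\phi$ in its own backward time $s$, its forward (Kolmogorov) equation has drift $-f_s + \sigma \Zhat_\phi$; re-expressing it in forward time via $t = T - s$ (which flips the sign of the time derivative and of the drift term) gives $\fracpartial{q_t^\phi}{t} = \nabla \cdot \pr{(\sigma \Zhat_\phi - f_t) q_t^\phi} - \tfrac{1}{2}\sigma^2 \Delta q_t^\phi$. Dividing by $q_t^\phi$ and applying the product rule $\nabla\cdot(v q) = q\,\nabla\cdot v + v^\T \nabla q$ then produces
\[
    \fracpartial{\log q_t^\phi}{t} = \nabla \cdot (\sigma \Zhat_\phi - f_t) + (\sigma \Zhat_\phi - f_t)^\T \nabla \log q_t^\phi - \tfrac{1}{2}\sigma^2 \frac{\Delta q_t^\phi}{q_t^\phi} .
\]
This time-reversal is where I expect the main obstacle to lie: getting the sign of the drift correct when passing from backward time $s$ to forward time $t$ is the single most error-prone step, and it is precisely this reversal that converts the backward drift $-f_s + \sigma\Zhat_\phi$ into the $\sigma\Zhat_\phi - f_t$ that surfaces inside the divergence term of the final identity.

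Third, I would substitute this expression into the It\^o drift and simplify. The two inner-product contributions combine as $(\sigma\Zhat_\phi - f_t)^\T \nabla \log q_t^\phi + (f_t + \sigma Z_\theta)^\T \nabla \log q_t^\phi = \sigma(Z_\theta + \Zhat_\phi)^\T \nabla \log q_t^\phi$, since the $\pm f_t$ cancel. For the second-order terms, invoking Lemma~\ref{lemma:sbp} to write $\frac{\Delta q_t^\phi}{q_t^\phi} = \norm{\nabla \log q_t^\phi}^2 + \Delta \log q_t^\phi$ makes the two Laplacian-of-log terms cancel, leaving $-\tfrac{1}{2}\sigma^2 \norm{\nabla \log q_t^\phi}^2 = -\tfrac{1}{2}\norm{\sigma \nabla \log q_t^\phi}^2$. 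Collecting the divergence term, the combined inner-product term, and this curvature term reproduces the claimed identity exactly. All steps are routine once the time-reversed Fokker--Planck equation is pinned down correctly.
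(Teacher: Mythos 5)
Your proposal is correct and follows essentially the same route as the paper's proof: It\^o's formula (Lemma~\ref{lemma:ito}) applied to $\log q_t^\phi$ along the forward SDE \eqref{eqq:j}, elimination of $\fracpartial{\log q_t^\phi}{t}$ via the time-reversed Fokker--Planck equation of the backward process \eqref{eqq:k} (which the paper cites from Eq.~13.4 of \citet{nelson2020dynamical} rather than re-deriving via $t = T - s$ as you do), and cancellation of the second-order terms through the identity of Lemma~\ref{lemma:sbp}. The only difference is one of explicitness --- the paper compresses your third step into ``substituting the above relation yields the desired results,'' whereas you spell out the $\pm f_t$ cancellation and the Laplacian-of-log bookkeeping --- so there is nothing to flag.
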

\begin{proof}
    Invoking Ito lemma w.r.t. the parameterized forward SDE \eqref{eqq:j},
    \begin{align*}
        \rd \log q_t^\phi &= \br{\fracpartial{\log q_t^\phi}{t} + {\nabla \log q_t^\phi}^\T \pr{f_t + \sigma Z_\theta}  + \frac{\sigma^2}{2} \Delta \log q_t^\phi} \dt + \sigma {\nabla \log q_t^\phi}^\T \rd W_t,
    \end{align*}
    where $\fracpartial{\log q_t^\phi}{t}$ obeys (see Eq 13.4 in \citet{nelson2020dynamical}):
    \begin{align*}
        - \fracpartial{q_t^\phi}{t} &= - \nabla \cdot \pr{ \pr{\sigma \Zhat_\phi - f_t} q_t^\phi } + \frac{\sigma^2}{2} \Delta q_t^\phi \\
        \Rightarrow
        \fracpartial{\log q_t^\phi}{t} &= \nabla \cdot \pr{\sigma \Zhat_\phi - f_t} + \pr{\sigma \Zhat_\phi - f_t}^\T \nabla \log q_t^\phi - \frac{\sigma^2\Delta q_t^\phi}{2 q_t^\phi}.
    \end{align*}
    Substituting the above relation yields the desired results.
\end{proof}

\begin{proposition}[\citet{vargas2021machine}, Proposition 1 in Sec 6.3.1] \label{prop:a}
    \begin{align*}
        \KL(q^\theta||q^\phi) &=
        \int_0^T \E_{q^\theta_t} \br{ \frac{1}{2}\norm{ \Zhat_\phi + Z_\theta}^2 +  \nabla \cdot \pr{\sigma \Zhat_\phi - f_t} } \dt + \E_{q^\theta_0} \br{\log \rho_0} - \E_{q^\theta_T} \br{\log \target }
    \end{align*}
\end{proposition}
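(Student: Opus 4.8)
The plan is to compute the path-measure KL directly via Girsanov's theorem and then use Lemma~\ref{lemma:a} to eliminate the score terms that this produces. First I would realize the backward path measure $q^\phi$ as a genuine forward-in-time It\^o diffusion. Its time marginals $q_t^\phi$ obey the Fokker--Planck equation stated inside the proof of Lemma~\ref{lemma:a} (citing Nelson), namely $-\partial_t q_t^\phi = -\nabla\cdot((\sigma\Zhat_\phi - f_t)q_t^\phi) + \tfrac12\sigma^2\Delta q_t^\phi$. Rearranging this into the standard forward form $\partial_t q_t^\phi = -\nabla\cdot(v_t q_t^\phi) + \tfrac12\sigma^2\Delta q_t^\phi$, using $\sigma^2\Delta q = \sigma^2\nabla\cdot(q\,\nabla\log q)$, lets me read off the forward drift $v_t = f_t - \sigma\Zhat_\phi + \sigma^2\nabla\log q_t^\phi$. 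Since $q^\theta$ and $q^\phi$ share the scalar diffusion $\sigma$ and $q_0^\theta = \rho_0$, Girsanov's theorem gives
\[
\KL(q^\theta\|q^\phi) = \KL(\rho_0\|q_0^\phi) + \frac{1}{2}\int_0^T \E_{q_t^\theta}\norm{Z_\theta + \Zhat_\phi - \sigma\nabla\log q_t^\phi}^2 \dt ,
\]
where the drift difference is $\sigma(Z_\theta + \Zhat_\phi - \sigma\nabla\log q_t^\phi)$.

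The remaining task is to remove the nuisance score $\nabla\log q_t^\phi$. I would expand the square into $\norm{Z_\theta+\Zhat_\phi}^2 - 2\sigma(Z_\theta+\Zhat_\phi)^\T\nabla\log q_t^\phi + \sigma^2\norm{\nabla\log q_t^\phi}^2$ and dispatch the last two terms using Lemma~\ref{lemma:a}. Integrating the expression for $\rd\log q_t^\phi$ over $[0,T]$ along the forward process $X_t^\theta$ and taking $\E_{q^\theta}$ (the $\rd W_t$ integral vanishes as a martingale under the usual growth hypotheses) yields
\[
\E_{q_T^\theta}[\log q_T^\phi] - \E_{q_0^\theta}[\log q_0^\phi] = \int_0^T \E_{q_t^\theta}\br{\nabla\cdot(\sigma\Zhat_\phi - f_t) + \sigma(Z_\theta+\Zhat_\phi)^\T\nabla\log q_t^\phi - \tfrac12\norm{\sigma\nabla\log q_t^\phi}^2} \dt .
\]
Substituting this identity cancels exactly both the cross term and the quadratic score term, leaving $\tfrac12\int\E_{q_t^\theta}\norm{Z_\theta+\Zhat_\phi}^2 \dt + \int\E_{q_t^\theta}\nabla\cdot(\sigma\Zhat_\phi - f_t) \dt$ together with the boundary contributions $\E_{q_0^\theta}[\log q_0^\phi] - \E_{q_T^\theta}[\log q_T^\phi]$.

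Finally I would collapse the boundary terms. The reversed process is pinned at $q_T^\phi = \target$, so $\E_{q_T^\theta}[\log q_T^\phi] = \E_{q_T^\theta}[\log\target]$. For the initial terms, writing $\KL(\rho_0\|q_0^\phi) = \E_{\rho_0}[\log\rho_0] - \E_{\rho_0}[\log q_0^\phi]$ and using $q_0^\theta = \rho_0$ gives $\KL(\rho_0\|q_0^\phi) + \E_{q_0^\theta}[\log q_0^\phi] = \E_{q_0^\theta}[\log\rho_0]$, which is precisely the claimed initial term. Assembling these pieces reproduces the stated formula exactly.

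The main obstacle I anticipate is the Girsanov step: correctly realizing the reversed-time path measure $q^\phi$ as a forward diffusion requires the time-reversal formula for its drift, and it is easy to misplace a sign in the score contribution $\sigma^2\nabla\log q_t^\phi$. Getting this exactly right is what makes the quadratic score terms cancel against Lemma~\ref{lemma:a}; a sign slip would leave a stray $\norm{\nabla\log q_t^\phi}^2$ and break the identity. A secondary, routine point is verifying that the $\rd W_t$ integral coming from Lemma~\ref{lemma:a} is a genuine martingale with vanishing expectation, which follows from the growth and Lipschitz conditions already assumed in Theorem~\ref{thm:mfg-fbsde}.
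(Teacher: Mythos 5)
Your proposal is correct and follows essentially the same route as the paper's proof: realize $q^\phi$ as a forward diffusion with drift $f_t - \sigma\Zhat_\phi + \sigma^2\nabla\log q_t^\phi$, apply Girsanov to get the squared-drift-difference form plus $\KL(\rho_0\|q_0^\phi)$, cancel the score terms via the integrated Lemma~\ref{lemma:a}, and collapse the boundary terms using $q_T^\phi = \target$. The only cosmetic difference is that you derive the time-reversed drift from the Fokker--Planck equation, whereas the paper cites the Anderson time-reversal formula directly; these are equivalent.
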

\begin{proof}
    Recall that the parametrized backward SDE \eqref{eqq:k} can be reversed \citep{anderson1982reverse,song2020score} as
    \begin{align*}
        \rd \Xbar^\phi_t &= \pr{f_t - \sigma \Zhat_\phi(\Xbar^\phi_t,t) + \sigma^2 \nabla \log q^\phi(\Xbar^\phi_t,t) } \dt + \sigma \rd W_t.
    \end{align*}
    Then, we have
    \begin{align*}
        &\quad\quad\KL(q^\theta||q^\phi) \\
        &=
        \int_0^T \E_{q^\theta_t} \br{  \frac{1}{2}\norm{ \Zhat_\phi + Z_\theta - \sigma \nabla \log q_t^\phi}^2 }\dt + \KL(\rho_0||q^\phi_{t=0})\\
        &=
        \int_0^T \E_{q^\theta_t} \br{ \frac{1}{2}\norm{ \Zhat_\phi + Z_\theta}^2  - \sigma (\Zhat_\phi + Z_\theta)^T \nabla \log q_t^\phi + \frac{1}{2}\norm{\sigma\nabla \log q_t^\phi}^2 }\dt + \KL(\rho_0||q^\phi_{t=0}) \\
        \overset{(*)}&{=}~
        \int_0^T \E_{q^\theta_t} \br{ \frac{1}{2}\norm{ \Zhat_\phi + Z_\theta}^2
            + \nabla \cdot \pr{\sigma \Zhat_\phi - f_t}
        }\dt - \E_{q^\theta} \br{ \int_0^T \rd \log q_t^\phi} + \KL(\rho_0||q^\phi_{t=0}) \\
        &=
        \int_0^T \E_{q^\theta_t} \br{ \frac{1}{2}\norm{ \Zhat_\phi + Z_\theta}^2 +  \nabla \cdot \pr{\sigma \Zhat_\phi - f_t} } \dt + \E_{q^\theta_0} \br{\log \rho_0} - \E_{q^\theta_T} \br{\log \target },
    \end{align*}
    where (*) is due to Lemma \ref{lemma:a}.
\end{proof}

\subsubsection{Proof of Lemma~\ref{lemma:kl-sbfbsde}} \label{sec:a3.lemma}
\begin{proof}
    Substituting $\Lipf(\phi)$ into Proposition~\ref{prop:a} and dropping all terms independent of $\phi$ readily yields $\KL(q^\theta || q^\phi) \propto \Lipf(\phi)$. A similar relation can be derived between $\KL(q^\phi || q^\theta)$.
\end{proof}

\textbf{Remark (an alternative simpler proof).}
Suppose $(Z_\theta, q^\theta)$ and $(\Zhat_\phi,q^\phi)$ satisfy proper regularity
such that
$\forall t,s \in [0,T], \quad \exists k>0: q^\theta(x,t) = \calO(\exp^{-\norm{x}_k^2})$, $q^\phi(x,s) = \calO(\exp^{-\norm{x}_k^2})$ as $x \rightarrow \infty$.
Then, an alternative proof using integration by part goes as follows:
Recall that the parametrized forward SDE in \eqref{eqq:j} can be reversed \citep{anderson1982reverse,song2020score} as
\begin{align*}
    \rd X^\theta_s &= \pr{- f_s - \sigma Z_\theta(X^\theta_s,s) + \sigma^2 \nabla \log q^\theta(X^\theta_s,s) } \ds + \sigma \rd W_s.
\end{align*}
Then, the KL divergence can be computed as
\begin{align*}
    &\quad\text{ }\text{ }\KL(q^\theta || q^\phi) \\
    \overset{(*)}&{=}~\E_{q^\theta} \br{\int_0^T \frac{1}{2\sigma^2}\norm{ \sigma \Zhat_\phi + \sigma Z_\theta - \sigma^2 \nabla \log q_s^\theta }^2\ds } + \KL(q^\theta_{s=0}||\target) \numberthis \label{eq:L-ipf22} \\
    &= \int_0^T \E_{q_s^\theta} \br{
        \frac{1}{2}\norm{\Zhat_\phi + Z_\theta}^2 - \sigma (\Zhat_\phi + Z_\theta)^\T \nabla \log q_s^\theta
        + \frac{1}{2}\norm{\sigma \nabla \log q_s^\theta}^2
    } \ds  + \KL(q^\theta_{0}||\target) \\
    &= \int_0^T \E_{q_s^\theta} \br{
        \frac{1}{2}\norm{\Zhat_\phi}^2 + \Zhat_\phi^\T Z_\theta  \markgreen{- \sigma{\Zhat_\phi}^\T \nabla \log q_s^\theta}
    } \ds + \calO(1) \\
    \overset{(**)}&{=}~\int_0^T \E_{q_s^\theta} \br{
        \frac{1}{2}\norm{\Zhat_\phi}^2 + \Zhat_\phi^\T Z_\theta
        \markgreen{+ \sigma\nabla \cdot \Zhat_\phi}
    } \ds + \calO(1),  \\
    &\propto \Lipf(\phi)
\end{align*}
where
(*) is due to the Girsanov’s Theorem \citep{pavliotis2014stochastic} and (**) is due to \markgreen{integration by parts}. $\calO(1)$ collects terms independent of $\phi$.
Notice that the boundary terms vanish due to the additional regularity assumptions on $q^\theta$ and $q^\phi$.
Similar transformations have been adopted in \eg Theorem 1 in \citet{song2021maximum} or Theorem 3 in \citet{huang2021variational}.

\subsubsection{Proof of Theorem~\ref{thm:mfg-fbsde}} \label{sec:a3.thm}
\begin{proof}
    Apply the It\^o formula to $v := \log \Psi(X_t,t)$, where $X_t$ follows \eqref{eq:sde},
    \begin{align*}
        \rd \log \Psi
        = \fracpartial{\log \Psi}{t}\dt
        &+ \br{\nabla \log \Psi^\T (f + \sigma^2 \nabla \log \Psi) + \frac{\sigma^2}{2} \Delta \log \Psi } \dt
        + \sigma\nabla \log \Psi^\T \rd W_t,
    \end{align*}
    and notice that the PDE of $\fracpartial{\log \Psi}{t}$ obeys
    \begin{align*}
    \fracpartial{\log \Psi}{t}
        = \markgreen{\frac{1}{\Psi}} \pr{
            - \nabla \Psi^\T f \markgreen{~-\frac{\sigma^2}{2} \Delta \Psi} + F \Psi
        }
        =  - \nabla \log \Psi^\T f \markgreen{~-\frac{\sigma^2}{2} \norm{\nabla \log \Psi}^2 - \frac{\sigma^2}{2} \Delta\log \Psi} + F.
    \end{align*}
    This yields
    \begin{align}
    \rd \log \Psi
    &= \br{
        \frac{1}{2}\norm{\sigma \nabla \log \Psi}^2 + F} \dt + \sigma \nabla \log \Psi^\T \rd W_t. \label{eq:thm1-Yt}
    \end{align}

    Now, apply the same It\^o formula by instead substituting $v := \log \Psihat(X_t,t)$, where $X_t$ follows \eqref{eq:sde},
    \begin{align*}
        \rd \log \Psihat
        = \fracpartial{\log \Psihat}{t}\dt
        &+ \br{\nabla \log \Psihat^\T (f + \sigma^2 \nabla \log \Psi) + \frac{\sigma^2}{2} \Delta \log \Psihat } \dt
        + \sigma\nabla \log \Psihat^\T \rd W_t,
    \end{align*}
    and notice that the PDE of $\fracpartial{\log \Psihat}{t}$ obeys
    \begin{align*}
        \fracpartial{\log \Psihat}{t}
        &= \markblue{\frac{1}{\Psihat}} \pr{
            - \nabla \cdot (\Psihat f) + \markblue{\frac{\sigma^2}{2} \Delta \Psihat} - F \Psihat
        } \\
        &= - \nabla \log \Psihat^\T f - \nabla \cdot f +
        \markblue{\frac{\sigma^2}{2} \norm{\nabla \log \Psihat}^2 + \frac{\sigma^2}{2} \Delta\log \Psihat}
            - F.
    \end{align*}
    This yields
    \begin{equation}
    \begin{split}
        \rd \log \Psihat
    &= \br{
        - \nabla \cdot f
        + \frac{\sigma^2}{2} \norm{\nabla \log \Psihat}^2 + \sigma^2\nabla \log \Psihat^\T \nabla \log {\Psi} + \sigma^2\Delta\log \Psihat - F
        } \dt + \sigma \nabla \log \Psihat^\T \rd W_t \\
        &= \br{
            \nabla \cdot (\sigma^2 \nabla \log \Psihat - f)
            + \frac{\sigma^2}{2} \norm{\nabla \log \Psihat}^2 + \sigma^2\nabla \log \Psihat^\T \nabla \log {\Psi} - F
        } \dt + \sigma \nabla \log \Psihat^\T \rd W_t.
        \label{eq:thm1-Yhatt}
    \end{split}
    \end{equation}

    Finally, with the nonlinear FK transformation in \eqref{eq:nkc-yz}, \ie
    \begin{equation}
        \begin{alignedat}{2}
        Y_t \equiv Y(X_t, t) &= \log \Psi(X_t, t), \qquad
        Z_t \equiv Z(X_t, t) &&= \sigma~\nabla \log \Psi(X_t, t), \\
        \Yhat_t \equiv \Yhat(X_t, t) &=          \log \Psihat(X_t, t), \qquad
        \Zhat_t \equiv \Zhat(X_t, t) &&= \sigma~\nabla \log \Psihat(X_t, t), \nonumber
        \end{alignedat}
    \end{equation}
    we can rewrite (\ref{eq:sde}, \ref{eq:thm1-Yt}, \ref{eq:thm1-Yhatt}) as the FBSDEs system in \eqref{eq:fbsdet}.
    \begin{subequations}
    \begin{empheq}[box=\widefbox]{align*}
        \rd X_t &= (f_t + \sigma Z_t) \dt + \sigma\rd W_t \\
        \rd Y_t &=  \br{\frac{1}{2} \norm{Z_t}^2 + F_t } \dt + Z_t^\T \rd W_t \\
        \rd \Yhat_t &= \br{
            \frac{1}{2}\norm{\Zhat_t}^2 + \Zhat_t^\T Z_t + \nabla\cdot\left(\sigma\Zhat_t-f_t\right) - F_t
            } + \Zhat^\T \rd W_t
    \end{empheq} %
    \end{subequations}
    where
    \begin{align*}
        f_t := f(X_t, \exp(Y_t+\Yhat_t)), \qquad F_t := F(X_t, \exp(Y_t+\Yhat_t)).
    \end{align*}

    Derivation of the second FBSDEs system in \eqref{eq:fbsdes} follows a similar flow,
    except that we need to rebase the PDEs \eqref{eq:sb2-pde} to the ``\textit{reversed}'' time coordinate $s := T-t$.
    This can be done by reformulating the HJB and FP PDEs in \eqref{eq:mfg2-pde} under the $s$ coordinate,
    then applying the following Hopf-Cole transform:
    \begin{align}
        \Psihat(x,s) := \exp\pr{-u(x,s)}, \quad \Psi(x,s) := \rho(x,s)\exp\pr{u(x,s)}.
    \end{align}
    Notice that we flip the role of $\Psihat(x,s)$ and $\Psi(x,s)$ as the former now relates to the policy
    appearing in \eqref{eq:rsde}.
    Omitting the computation similar to Appendix~\ref{sec:a4.1}, we arrive at the following:
    \begin{align}
        \left\{
        \begin{array}{l}
        \fracpartial{\Psihat(x,s)}{s}
            = \nabla \Psihat^\T f - \frac{1}{2} \sigma^2 \Delta \Psihat + F \Psihat \\[3pt]
        \fracpartial{\Psi(x,s)}{s}
            = \nabla \cdot (\Psi f) + \frac{1}{2} \sigma^2 \Delta \Psi - F \Psi
        \end{array}
        \right.\text{s.t.}
        \begin{array}{l}
            \Psihat(\cdot,0) \Psi(\cdot,0) = \target \\[3pt]
            \Psihat(\cdot,T) \Psi(\cdot,T) = \rho_0
        \end{array}.  \label{eq:sb2-pde-s}
    \end{align}

    Apply the It\^o formula to $v := \log \Psi(\Xbar_s,s)$, where $\Xbar_s$ evolves along the reversed SDE \eqref{eq:rsde}.
    \begin{align*}
        \rd \log \Psi
        = \fracpartial{\log \Psi}{s}\ds
        &+ \br{\nabla \log \Psi^\T (- f + \sigma^2 \nabla \log \Psihat) + \frac{\sigma^2}{2} \Delta \log \Psi } \ds
        + \sigma\nabla \log \Psi^\T \rd W_s,
    \end{align*}
    and notice that the PDE of $\fracpartial{\log \Psi}{s}$ now obeys
    \begin{align*}
        \fracpartial{\log \Psi}{s}
        &= \markblue{\frac{1}{\Psi}} \pr{
            \nabla \cdot (\Psi f) + \markblue{ \frac{\sigma^2}{2} \Delta \Psi} - F \Psi
        } \\
        &= \nabla \log \Psi^\T f + \nabla \cdot f +
        \markblue{\frac{\sigma^2}{2} \norm{\nabla \log \Psi}^2 + \frac{\sigma^2}{2} \Delta\log \Psi}
            - F.
    \end{align*}
    This yields
    \begin{equation}
    \begin{split}
        \rd \log \Psi
    &= \br{
        \nabla \cdot f
        + \frac{\sigma^2}{2} \norm{\nabla \log \Psi}^2 + \sigma^2\nabla \log \Psi^\T \nabla \log {\Psihat} + \sigma^2\Delta\log \Psi - F
        } \ds + \sigma \nabla \log \Psi^\T \rd W_s \\
        &= \br{
            \nabla \cdot (f + \sigma^2 \nabla \log \Psi)
            + \frac{\sigma^2}{2} \norm{\nabla \log \Psi}^2 + \sigma^2\nabla \log \Psi^\T \nabla \log {\Psihat} - F
        } \ds + \sigma \nabla \log \Psi^\T \rd W_s.
        \label{eq:thm1-Ys}
    \end{split}
    \end{equation}

    Similarly, apply the It\^o formula to $v := \log \Psihat(\Xbar_s,s)$, where $\Xbar_s$ follows the same reversed SDE \eqref{eq:rsde}.
    \begin{align*}
        \rd \log \Psihat
        = \fracpartial{\log \Psihat}{s}\ds
        &+ \br{\nabla \log \Psihat^\T (- f + \sigma^2 \nabla \log \Psihat) + \frac{\sigma^2}{2} \Delta \log \Psihat } \ds
        + \sigma\nabla \log \Psihat^\T \rd W_s,
    \end{align*}
    and notice that the PDE of $\fracpartial{\log \Psihat}{s}$ obeys
    \begin{align*}
        \fracpartial{\log \Psihat}{s}
        = \markgreen{\frac{1}{\Psihat}} \pr{
            \nabla \Psihat^\T f \markgreen{~- \frac{\sigma^2}{2} \Delta \Psihat} + F \Psihat
        }
        =  \nabla \log \Psihat^\T f \markgreen{~- \frac{\sigma^2}{2} \norm{\nabla \log \Psihat}^2 - \frac{\sigma^2}{2} \Delta\log \Psihat} + F.
    \end{align*}
    This yields
    \begin{align}
        \rd \log \Psihat
        &= \br{
        \frac{1}{2}\norm{\sigma \nabla \log \Psihat}^2 + F} \ds + \sigma \nabla \log \Psihat^\T \rd W_s. \label{eq:thm1-Yhats}
    \end{align}

    Finally, with a nonlinear FK transformation similar to \eqref{eq:nkc-yz},
    \begin{equation}
        \begin{alignedat}{2}
        Y_s \equiv Y(\Xbar_s, s) &= \log \Psi(\Xbar_s, s), \qquad
        Z_s \equiv Z(\Xbar_s, s) &&= \sigma~\nabla \log \Psi(\Xbar_s, s), \\
        \Yhat_s \equiv \Yhat(\Xbar_s, s) &=          \log \Psihat(\Xbar_s, s), \qquad
        \Zhat_s \equiv \Zhat(\Xbar_s, s) &&= \sigma~\nabla \log \Psihat(\Xbar_s, s),
        \end{alignedat}
    \end{equation}
    we can rewrite (\ref{eq:rsde}, \ref{eq:thm1-Ys}, \ref{eq:thm1-Yhats}) as the second FBSDEs system in \eqref{eq:fbsdes}.
    \begin{subequations}
    \begin{empheq}[box=\widefbox]{align*}
        \rd \Xbar_s &= \pr{-f_s + \sigma \Zhat_s} \ds + \sigma \rd W_s \\
        \rd Y_s &= \pr{\frac{1}{2} \norm{Z_s}^2 +
                        \nabla \cdot (\sigma Z_s +f_s) + Z_s^\T \Zhat_s {- F_s}
                    } \ds + Z_s^\T \rd W_s \\
        \rd \Yhat_s &= \pr{ \frac{1}{2} \norm{\Zhat_s}^2 {+ F_s} } \ds + \Zhat_s^\T \rd W_s
    \end{empheq} %
    \end{subequations}
    where
    \begin{align*}
        f_s := f(\Xbar_s, \exp(Y_s+\Yhat_s)), \qquad F_s := F(\Xbar_s, \exp(Y_s+\Yhat_s)).
    \end{align*}
    We conclude the proof.
\end{proof}

\subsubsection{Proof of Proposition~\ref{prop:td}} \label{sec:a3.prop-td}

\begin{proof}
We will only prove the TD objective \eqref{eq:td2-single} for the time coordinate $t$, as all derivations can be adopted similarly to its reversed coordinate $s := T-t$.

Given a realization of the parametrized SDE \eqref{eq:fsdet} w.r.t. some fixed step size $\delta t$, \ie
\begin{align*}
    X_{t+\delta t}^\theta = X_t^\theta +
    \markcc{\pr{f_t + \sigma Z_\theta(X_t^\theta,t)}\delta t}
    + \markbb{\delta W_t},
    \quad \delta W_t \sim \calN(\mathbf{0},  \delta t \mI),
\end{align*}
we can represent the trajectory compactly by a sequence of tuples $\mX_t^\theta \equiv (X_t^\theta, Z_t^\theta, \delta W_t)$ sampled on some discrete time grids, $ t \in \{0, \delta t, \cdots, T-\delta t, T\}$.
The incremental change of $\Yhat_t$, \ie the \textit{r.h.s.} of \eqref{eq:bsdet2}, can then be computed by
\begin{align*}
    \delta \Yhat_t(\mX_t^\theta) :=
    \markcc{\pr{
                \frac{1}{2} \norm{\Zhat(X_t^\theta,t)}^2 + \nabla \cdot (\sigma \Zhat(X_t^\theta,t) -f_t) + \Zhat(X_t^\theta,t)^\T Z_t^\theta {- F_t}}
            \delta t} + \markbb{\Zhat(X_t^\theta,t)^\T \delta W_t},
\end{align*}
where $\Zhat(\cdot,\cdot)$ is the (parametrized) backward policy and we denote $Z_t^\theta := Z_\theta(X_t^\theta,t)$ for simplicity.
At the equilibrium when the FBSDE system \eqref{eq:fbsdet} is satisfied,
the SDE \eqref{eq:bsdet2} must hold.
This suggests the following equality:
\begin{align}
    \Yhat(X_{t+\delta t}^\theta,{t+\delta t}) =
    \Yhat(X_t^\theta,t) + \delta \Yhat_t(\mX_t^\theta).
    \label{eq:a}
\end{align}
Hence, we can interpret the \textit{r.h.s.} of \eqref{eq:a} as the single-step TD target $\TDhat_{t+\delta t}^\text{single}$, which yields the expression in \eqref{eq:td2-single}.
The multi-step TD target can be constructed accordingly as standard practices \citep{van2016effective,hessel2018rainbow},
and either TD target can be used to construct the TD objective for the parametrized function $\Yhat_\phi \approx \Yhat$, which further yields \eqref{eq:L-td}.
\end{proof}

\subsubsection{Proof of Proposition~\ref{prop:sufficient}} \label{sec:a3.prop-conv}

\begin{proof}
    We first prove the necessity. Suppose the parametrized functions $(Y_\theta, Z_\theta, \Yhat_\phi, \Zhat_\phi)$ satisfy the SDEs in (\ref{eq:fbsdet},\ref{eq:fbsdes}), it can be readily seen that the TD objectives $\Ltd(\phi)$ and $\Ltd(\theta)$ shall both be minimized, as the parametrized functions satisfy (\ref{eq:bsdet2},\ref{eq:bsdes1}).
    Next, notice that \eqref{eq:fbsdet} implies
    \begin{align*}
        Y_T^\theta + \Yhat_T^\phi
        &= \pr{Y_0^\theta + \int_0^T \rd Y^\theta_t} + \pr{\Yhat_0^\phi + \int_0^T \rd \Yhat^\phi_t} \\
        \Rightarrow
        0
        &= \E_{q^\theta} \br{\markaa{\pr{ Y_0^\theta + \Yhat_0^\phi }} + \markcc{\int_0^T \pr{ \rd Y^\theta_t + \rd \Yhat^\phi_t }} - \markbb{\pr{ Y_T^\theta + \Yhat_T^\phi }}} \\
        \overset{(*)}&{=} \markaa{\E_{q_0^\theta}\br{ \log \rho_0 }} + \markcc{\int_0^T \E_{q_t^\theta} \br{
                \frac{1}{2}\norm{Z_t^\theta + \Zhat_t^\phi}^2 + \nabla \cdot \pr{ \sigma \Zhat_t^\phi - f_t }
            } \dt} - \markbb{\E_{q_T^\theta}\br{ \log \target }} \\
        \overset{(**)}&{=}
            \E_{q_0^\theta}\br{ \log \rho_0 } + \markcc{\KL(q^\theta || q^\phi) - \E_{q^\theta} \br{ \log \frac{\rho_0}{\target}}} - \E_{q_T^\theta}\br{ \log \target } \\
        &= \KL(q^\theta || q^\phi),
    \end{align*}
    where (*) is due to (\ref{eq:bsdet1},\ref{eq:bsdet2}) and (**) invokes Proposition~\ref{prop:a}.
    The fact that $\Lipf(\phi) \propto \KL(q^\theta || q^\phi) = 0$ (recall Lemma~\ref{lemma:kl-sbfbsde}) suggests that the objective $\Lipf(\phi)$ is minimized when \eqref{eq:fbsdet} holds.
    Finally, as similar arguments can be adopted to $\Lipf(\theta) \propto \KL(q^\phi || q^\theta) = 0 $ when \eqref{eq:fbsdes} holds, we conclude that all losses are minimized when the parameterized functions satisfy the FBSDE systems (\ref{eq:fbsdet},\ref{eq:fbsdes}).

    We proceed to proving the sufficiency, which is more involved. First, notice that
    \begin{align}
        \Lipf(\phi) \text{ is minimized} \Leftrightarrow \KL(q^\theta || q^\phi) = 0 \Leftrightarrow
        \forall s\in[0,T], \text{ } Z_s^\theta + \Zhat_s^\phi - \sigma {\nabla} \log q^\theta_s = 0, \label{eqq:a} \\
        \Lipf(\theta) \text{ is minimized} \Leftrightarrow \KL(q^\phi || q^\theta) = 0 \Leftrightarrow
        \forall t\in[0,T], \text{ } Z_t^\theta + \Zhat_t^\phi - \sigma {\nabla} \log q^\phi_t = 0, \label{eqq:b}
    \end{align}
    as implied by \eqref{eq:L-ipf22}.
    If $\Ltd(\phi)$ and $\Ltd(\theta)$ are minimized, the following relations must also hold
    \begin{align}
        \rd \Yhat^\phi_t &= \pr{\frac{1}{2} \norm{\Zhat^\phi_t}^2 +
        \nabla \cdot (\sigma \Zhat^\phi_t -f_t) + {Z_t^\theta}^\T \Zhat^\phi_t  - F_t
        } \dt + {\Zhat^\phi_t}{}^\T \rd W_t, \label{eqq:c} \\
        \rd Y^\theta_s &= \pr{\frac{1}{2} \norm{Z^\theta_s}^2 +
        \nabla \cdot (\sigma Z^\theta_s + f_s ) + {Z_s^\theta}^\T \Zhat^\phi_s  - F_s
        } \dt + {Z^\theta_s}{}^\T \rd W_s. \label{eqq:d}
    \end{align}

    Now, notice that the Fokker Plank equation of the parametrized forward SDE \eqref{eqq:j} obeys
    \begin{align*}
        \fracpartial{q_t^\theta}{t}
        = - \nabla \cdot \pr{ q_t^\theta \pr{ f_t + \sigma Z^\theta_t } } + \frac{1}{2} \sigma^2 \Delta q_t^\theta,
    \end{align*}
    which implies that (\textit{c.f.} Lemma \ref{lemma:sbp}),
    \begin{align}
        \fracpartial{\log q_t^\theta}{t}
        = - \nabla \cdot \pr{ f_t + \sigma Z^\theta_t } - {\nabla \log q_t^\theta}^\T \pr{ f_t + \sigma Z^\theta_t } + \frac{\sigma^2}{2} \pr{ \Delta \log q_t^\theta + \norm{\nabla \log q_t^\theta}^2 }.
        \label{eqq:42}
    \end{align}
    Invoking Ito lemma yields:
    \begin{align*}
        \rd \log q_t^\theta &=
        \fracpartial{\log q_t^\theta}{t} \dt + \br{
            {\nabla \log q_t^\theta}^\T \pr{ f_t + \sigma Z^\theta_t} + \frac{\sigma^2}{2} \Delta \log q_t^\theta
        } \dt + \sigma {\nabla \log q_t^\theta}^\T \rd W_t \\
        \overset{\text{\eqref{eqq:42}}}&{=}
        \br{
            - \nabla \cdot \pr{ f_t + \sigma Z^\theta_t } + \sigma^2 \Delta \log q_t^\theta
            + \frac{\sigma^2}{2} \norm{\nabla \log q_t^\theta}^2
        } \dt + \sigma {\nabla \log q_t^\theta}^\T \rd W_t \\
        &=
        \br{
            - \nabla \cdot \pr{ f_t + \sigma Z^\theta_t - \markgreen{\sigma}^2 \markgreen{\nabla  \log q_t^\theta} }
            + \frac{1}{2} \norm{\markgreen{\sigma \nabla \log q_t^\theta}}^2
        } \dt + \markgreen{\sigma {\nabla \log q_t^\theta}}^\T \rd W_t \\
        \overset{\markgreen{(*)}}&{=}~
        \br{
            - \nabla \cdot \pr{ f_t + \sigma Z^\theta_t - \sigma \pr{ \markgreen{Z^\theta_t + \Zhat^\phi_t}  } }
            + \frac{1}{2} \norm{\markgreen{Z^\theta_t + \Zhat^\phi_t}}^2
        } \dt + \markgreen{\pr{Z^\theta_t + \Zhat^\phi_t}}^\T \rd W_t \\
        &=
        \br{
            \nabla \cdot \pr{  \sigma \Zhat^\phi_t - f_t }
            + \frac{1}{2} \norm{Z^\theta_t + \Zhat^\phi_t}^2
        } \dt + \pr{Z^\theta_t + \Zhat^\phi_t}^\T \rd W_t, \numberthis \label{eqq:e}
    \end{align*}
    where \markgreen{(*)} is due to \eqref{eqq:a}.
    Subtracting \eqref{eqq:c} from \eqref{eqq:e} yields
    \begin{align}
        \rd \log q_t^\theta - \rd \Yhat^\phi_t &=
        \pr{\frac{1}{2} \norm{Z_t^\theta}^2 + F_t
                        } \dt + {Z_t^\theta}^\T \rd W_t. \label{eqq:f}
    \end{align}
    Now, using the fact that
    $Z_\theta := \sigma \nabla Y_\theta$ and  $\Zhat_\phi := \sigma \nabla \Yhat_\phi$, we know that
    \begin{align*}
        Z_t^\theta + \Zhat_t^\phi - \sigma \nabla \log q^\theta_t = 0 \Rightarrow
        Y_t^\theta + \Yhat_t^\phi = \log q^\theta_t + c_t,
    \end{align*}
    for some function $c_t \equiv c(t)$.
    Hence, \eqref{eqq:f} becomes
    \begin{align}
        \rd Y^\theta_t - \rd c_t &=
        \pr{\frac{1}{2} \norm{Z_t^\theta}^2 + F_t
                        } \dt + {Z_t^\theta}^\T \rd W_t. \label{eqq:gf}
    \end{align}
    Now we prove that $\forall t \in (0,T), \rd c_t = 0$ by contradiction.
    First, notice that $c_t$ can be derived analytically as
    \begin{align*}
        c_t
        &= Y_t^\theta + \Yhat_t^\phi - \log q^\theta_t \\
        &= \int_0^t \pr{
            \markblue{\rd Y_\tau^\theta} \markgreen{+~ \rd \Yhat_\tau^\phi - \rd \log q_\tau^\theta}
        } \\
        \overset{{(*)}}&{=}~ \int_0^t \pr{
            \markblue{\pr{
                \fracpartial{Y_\tau^\theta}{\tau} + {\nabla Y_\tau^\theta}^\T \pr{f_\tau + \sigma Z^\theta_\tau } + \frac{\sigma^2}{2} \Delta Y_\tau^\theta
            }} \markgreen{ - \pr{\frac{1}{2} \norm{Z_\tau^\theta}^2 + F_\tau}}
        } \rd \tau \\
        &\qquad
        + \int_0^t \pr{
            \markblue{\sigma {\nabla Y_\tau^\theta}^\T \rd W_\tau} - \markgreen{{Z_\tau^\theta}^\T \rd W_\tau}
        }
        \\
        \overset{{(**)}}&{=}~ \int_0^t \pr{
            \fracpartial{Y_\tau^\theta}{t} - \pr{
                - {\nabla Y_\tau^\theta}^\T f_\tau - \frac{1}{2} \norm{\sigma \nabla Y_\tau^\theta}^2
                - \frac{\sigma^2}{2} \Delta Y_\tau^\theta
                + F_\tau
            }
        } \rd \tau, \numberthis \label{eqq:g}
    \end{align*}
    where (*) invokes \markblue{the following Ito lemma} and \markgreen{substitutes \eqref{eqq:f}},  %
    \begin{align*}\markblue{
        \rd Y_\tau^\theta
        = \fracpartial{Y_\tau^\theta}{\tau} \rd \tau  + \br{
            {\nabla Y_\tau^\theta}^\T \pr{f_\tau + \sigma Z^\theta_\tau } + \frac{\sigma^2}{2} \Delta Y_\tau^\theta
        } \dt  + \sigma {\nabla Y_\tau^\theta}^\T \rd W_\tau,}
    \end{align*}
    and (**) substitutes the definition $Z_\tau^\theta := \sigma \nabla Y_\tau^\theta$.
    Equation \eqref{eqq:g} has an intriguing implication, as one can verify that its integrand is the \emph{residual of the parametrized HJB $Y_\theta = - u_\theta \approx -u$} (recall \eqref{eq:mfg2-pde} and \eqref{eq:hf}).
    It is straightforward to see that,
    the residual shall also be preserved after the parametrized HJB is expanded by Ito lemma w.r.t. the backward parametrized SDE \eqref{eqq:k}.
    That is, the following equation similar to \eqref{eqq:gf} must hold for the function $c_s := c(T-t)$:
    \begin{align*}
        \rd Y^\theta_s + \rd c_s &= \pr{\frac{1}{2} \norm{Z^\theta_s}^2 +
            \nabla \cdot (f_s + \sigma Z^\theta_s ) + {Z_s^\theta}^\T \Zhat^\phi_s  - F_s
            } \dt + {Z^\theta_s}{}^\T \rd W_s,
    \end{align*}
    which contradicts \eqref{eqq:d}. Hence, we must have $\rd c_s = \rd c_t = 0$,
    and \eqref{eqq:gf} becomes
    \begin{align}
        \rd Y^\theta_t &=
        \pr{\frac{1}{2} \norm{Z_t^\theta}^2 + F_t
                        } \dt + {Z_t^\theta}^\T \rd W_t. \label{eqq:h}
    \end{align}
    In short, we have shown that, for the parametrized forward \eqref{eqq:j} and backward \eqref{eqq:k} SDEs, the fact that (\ref{eqq:c}, \ref{eqq:d}) hold implies that \eqref{eqq:h} holds, providing $\Lipf$ is minimized. The exact same statement can be repeated to prove that
    \begin{align}
        \rd \Yhat^\phi_s &=
        \pr{\frac{1}{2} \norm{\Zhat_s^\phi}^2 + F_s
                        } \ds + {\Zhat_s^\phi}{}^\T \rd W_s. \label{eqq:i}
    \end{align}
    Therefore, if the combined objectives are minimized, \ie (\ref{eqq:a}, \ref{eqq:b}, \ref{eqq:c}, \ref{eqq:d}) hold, the parametrized functions $(Y_\theta, Z_\theta, \Yhat_\phi, \Zhat_\phi)$ satisfy (\ref{eqq:25}, \ref{eqq:c}, \ref{eqq:h}, \ref{eqq:d}, \ref{eqq:i}), \ie they satisfy the FBSDE systems (\ref{eq:fbsdet},\ref{eq:fbsdes}) in Theorem \ref{thm:mfg-fbsde}.
\end{proof}

\subsection{Additional Derivations \& Remarks in Sec. \ref{sec:3} and \ref{sec:4}} \label{sec:a4}

\subsubsection{Hopf-Cole transform} \label{sec:a4.1}
    Recall the Hopf-Cole transform
    \begin{align*}
        \Psi(x,t) := \exp\pr{-u(x,t)}, \quad \Psihat(x,t) := \rho(x,t)\exp\pr{u(x,t)}.
    \end{align*}
    Standard ordinary calculus yields
    \begin{align}
        \nabla\Psi
        &= -\expu\Du, \quad
        &&\Delta\Psi
        = \expu \left[
                \norm{\Du}^2 - \Delta u
            \right], \label{eq:Dpsi} \\
        \nabla\Psihat
        &= \expv\left( \rho \Du +\Dm \right),
        &&\Delta\Psihat
        = \expv \br{
            \rho\norm{\Du}^2 + 2\Dm^\T\Du + \Delta \rho + \rho \Delta u
        }. \label{eq:Dhpsi}
    \end{align}
    Hence, we have
    \begin{align*}
        \fracpartial{\Psi}{t}
            &= \expu\left(-\fracpartial{u}{t}\right) \\
            \overset{\text{\eqref{eq:mfg2-pde}}}&{=}
            \expu\pr{- \frac{1}{2}\norm{\sigma\Du}^2 + \Du^\T f + \frac{1}{2}\sigma^2 \Delta u + F} \\
            \overset{\text{\eqref{eq:Dpsi}}}&{=}
            - \frac{1}{2}\sigma^2 \Delta \Psi - \nabla\Psi^\T f + F\Psi, \numberthis \\
        \fracpartial{\Psihat}{t}
        &= \expv\pr{ \fracpartial{\rho}{t} + \rho \fracpartial{u}{t} } \\
        \overset{\text{\eqref{eq:mfg2-pde}}}&{=}
            \expv\pr{ \pr{\nabla\cdot(\rho(\sigma^2\Du - f)) + \frac{1}{2}\sigma^2 \Delta \rho} + \rho \pr{\frac{1}{2}\norm{\sigma\Du}^2 - \Du^\T f - \frac{1}{2}\sigma^2 \Delta u - F} } \\
        &= \expv \pr{
            \markgreen{
                \sigma^2 \pr{ \rho \Delta u + \Dm^\T\Du + \frac{1}{2}\Delta \rho + \frac{\rho}{2}\norm{\Du}^2 - \frac{\rho}{2} \Delta u }
            }
            \markblue{-~\nabla\rho^\T f} - \rho \nabla \cdot f \markblue{~-\rho\nabla u^\T f}
            - \rho F
        } \\
        \overset{\text{\eqref{eq:Dhpsi}}}&{=}
            \markgreen{\frac{1}{2}\sigma^2 \Delta \Psihat} \markblue{~-\nabla \Psihat^\T f} - \Psihat \nabla \cdot f - \Psihat F,  \numberthis
    \end{align*}
    which yields \eqref{eq:sb2-pde} by noticing that
    $\nabla \cdot (\Psihat f) = \nabla \Psihat^\T f + \Psihat \nabla \cdot f$.

\subsubsection{Remarks on convergence} \label{sec:a4.conv}

    The alternating optimization scheme proposed in Alg. \eqref{alg:jacobi} can be compactly presented as $\min_\phi \KL(q^\theta || q^\phi) + \E_{q^\theta} [\Ltd(\phi)]$ and $\min_\theta \KL(q^\phi | q^\theta) + \E_{q^\phi} [\Ltd(\theta)]$.
    Despite that the procedure seems to resemble IPF, which optimizes between $\min_\phi \KL(q^\phi | q^\theta)$ and $\min_\theta \KL(q^\theta | q^\phi)$, we stress that they differ from each other in that the the KLs are constructed with different directions.

    In cases where the TD objectives are discarded, prior work \citep{de2021diffusion} has proven that minimizing the forward KLs admit similar convergence to standard IPF (which minimizes the reversed KLs). This is essentially the key to developing scalable methods, since the parameter being optimized (\eg $\theta$ in $\KL(q^\phi|q^\theta)$) in forward KLs differs from the parameter used to sample expectation (\eg $\E_{q^\phi}$). Therefore, the computational graph of the SDEs can be dropped, yielding a computationally much efficient framework. These advantages have been adopted in \citep{de2021diffusion,chen2021likelihood} and also this work for solving higher-dimensional problems.

    However, when we need TD objectives to enforce the MF structure, as appeared in all the MFGs in this work, the combined objective does not correspond to IPF straightforwardly. Despite that the alternating procedure in Alg. \eqref{alg:jacobi} is mainly inspired by prior SB methods \citep{de2021diffusion,chen2021likelihood}, the training process of DeepGSB is perhaps closer to TRPO \citep{schulman2015trust}, which iteratively updates the policy using the off-policy samples generated from the previous stage: $\pi^{(i+1)} = \argmin_\pi  \KL(\pi^{(i)} || \pi) + \E_{\pi^{(i)}} [\mathcal{L}(\pi)]$. TRPO is proven to enjoy monotonic improvement over iterations (\ie local convergence).

\subsubsection{Functional derivative of MF potential functions} \label{sec:a4.2}

\def\dx{{ \rd x }}
\def\dy{{ \rd y }}
\def\calFent{{ \calF_\text{entropy} }}
\def\Fent{{ F_\text{entropy} }}
\def\calFcon{{ \calF_\text{congestion} }}
\def\Fcon{{ F_\text{congestion} }}

Given a functional $\calF:  \calP(\sR^d) \to \sR$ on the space of probability measures,
its functional derivative $F(x,\rho) := \frac{\delta\calF(\rho)}{\delta\rho}(x)$
satisfies the following equation
\begin{align*}
    \lim_{h \to 0} \frac{\calF(\rho + h w) - \calF(\rho)}{h}
    = \int_{\sR^d} F(x, \rho) w(x) \rd x
\end{align*}
for any function $w \in L^2(\sR^d)$.
Hence, the derivative of the entropy MF functional $\calFent := \int_{\sR^d} \rho(x) \log\rho(x) \dx$
can be derived as
\begin{align*}
    &\lim_{h \to 0} \frac{1}{h} \Big( \calFent(\rho + h w) - \calFent(\rho) \Big) \\
    =& \lim_{h \to 0} \frac{1}{h} \Big( \int_{\sR^d} \Big( h w(x) \log\rho(x) + \rho(x) \frac{h w(x)}{\rho(x)} + \calO(h^2) \Big) \dx \Big) \\
    =& \int_{\sR^d} \Big( w(x) \log\rho(x) + w(x) \Big) \dx
    = \int_{\sR^d} \underbrace{\Big( \log\rho(x) + 1 \Big)}_{:= \Fent(x,\rho)}  w(x) \dx. \numberthis
\end{align*}
Similarly, consider the congestion MF functional $\calFcon := \int_{\sR^d} \int_{\sR^d} \frac{1}{\norm{x-y}^2+1} \rho(x) \rho(y) \dx \dy$.
Its derivation can be computed by
\begin{align*}
    &\lim_{h \to 0} \frac{1}{h} \Big( \calFcon(\rho + h w) - \calFcon(\rho) \Big) \\
    = &\lim_{h \to 0} \frac{1}{h} \Big( \int_{\sR^d} \int_{\sR^d} \frac{1}{\norm{x-y}^2+1} \Big( \rho(x) h w(y) + h w(x) \rho(y) + \calO(h^2) \Big) \dx\dy \Big) \\
    =& \int_{\sR^d} \int_{\sR^d} \frac{1}{\norm{x-y}^2+1} \Big( \rho(x) w(y) + w(x) \rho(y) \Big) \dx\dy \\
    =& \int_{\sR^d} \underbrace{\int_{\sR^d} \frac{2}{\norm{x-y}^2+1} \dy }_{:= \Fcon(x,\rho)}  w(x) \dx. \numberthis
\end{align*}
We hence conclude the expressions of $\Fent$ and $\Fcon$ in \eqref{eq:F}.

\subsection{Experiment Details} \label{sec:a5}

\subsubsection{Setup} \label{sec:a5.1}

\paragraph*{Hyperparameters}

Table~\ref{table:2-expand} summarizes the hyperparameters in each MFG, including
the dimension $d$ of the state space,
the diffusion scalar $\sigma$,
the time horizon $T$,
the discretized time step $\delta t$ (and $\delta s$),
the MF base drift $f(x,\rho)$,
the MF interaction $F(x,\rho)$,
and the mean/covariance of the boundary distributions $\rho_0$ and $\target$ (
    note that all MFGs adopt Gaussians as their boundary distributions
).
Note that in the 1000-dimensional opinion MFG,
we multiply the polarized dynamic $\bar{f}_\text{polarize}$ by $6$
to ensure that the high-dimensional dynamics yield polarization within the time horizon.
Meanwhile, a smaller step size $\delta t = 0.006$ is adopted
so that the discretization error from the relatively large drift is mitigated.
As mentioned in Sec.~\ref{sec:4}, we adopt zero and constant base drift $f$ respectively for GMM and V-neck/S-tunnel.

\begin{table}[t]
    \centering
    \captionsetup{type=table}
    \captionsetup{justification=centering}
    \caption{
        Hyperparameters in each MFG.
        Note that $\mathbf{0} \in \sR^d$ denotes zero vector, $\mI \in \sR^{d\times d}$ \\ denotes identity matrix, and $\mathrm{diag}(\vv)\in \sR^{d\times d}$, where $\vv \in \sR^d$, denotes diagonal matrix.
    }
    \label{table:2-expand}
    \vskip 0.05in
    \centering
    \begin{tabular}{rccccc}
        \toprule
                             & \GMM & \Vneck & \Stunnel & \multicolumn{2}{c}{Opinion} \\
        \midrule
        $d$                  & $2$  & $2$    & $2$      & $2$   & $1000$ \\[2pt]
        $\sigma$             & $1$  & $1$    & $1$      & $0.1$ & $0.5$  \\[2pt]
        $T$                  & $1$  & $2$    & $3$      & $3$   & $3$    \\[2pt]
        $\delta t$           & $0.01$ & $0.01$ & $0.01$ & $0.01$ & $0.006$ \\[2pt]
        $f(x,\rho)$          & $[0, 0]^\T$ & $[6, 0]^\T$ & $[6, 0]^\T$ & $\bar{f}_\text{polarize}$ & $6\cdot\bar{f}_\text{polarize}$ \\[2pt]
        {Diffusion steps}      & {$100$}  & {$200$}    & {$300$}      & {$300$}   & {$500$}   \\[2pt]
        {$K$\protect\footnotemark}                  & {$250$}  & {$250$}    & {$500$}      & {$100$}   & {$250$}   \\[2pt]
        {Alternating stages\protect\footnotemark}       & {$40$}   & {$40$}     & {$30$}       & {$40$}    & {$90$}    \\[2pt]
        {Total training steps} & {$20$k}  & {$20$k}    & {$30$k}      & {$8$k}    & {$45$k}   \\[2pt]
        {Mean of $\rho_0$} & {$\mathbf{0}$}  & {$\bvec{-7\\0}$}    & {$\bvec{-11\\-1}$}      & {$\mathbf{0}$}    & {$\mathbf{0}$}   \\[10pt]
        {Mean of $\target$} & {\specialcell[c]{$e^{16\cdot(\frac{\pi}{4})i},$ \\ $i \in \{0,\cdots,7\}$}}  & {$\bvec{7\\0}$}    & {$\bvec{11\\1}$}      & {$\mathbf{0}$}    & {$\mathbf{0}$}   \\[4pt]
        {Covariance of $\rho_0$} & {$\mI$}  & {$0.2\mI$}    & {$0.5\mI$}      & {$\mathrm{diag}(\bvec{0.5\\0.25})$}    & {$\mathrm{diag}(\bvec{4\\0.25\\\vdots\\0.25})$}   \\[4pt]
        {Covariance of $\target$} & {$\mI$}  & {$0.2\mI$}    & {$0.5\mI$}      & {$3\mI$}    & {$3\mI$}   \\[2pt]
        \bottomrule
    \end{tabular}
    \vskip -0.1in
\end{table}
\footnotetext[7]{
    We note that, unlike SB-FBSDE \citep{chen2021likelihood}, the number of training iterations at each stage (\ie the $K$ in Alg. \ref{alg:jacobi}) is kept \emph{fixed} throughout training.
}
\footnotetext{
  Here, we refer \emph{one alternating stage} to a complete cycling through $2K$ training iterations in Alg. \ref{alg:jacobi}.
}

\paragraph*{Training}

All experiments are conducted on 3 TITAN RTXs and 1 TITAN V100, where the V100 is located on the Amazon Web Service (AWS).
We use the multi-step TD targets in \eqref{eq:td-multi} for all experiments
and adopt huber norm for the TD loss in \eqref{eq:L-td}.
As for the FK consistency loss $\Lfk$,
we use $\ell1$ norm for GMM and opinion MFGs, and huber norm for the rest.

\paragraph*{Network architecture}

All networks $(Y_\theta, Z_\theta, \Yhat_\phi, \Zhat_\phi)$ take $(x,t)$ as inputs and follow
\begin{align*}
    \texttt{out} = \texttt{out\_mod(}\texttt{x\_mod(}~x~\texttt{)} + \texttt{t\_mod(timestep\_embedding(}~t~\texttt{)))},
\end{align*}
where \texttt{timestep\_embedding($\cdot$)} is the standard sinusoidal embedding.

For crowd navigation MFGs, these modules consist of
2 to 4 fully-connected layers (\texttt{Linear}) followed by the Sigmoid Linear Unit (\texttt{SiLU}) activation functions \citep{elfwing2018sigmoid}, \ie
\begin{align*}
    \texttt{t\_mod} &= \texttt{Linear} \to \texttt{SiLU} \to \texttt{Linear} \\
    \texttt{x\_mod} &= \texttt{Linear} \to \texttt{SiLU} \to \texttt{Linear} \to \texttt{SiLU} \to \texttt{Linear} \to \texttt{SiLU} \to \texttt{Linear} \\
    \texttt{out\_mod} &= \texttt{Linear} \to \texttt{SiLU} \to \texttt{Linear} \to \texttt{SiLU} \to \texttt{Linear}
\end{align*}
As for 1000-dimensional opinion MFG, we keep the same \texttt{t\_mod} and \texttt{out\_mod} but
adopt residual networks with 5 residual blocks for \texttt{x\_mod}.
For DeepGSB-ac, we set the hidden dimension of \texttt{Linear} to 256 and 128 respectively
for the policy networks $(Z_\theta,\Zhat_\phi)$ and the critic networks $(Y_\theta,\Yhat_\phi)$,
whereas for DeepGSB-c, we set the hidden dimension of \texttt{Linear} to 200 for the critic networks $(Y_\theta,\Yhat_\phi)$.

\paragraph*{Implementation of prior methods \citep{ruthotto2020machine, lin2021alternating, chen2021density}}

All of our experiments are implemented with PyTorch \citep{paszke2017automatic}.
Hence, we re-implement the method in \citet{ruthotto2020machine}
by migrating their Julia codebase\footnote{
    \url{https://github.com/EmoryMLIP/MFGnet.jl}. The repository is licensed under MIT License.
} to PyTorch.
As for \citet{lin2021alternating},
their official PyTorch implementation is publicly available.\footnote{
    \url{https://github.com/atlin23/apac-net}. The repository does not specify licenses.
}
Finally, we implement \citet{chen2021density} by ourselves.
Since prior methods \citep{ruthotto2020machine, lin2021alternating, chen2021density}
were developed for a smaller class of MFGs compared to our DeepGSB (recall Table~\ref{table:1}),
we need to relax the setup of the MFG
in order for them to yield reasonable results in Fig.~\ref{fig:crowd-nav} and \ref{fig:crowd-nav2}.
Specifically, we soften the obstacle costs,
so that \citep{ruthotto2020machine, lin2021alternating}
can differentiate them properly,
and keep the same KL penalty at $ u(x,T) \approx \KL(\rho(x,T)||\target(x))$
as adopted in \citep{ruthotto2020machine, lin2021alternating}.
We stress that neither of the methods \citep{ruthotto2020machine, lin2021alternating} works well with the discontinuous $F_\text{obstacle}$ in \eqref{eq:F}.
Finally,
we discretize the 2-dimensional state space of GMM into a $40\times 40$ grid with $50$ time steps for \citep{chen2021density}.
We note that the complexity of \citep{chen2021density} scales as $\calO(\tilde{T}D^2)$,
where $\tilde{T}$ and $D$ are respectively the number of time and spatial grids,
\ie $\tilde{T} = 50$ and $D = 1600$.

\paragraph*{Evaluation}
We approximate the Wasserstein distance with the Sinkhorn divergence using the \texttt{geomloss} package.\footnote{
    \url{https://github.com/jeanfeydy/geomloss}. The repository is licensed under MIT License.
} The Sinkhorn divergence interpolates
between Wasserstein (\texttt{blur} = 0) and kernel (\texttt{blur} = $\infty$) distance given the hyperparameter \texttt{blur}.
We set \texttt{blur} = 0.05 in Table~\ref{table:3}.

\subsubsection{Additional experiments} \label{sec:a5.2}

\begin{figure}[H]
    \vskip -0.1in
    \centering
    \subfloat{
        \includegraphics[width=0.9\textwidth]{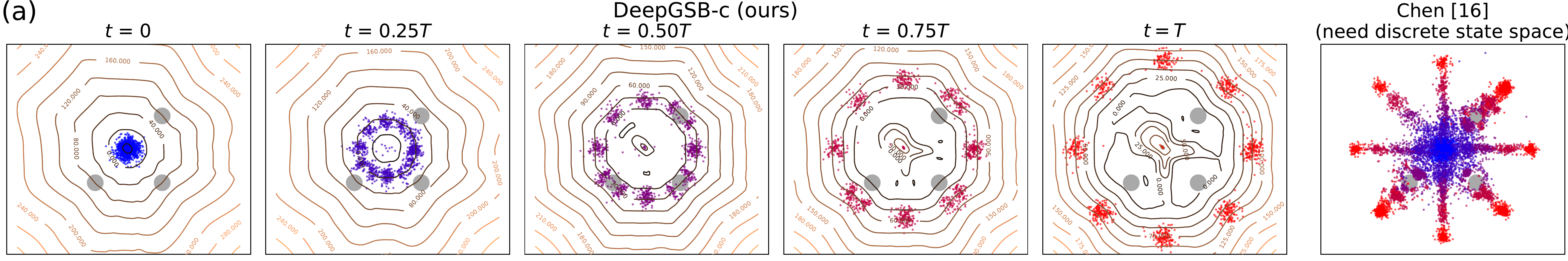}
        \label{fig:gmm2}
    }\\
    \vspace{-7.5pt}
    \subfloat{
        \includegraphics[width=0.9\textwidth]{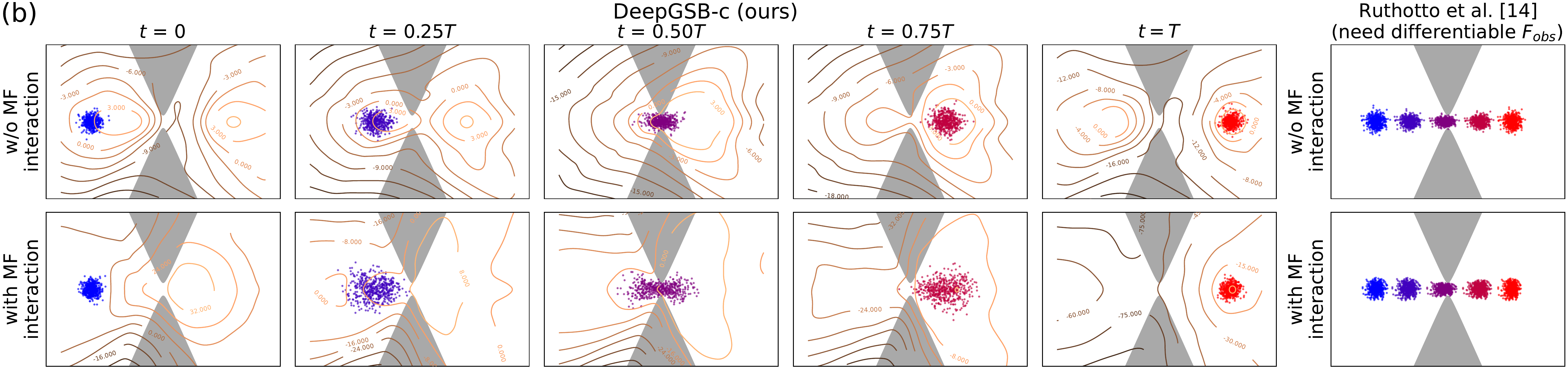}
        \label{fig:bottle2}
    }\\
    \vspace{-7.5pt}
    \subfloat{
        \includegraphics[width=0.9\textwidth]{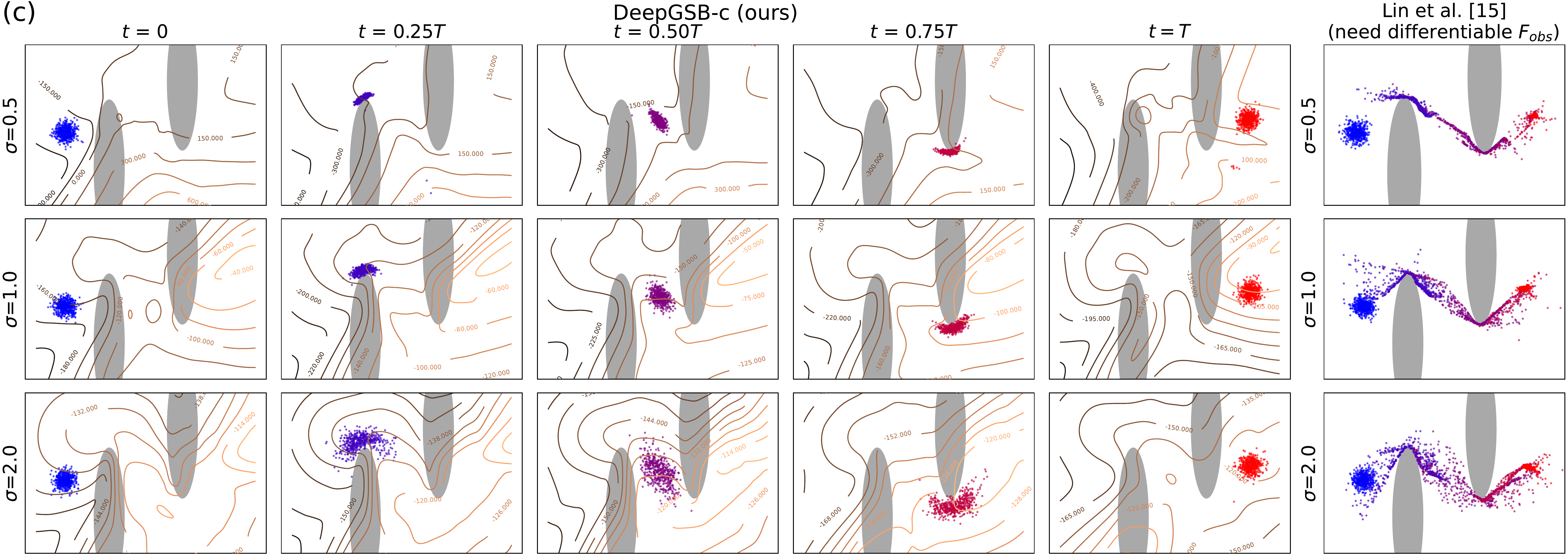}
        \label{fig:elp2}
    }
    \caption{
        Same setup as in Fig.~\ref{fig:crowd-nav} except for \textbf{DeepGSB-c}.
        This figure is best viewed in color.
    }
    \label{fig:crowd-nav2}
    \vskip -0.1in
\end{figure}

Figures \ref{fig:crowd-nav2} and \ref{fig:opinion2} reports the results for \textbf{DeepGSB-c}.
On crowd navigation MFGs,
the population snapshots guided by DeepGSB-c are visually indistinguishable from DeepGSB-ac (see Fig.~\ref{fig:crowd-nav2}~\textit{vs.}~\ref{fig:crowd-nav})
despite the visual difference in their contours.
As for 1000-dimensional opinion MFG, both DeepGSB-c and DeepGSB-ac are able to guild the population opinions toward desired
$\target$ without the entropy interaction $F_\text{entropy}$.
Figure~\ref{fig:opinion2} reports the results of DeepGSB-c in such cases.
We note, however, that when $F_\text{entropy}$ is enabled, DeepGSB-ac typically performs better than DeepGSB-c
in terms of convergence to $\target$ and training stability.

\begin{figure}[H]
    \centering
    \subfloat{
        \includegraphics[height=3.3cm]{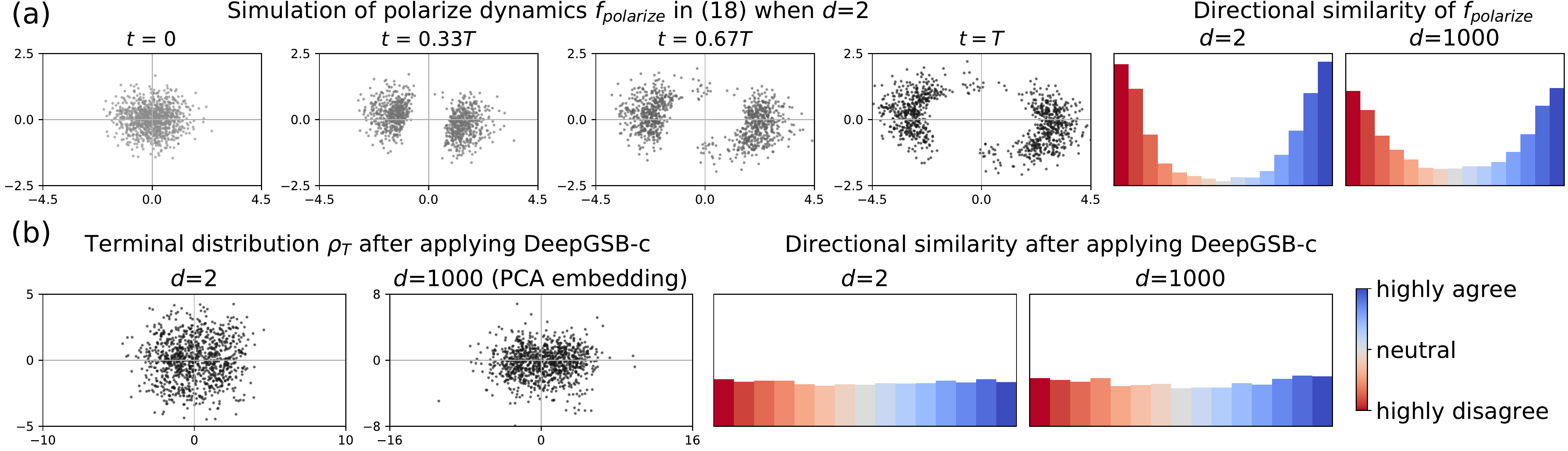}
        \label{fig:opinion2-a}%
    }
    \subfloat{
        \textcolor{white}{\rule{1pt}{1pt}}
        \label{fig:opinion2-b}%
    }
    \vskip -0.05in
    \caption{
        (a) Visualization of polarized dynamics $\bar{f}_\text{polarize}$ in 2- and 1000-dimensional opinion space,
        where the \textit{directional similarity} \citep{schweighofer2020agent} counts the histogram of cosine angle between pairwise opinions
        at the terminal distribution $\rho_T$.
        (b) \textbf{DeepGSB-c} guides $\rho_T$ to approach moderated distributions,
        hence \textit{depolarizes} the opinion dynamics.
        Note that we adopt $F:=0$ for DeepGSB-c.
        We use the first two principal components to visualize $d$=1000.
    }
    \label{fig:opinion2}
\end{figure}

\end{document}